\newcommand{\cinp}{\xrightarrow{p}}
\newcommand\given[1][]{\:#1\vert\:}
\newcommand{\sectionref}[1]{Section \ref{#1}}
\newcommand{\lemmaref}[1]{Lemma \ref{#1}}
\newcommand{\claimref}[1]{Claim \ref{#1}}
\newcommand{\theoremref}[1]{Theorem \ref{#1}}
\newcommand{\propositionref}[1]{Proposition \ref{#1}}
\newcommand{\exampleref}[1]{Example \ref{#1}}
\newcommand{\figureref}[1]{Figure \ref{#1}}
\providecommand{\abs}[1]{\lvert#1\rvert}
\newcommand{\ZC}{\mathcal{Z}_{\mathcal{C}}}
\DeclareMathOperator*{\argmax}{arg\,max}
\DeclareMathOperator*{\KL}{KL}
\DeclareMathOperator*{\DKLUCB}{{\text{\textnormal{{DKLUCB}}}}}
\DeclareMathOperator*{\KLUCB}{{\text{\textnormal{{KL-UCB}}}}}
\DeclareMathOperator*{\UCB}{{\text{\textnormal{{UCB}}}}}
\newcommand{\di}{\mathbin{/}}
\newcommand{\I}{{\mathds{1}}}
\newcommand{\E}{{\mathbb E}}
\newcommand{\F}{\mathcal{F}}
\newcommand{\X}{\mathcal{X}}
\newcommand{\B}{\mathcal{B}}
\newcommand{\N}{{\mathbb N}}
\newcommand{\R}{{\mathbb R}}
\newcommand{\A}{\mathcal{A}}
\newcommand{\K}{\mathcal{K}}
\renewcommand{\P}{\mathcal{P}}
\newcommand{\C}{\mathcal{C}}
\newcommand\numberthis{\addtocounter{equation}{1}\tag{\theequation}}
\numberwithin{equation}{section}
\newcommand{\why}[1]{\text{\big(#1\big)\ \ \ }}
\title{Regret vs. Communication: Distributed Stochastic Multi-Armed Bandits and Beyond}
\author{
    Shuang Liu\\
    Shanghai Jiao Tong University\\
    \texttt{liushuang93006@gmail.com} \\
    \And
    Cheng Chen\\
    Shanghai Jiao Tong University\\
    \texttt{jackchen1990@gmail.com} \\
    \And
    Zhihua Zhang \\
    Shanghai Jiao Tong University \\
    \texttt{zhang-zh@cs.sjtu.edu.cn} \\
}
\newtheorem{theorem}{Theorem}
\newtheorem{example}{Example}
\newtheorem{lemma}[theorem]{Lemma}
\newtheorem{claim}[theorem]{Claim}
\newtheorem{proposition}[theorem]{Proposition}
\newtheorem{definition}[theorem]{Definition}
\begin{document}

\maketitle

\begin{abstract}
    In this paper, we consider the distributed stochastic multi-armed bandit problem, where a global arm set can be accessed by multiple players independently. The players are allowed to exchange their history of observations with each other at specific points in time. We study the relationship between regret and communication. When the time horizon is known, we propose the Over-Exploration strategy, which only requires one-round communication and whose regret does not scale with the number of players. When the time horizon is unknown, we measure the frequency of communication through a new notion called the density of the communication set, and give an exact characterization of the interplay between regret and communication. Specifically, a lower bound  is established and stable strategies that match the lower bound are developed. The results and analyses in this paper are specific but can be translated into more general settings.
\end{abstract}

\section{Introduction}

We consider the distributed stochastic multi-armed bandit (MAB) problem, where a global arm set $\A = [K]$ can be accessed by $M$ players independently. Each arm $a\in\A$ is associated with an unknown yet fixed probability distribution $\nu_a$ that belongs to a known family $\P$. The process proceeds in rounds. At the beginning of each round $t$, each player $p\in[M]$ pulls an arm $A_{p, t}\in\A$ based on some \textit{policy} and independently receives a reward $X_{p, t}\sim\nu_{A_{p, t}}$. Some rounds are \textit{Communication rounds} at the end of which every player knows everything other players know. Note that when $M = 1$, the process reduces to a traditional MAB process.

The goal is to minimize the \textit{regret}. We denote by $\mu_a$ the mean of the distribution $\nu_a$ and let $\mu^* = \max_a\mu_a$. The regret after $T$ rounds is defined by 
\[
    \mathscr{R}_T \coloneqq TM\mu^* - \E\left[\sum_{t = 1}^{T}\sum_{p = 1}^{M}X_{p, t}\right].
\]
We are only interested in \textit{consistent} policies. A policy is said to be consistent if for any $c > 0$, $\mathscr{R}_T = o(T^c)$ always holds. If we further let $\Delta_a = \mu^* - \mu_a$ and $N_T(a)$ be the number of times arm $a$ has been pulled by all the players in the first $T$ rounds, then it suffices to bound $N_T(a)$ for all $a$ such that $\mu_a\neq\mu^*$ since the regret can be written as $\sum_{a\in\A, \mu_a\neq\mu^*}\Delta_a\E\left[N_T(a)\right].$


\subsection{Related Work}

The traditional single-player MAB problem has been studied for a long time. The establishment of the lower bounds can be traced back to \cite{LR, BK}, in which some asymptotically policies based on the notion of \textit{upper confidence bound} (UCB) are also developed. Later contributions mainly focus on finite-time analysis of UCB-like algorithms \cite{ACF, GC, CGMMS, MMS}. Recently, a Bayesian approach called Thompson sampling \cite{T} is also proved to be asymptotically optimal \cite{KKM}.

Although there have been regret analyses of single-player models with \textit{delayed feedback} \cite{JGS}, regret analysis of distributed MAB models remains largely unaddressed in the literature. The model used in \cite{HKKLS} is very similar to ours, but they focused on best arm identification, which is mainly an exploration problem. In \cite{SBHOJK}, a P2P-like gossip based model is proposed. However, their policy is based on the $\epsilon$-GREEDY algorithm \cite{ACF} which itself requires a lower bound on the gap between the best arm and the second-best arm. Furthermore, \cite{SBHOJK} only considered the case that the number of communication rounds grows linearly with $T$. There are also other distributed models in which players \textit{compete} with each other \cite{LZ} or an adversarial setting is considered \cite{AK}.

\section{Oblivious Bandit Policies}
\label{oblivious}

In this section we first propose a general framework for a large number of MAB policies. Then we show how this framework enables us to develop communication strategies independent of the bandit policy used. 
    \begin{definition}[Oblivious Bandit Policy]
        Given $K$ finite sets\footnote{Actually these sets are multisets} of rewards $\X_1, \X_2, \dotsc, \X_K$ generated by $K$ arms, an oblivious bandit policy chooses the next arm based only on these $K$ sets.
    \end{definition}
In order to explain how current bandit policies can be translated into this framework and further adapted into a distributed setting, we require some additional notation. Given a finite set of rewards $\X_a$ generated by arm $a$, the \textit{empirical distribution} with respect to $\X_a$ is defined by
    \[
        \hat{\nu}_{\X_a} \coloneqq \frac{1}{|\X_a|}\sum_{x\in \X_a}\delta(x),
    \]
    where $\delta(\cdot)$ is the Dirac delta function. For every player $p$, every round $t$, and every arm $a$, we denote by $\X_{p, t}(a)$ the set of rewards generated by arm $a$ that are available to player $p$ at the end of round $t$ and let $N_{p, t}(a) = |\X_{p, t}(a)|$. Clearly, if $t$ is a communication round, then $N_t(a) = N_{p,t}(a)$ for every $p$ and $a$. For simplicity, we use $\E[\nu]$ to denote the mean of distribution $\nu$ and denote $\E\left[\hat{\nu}_{\X_{p, t}(a)}\right]$ by $\tilde{\mu}_{p, t}(a)$. We also use $\B(p)$ to denote a Bernoulli distribution with parameter $p$, and $D_{\KL}(\nu_1||\nu_2)$ to denote the Kullback-Leibler (KL) divergence of $\nu_2$ from $\nu_1$. The following are two oblivious bandit policies that have been adapted into a distributed setting (and will be mainly discussed in this paper).

    \textbf{UCB adaptation for [0, 1] bounded rewards.} Let $\F(t) = \ln\left(t\ln^3(t)\right)$. $A_{p, t} = \argmax_{a} B_{p, t}^+(a)$ \[\text{where\ \ }B^+_{p, t}(a) =
        \tilde{\mu}_{p, t - 1}(a) + \sqrt{\frac{\F\left(\sum_{k = 1}^K N_{p, t -1}(k)\right)}{2N_{p, t - 1}(a)}}.
    \]

    \textbf{KL-UCB adaptation for Bernoulli rewards.} Let $\F(t) = \ln\left(t\ln^3(t)\right)$. $A_{p, t} = \argmax_{a} B_{p, t}^+(a)$ \[\text{where\ \ }B^+_{p, t}(a) = \sup\left\{p \in (0, 1): D_{\KL}\left(\hat{\nu}_{\X_{p, t - 1}(a)}||\B(p)\right) \leq \frac{\F\left(\sum_{k = 1}^K N_{p, t - 1}(k)\right) }{N_{p, t - 1}(a)}\right\}.
    \]
    There are many other oblivious bandit policies in the literature such as most UCB-like policies and Thompson sampling. These policies are called \textit{oblivious} because their choice of the next arm depends only on the empirical distribution of the data and the number of data collected. They do not rely on a \textit{timer} or other information such as the player's own previous choices or other players' previous choices. Note that there are non-oblivious policies such as the DMED policy \cite{HT}.

\section{Distributed Bandits: A Paradox}
\label{paradox}
    It is often believed that the more you know, the better you will do. Translating into bandit language, the more you communicate, the lower the regret is. However, the following example shows this is not necessarily true.
    \begin{example}
        \label{example}
        Suppose there are $2$ players, $2$ arms, and a total of $2^{16}$ rounds. Arm $1$ is associated with $\B(0.9)$ and arm $2$ is associated with $\B(0.8)$. Both players use the $\UCB$ adaptation described in \sectionref{oblivious}. Consider the following three communication strategies:
        \textbf{(A)} communicate once when $t = 2^{12}$;          
        \textbf{(B)} communicate three times when $t = 2^4, 2^8, 2^{12}$;       
        \textbf{(C)} communicate $2^{12}$ times when $t = 1, 2, \dotsc, 2^{12}$.
    \end{example}
    \begin{figure}
        \begin{center}
        \begin{tikzpicture}
        \begin{axis}
            [
height=4.0cm,width=10cm,scale only axis,
                xmode=log,
                xlabel={t ($\log$ scale)},
            ylabel={$N_t(2)$},
            xmin=10, xmax=100000,
            ymin=0, ymax=1000,
            xtick={100,1000,10000,100000},
            ytick={100,300,500, 700, 900},
            legend pos=north west,
            ymajorgrids=true,
            grid style=dashed,
            grid=both,
            legend entries = {No Communication, Full Communication, Strategy A, Strategy B, Strategy C}
        ]
         \node[coordinate,pin=above:{$t = 2^4$}]
            at (axis cs:16,50) {};
            \node[coordinate,pin=above:{$t = 2^{8}$}]
            at (axis cs:256,150) {};
            \node[coordinate,pin=above:{$t = 2^{12}$}]
            at (axis cs:4096,550) {};
        \addplot[
            color=magenta,
            mark=square,
            ]
            table{
                16 13.544900
                32 24.271900
                64 42.866200
                128 73.578500
                256 120.812300
                512 188.328800
                1024 274.390100
                2048 376.871400
                4096 490.605100
                8192 609.551900
                16384 729.795700
                32768 848.050800
                65536 961.369500
            };
        \addplot[
            color=teal,
            mark=o,
            ]
            table{
                16 12.643400
                32 21.832000
                64 36.544800
                128 59.060400
                256 90.660600
                512 131.626800
                1024 180.640800
                2048 233.885000
                4096 290.415800
                8192 347.490400
                16384 402.774400
                32768 456.110000
                65536 507.702400
            };
        \addplot[
            color=red,
            mark=triangle,
            ]
            table{
                16 13.559100
                32 24.268800
                64 42.932100
                128 73.678000
                256 120.783200
                512 187.878500
                1024 275.618600
                2048 378.265300
                4096 493.767500
                8192 493.777500
                16384 493.925800
                32768 496.746300
                65536 522.812800
            };
        \addplot[
            color=orange,
            mark=pentagon,
            ]
            table{
                16 13.671200
                32 22.442100
                64 38.407700
                128 67.398700
                256 113.230400
                512 132.182400
                1024 191.893100
                2048 282.991000
                4096 391.137300
                8192 393.183400
                16384 412.659500
                32768 483.830100
                65536 585.32150
            };
        \addplot[
            color=blue,
            mark=diamond,
            ]
            table{
                16 12.586400
                32 21.711600
                64 36.449800
                128 58.900600
                256 90.440400
                512 131.371200
                1024 180.154200
                2048 232.929200
                4096 288.695800
                8192 368.566000
                16384 466.199000
                32768 572.701600
                65536 679.60660
            };
        \end{axis}
        \end{tikzpicture}
    \end{center}
    \caption{Adding communication rounds may increase the regret. Strategy A communicates only one time ($t = 2^{12}$). Strategy B communicates three times ($t = 2^4, 2^8, 2^{12}$). Strategy C communicates over four thousand times ($t = 1, 2, \dotsc, 2^{12}$). Results are based on $10^4$ independent runs.}
    \label{exp1}
    \end{figure}
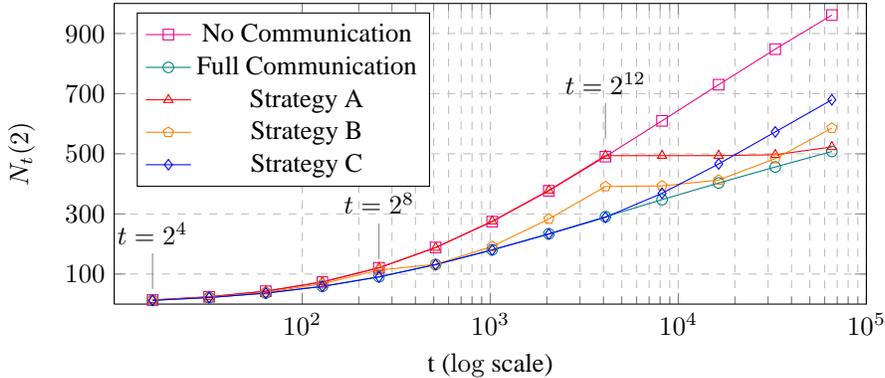
    Intuitively, Strategy A should be the worst and Strategy C should be the best. However, the numerical experiment gives a surprising result. \figureref{exp1} shows the average number of pulls of the suboptimal arm as a function of time (on a logarithmic scale) for Strategies A, B, C.\footnote{In this experiment, we use $\ln(2t)$ to approximate $\ln(\sum_{k = 1}^K N_{p, t - 1}(k)) + 3\ln\left(\ln\left(\sum_{k = 1}^K N_{p, t - 1}(k)\right)\right)$ for a better comparison when $T$ is relatively small.}

    To explain this phenomenon, first recall that almost all the single-player regret analysis \cite{ACF, AJO, CGMMS, GC, KKM, MMS} goes like follows:
    \begin{displayquote}
        Once a suboptimal arm has been pulled more than \ $\xi\ln(T)$ times, it  will \textit{almost} never be pulled any more before round $T$.
    \end{displayquote}
    
    Here $\xi$ is a constant depending on the bandit algorithm\footnote{e.g., $\frac{1}{2\Delta_a^2}$ in UCB and $\frac{1}{D_{\KL}\left(\B(\mu_a)||\B(\mu^*)\right)}$ in KL-UCB.}. Following this argument, we first consider Strategy C. It keeps communicating until $t = 2^{12}$. Then according to the full-communication curve, approximately $200$ additional explorations of the suboptimal arm are needed before $t = 2^{16}$. However, the two players have to do the $200$ explorations \textbf{separately} since they cannot communicate any more. Therefore this suboptimal arm is actually explored $2\cdot 200 = 400$ times from $t = 2^{12}$ to $t = 2^{16}$, resulting in a much higher final regret compared to the full-communication strategy, which only explores the suboptimal arm $200$ times during this time period. More generally, after a communication round, if an additional $\Delta$ of explorations of the suboptimal arm is needed before the next communication round, the actual explorations performed would be $M\cdot\Delta$. That is, due to lack of communication, $(M - 1)\cdot\Delta$ unnecessary explorations are performed, resulting in a larger regret.
    
    On the other hand, Strategy A cleverly makes $\Delta$ very close to $0$, forcing the final regret almost as low as that of the full-communication curve. When Strategy A finished its only communication, each of the players has already collected approximately 500 independent samples of the suboptimal arm, which is an over-exploration when $t = 2^{12}$ since the full-communication curve indicates that when $t = 2^{12}$, only approximately $300$ explorations of the suboptimal arm are needed. However, this amount of exploration happens to be just enough for $t = 2^{16}$ according to the full-communication curve. Therefore, when $t$ goes from $2^{12}$ to $2^{16}$, the suboptimal arm is rarely pulled.

    Finally, consider Strategy B. After each communication, the ``over-exploration'' phenomenon occurs, and the curve acts like that of Strategy A. After some period of time, the amount of explorations goes back to the normal level. Then it will act like Strategy C, resulting in an over-exploration again before the next communication comes. Strategy B performs communication in a relatively stable way. That is, it always keeps $\Delta$ relatively small, making its curve
fluctuates not so dramatically around the full-communication curve. 

Both Strategy A and Strategy B have implied some good ways to develop communication strategies. Strategy A indicates that we can find a time point such that the over-exploration is ``just enough." In \sectionref{one} we give the corresponding theoretical guarantees. However, this strategy requires a known time horizon, and the right chance for communication is sort of unpredictable for small $T$ and large $M$. Fortunately, Strategy B indicates that we can develop anytime strategies that require very few communication rounds with very good performance. The corresponding theoretical guarantees are given in \sectionref{dense}. However, if the communication rounds are too few, there is no way we can use an oblivious bandit policy to achieve a good performance. In \sectionref{sparse}, we introduce a non-oblivious bandit policy called DKLUCB (stands for Distributed KL-UCB) which is asymptotically optimal. Most of the results and analyses in \sectionref{one} and \sectionref{sustainable} suppose the KL-UCB adaptation described in \sectionref{oblivious} is used. Similar results also hold for the UCB adaptation (with a different constant) with only slight modification (even simplification) of our analyses. In particular, DKLUCB can be easily modified to get a UCB version called DUCB, which stands for Distributed UCB\footnote{As explained in \cite{CGMMS}, UCB is actually a simple relaxation of KL-UCB.}.

\section{The Over-Exploration Strategy}
\label{one}

In this section, we suppose that a time horizon $T$ is given. That is, the regret is only evaluated at the end of round $T$. As we have discussed in \sectionref{paradox}, we want to find the right chance for communication such that the resulting over-exploration is ``just enough." The following theorem tells us the right chance is around $T^{\frac{1}{M}}$.

\begin{theorem}
    \label{onethm}
    If the rewards are Bernoulli rewards and the only communication round is round $\lceil T^{\frac{1}{M}}\rceil$, then for every suboptimal arm $a$, 
    \begin{align*}
        \E\left[N_T^{\KLUCB}(a)\right] &\leq \frac{\ln(T)}{D_{\KL}\left(\B(\mu_a)||\B(\mu^*)\right)} + o\left(\ln(T)\right).
    \end{align*}
\end{theorem}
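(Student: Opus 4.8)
The key observation is that after the single communication round at time $t_0 = \lceil T^{1/M}\rceil$, the two phases of the process are essentially decoupled, so I would split $N_T^{\KLUCB}(a)$ as $N_{t_0}(a)$ plus the number of pulls of $a$ by all players during rounds $t_0+1,\dots,T$, and bound these two contributions separately. For the first phase, I would simply use the trivial bound $N_{t_0}(a) \le M t_0 = M\lceil T^{1/M}\rceil$; since $M$ is a constant and $T^{1/M} = o(\ln T)$ is \emph{false} in general — wait, that is the crux, so let me be careful: actually $M t_0 \approx M T^{1/M}$, which is polynomial in $T$, not $o(\ln T)$. So the naive split cannot work, and the real content must be that the over-exploration in phase one is exactly what suppresses phase two, and we must show phase one itself does not pull $a$ too often. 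So instead I would argue: during phase one, \emph{each player independently} runs KL-UCB on its own data for $t_0$ rounds, so by the standard single-player KL-UCB finite-time analysis (applied with horizon $t_0$ and the confidence function $\F(t)=\ln(t\ln^3 t)$), each player pulls arm $a$ at most $\frac{\ln(t_0)}{D_{\KL}(\B(\mu_a)\|\B(\mu^*))} + o(\ln t_0)$ times in expectation. Summing over the $M$ players and using $\ln t_0 = \ln\lceil T^{1/M}\rceil \le \frac{1}{M}\ln T + O(1)$, the total phase-one count is at most $M \cdot \frac{(1/M)\ln T}{D_{\KL}(\B(\mu_a)\|\B(\mu^*))} + o(\ln T) = \frac{\ln T}{D_{\KL}(\B(\mu_a)\|\B(\mu^*))} + o(\ln T)$, which already gives the whole leading term.

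**Phase two.** It then remains to show that during rounds $t_0+1,\dots,T$ the suboptimal arm $a$ is pulled only $o(\ln T)$ additional times in total. After communication at $t_0$, each player has the pooled data set, in particular $N_{t_0}(a)$ samples of arm $a$ with $N_{t_0}(a) \ge$ (number of times \emph{any single} player pulled $a$). The point of choosing $t_0 \approx T^{1/M}$ is that a typical player will have pulled the \emph{optimal} arm roughly $t_0 - O(\ln t_0) \approx T^{1/M}$ times, so after pooling, the total sample count $\sum_k N_{t_0}(k) = M t_0 \approx M T^{1/M}$, and more importantly the confidence radius in the KL-UCB index evaluated at any round $t > t_0$ uses $\F(\sum_k N_{t,p}(k))$. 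I would show that once $N(a)$ exceeds $\frac{\F(MT)}{D_{\KL}(\B(\mu_a)\|\B(\mu^*)) - \varepsilon}$ — which, crucially, is already $\approx \frac{\ln T}{D_{\KL}}$ and is \emph{already met} by the end of phase one in the favorable event — the index $B^+_{p,t}(a)$ stays below $\mu^*$ with high probability for all $t\le T$, so $a$ is not pulled again. The pulls in phase two therefore come only from the low-probability event that either (i) the empirical mean of arm $a$ is anomalously high, or (ii) some player under-explored $a$ in phase one (pulled it fewer than the needed $\approx\frac{\ln T}{D_{\KL}}$ times) \emph{and} the optimal arm's index dips; a union bound over the $\le T$ rounds and the standard KL-UCB deviation lemmas (Chernoff/KL concentration for $\hat\nu_{\X(a)}$ and for $\hat\nu_{\X(a^*)}$) bounds the expected number of such pulls by $o(\ln T)$ — in fact by $O(1)$ or $O(\ln\ln T)$ type terms coming from the $\ln^3$ inside $\F$.

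**Main obstacle.** The delicate step is coupling the ``over-exploration is just enough'' intuition into a rigorous statement: I need that $N_{t_0}(a)$, the pooled count \emph{before} communication, is with high probability at least the threshold $\frac{\F(MT)}{D_{\KL}(\B(\mu_a)\|\B(\mu^*))}(1+o(1))$ needed to freeze arm $a$ for the rest of the horizon. This is not automatic from the single-player \emph{upper} bound on $N_{t_0,p}(a)$; I need a matching high-probability \emph{lower} bound on the \emph{pooled} count, which in turn requires that not too many of the $M$ players simultaneously under-explore $a$ — equivalently, that with high probability at least one player has pulled $a$ at least $\Omega(\ln T)$ times by round $t_0 \approx T^{1/M}$, or that the counts add up across players. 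I would handle this by the standard KL-UCB argument that a suboptimal arm \emph{is} pulled at least its ``fair share'' $\sim \frac{\F(t_0)}{D_{\KL}}$ times (since otherwise its index would exceed $\mu^*$ and it would be selected) — so each player \emph{individually} reaches count $\sim\frac{\F(t_0)}{D_{\KL}} = \frac{1}{M}\cdot\frac{\ln T}{D_{\KL}}(1+o(1))$ with high probability, hence pooling gives $M$ times that, i.e. exactly $\frac{\ln T}{D_{\KL}}(1+o(1))$. Making the $o(\cdot)$ and high-probability bookkeeping match up cleanly — so that the exceptional-event contributions are genuinely $o(\ln T)$ and not $\Theta(\ln T)$ with a bad constant — is where the real care is needed, and is presumably why the theorem is stated with an unspecified $o(\ln T)$ remainder rather than an explicit finite-time bound.
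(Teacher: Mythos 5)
Your proposal is correct and follows essentially the same route as the paper: the leading term comes from $M$ independent single-player analyses over the first $\lceil T^{\frac{1}{M}}\rceil$ rounds, and the ``delicate step'' you isolate --- a high-probability \emph{lower} bound showing the pooled count at the communication round already exceeds $(1-\delta)\ln(T)/D_{\KL}(\B(\mu_a)||\B(\mu^*))$ --- is precisely the paper's \lemmaref{lower1}, proved just as you sketch (each player individually must reach about $\frac{1}{M}\ln(T)/D_{\KL}(\B(\mu_a)||\B(\mu^*))$ pulls of $a$ by round $\lceil T^{\frac{1}{M}}\rceil$, since otherwise $a$'s index would dominate that of the most-pulled arm; then sum over players). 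The one detail you leave implicit is that on the $o(1)$-probability event where this lower bound fails, the post-communication pulls must still be capped almost surely at $O(M\ln T)$ rather than the trivial $O(MT)$ before multiplying by the failure probability; the paper handles exactly this bookkeeping with its random times $\Phi_T$, $\Psi_T$, $\Lambda_{p,T}$.
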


In other words, using the over-exploration strategy, asymptotically the regret does not scale with the number of players. Strangely, to prove an upper bound, we must prove a lower bound first. In fact, the following lemma is critical to our proof.

\begin{lemma}
    \label{lower1}
    If the rewards are Bernoulli rewards and the only communication round is round $\lceil T^{\frac{1}{M}}\rceil$, then for every suboptimal arm $a$ and any $\delta > 0$, 
    \begin{align*}
        \lim_{T\to\infty}\Pr\left(N^{\KLUCB}_{\left\lceil T^{\frac{1}{M}}\right\rceil}(a) \geq \frac{(1 - \delta)\ln(T)}{D_{\KL}\left(\B(\mu_a)||\B(\mu^*)\right)}\right) = 1.
    \end{align*}
\end{lemma}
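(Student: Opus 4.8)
The plan is to reduce the statement to a fact about a single player and then prove that fact by showing that \emph{an under-explored suboptimal arm forces whichever arm is actually pulled to have been pulled only $O(\ln t)$ times}. Write $s=\lceil T^{1/M}\rceil$ and $d_a=D_{\KL}(\B(\mu_a)||\B(\mu^*))$, and assume $\delta\in(0,1)$ (the claim is vacuous otherwise). Since round $s$ is the only communication round, before choosing an arm at round $s$ each player has observed only its own rewards, so each player's trajectory over rounds $1,\dots,s$ is exactly that of an ordinary single-player $\KLUCB$ run, and under any fixed tie-breaking rule the $M$ trajectories are i.i.d. In particular the per-player pull counts $\sum_{\tau=1}^{s}\I\{A_{p,\tau}=a\}$ are i.i.d.\ copies of the pull count $N^{\mathrm{sp}}_s(a)$ of a single-player $\KLUCB$ run of length $s$, and $N_s(a)$ is their sum. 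Because $s\ge T^{1/M}$ we have $M\ln s\ge\ln T$, so it suffices to prove that, for single-player $\KLUCB$, every suboptimal $a$ and every $\delta\in(0,1)$, $\Pr\!\big(N^{\mathrm{sp}}_s(a)>(1-\delta)\ln(s)/d_a\big)\to1$; intersecting the $M$ independent copies of this event then has probability $(1-o(1))^M\to1$, and on it $N_s(a)>M(1-\delta)\ln(s)/d_a\ge(1-\delta)\ln(T)/d_a$.

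For the single-player claim, set $n^*=(1-\delta)\ln(s)/d_a$ and $\gamma=\tfrac{1-\delta}{1-\delta/2}\in(0,1)$, and pick small $\varepsilon,c>0$ and a large constant $C$ with $D_{\KL}(\B(\mu_a-\varepsilon)||\B(\mu^*+c))<d_a/(1-\delta/2)$, $\mu^*+c<1$, $\varepsilon<c/2$, and $\mu_b+\varepsilon<\mu^*+c/2$ for every arm $b\ne a$ (writing $\mu_{a^*}:=\mu^*$). Let $\mathcal G_s$ be the event that for every arm $b$ the empirical mean of its first $m$ samples lies within $\varepsilon$ of $\mu_b$ for all $m$ with $c_0\ln s\le m\le s$, where $c_0>0$ is a suitably small constant; by Hoeffding and a union bound over the $K$ arms and the $\le s$ values of $m$, $\Pr(\mathcal G_s)=1-O(s^{-2c_0\varepsilon^2})\to1$. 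The key claim is that on $\mathcal G_s$, for $s$ large and every round $t$ with $s^\gamma<t\le s$: (a) if $N_{t-1}(a)<(1-\delta/2)\ln(t)/d_a$ then $B^+_t(a)>\mu^*+c$ --- when $N_{t-1}(a)<c_0\ln s$ this follows from the crude bound $D_{\KL}(\B(\tilde\mu_{t-1}(a))||\B(\mu^*+c))\le\ln\tfrac{1}{1-\mu^*-c}<\F(t-1)/N_{t-1}(a)$, valid once $c_0$ is small since $\F(t-1)\ge\tfrac{\gamma}{2}\ln s$ there (or $B^+_t(a)\ge\tilde\mu_{t-1}(a)>\mu^*+c$ outright if $\tilde\mu_{t-1}(a)>\mu^*+c$), and when $c_0\ln s\le N_{t-1}(a)<(1-\delta/2)\ln(t)/d_a$ it follows because then $\tilde\mu_{t-1}(a)\ge\mu_a-\varepsilon$ on $\mathcal G_s$, $\F(t-1)/\ln t\to1$, and the choice of $\varepsilon,c$; and (b) for any $b\ne a$, if $N_{t-1}(b)>C\ln s$ then $B^+_t(b)<\mu^*+c$, since then $\tilde\mu_{t-1}(b)\le\mu^*+c/2$ while $\F(t-1)/N_{t-1}(b)<2/C$, so $B^+_t(b)\le\sup\{q:D_{\KL}(\B(\mu^*+c/2)||\B(q))\le 2/C\}<\mu^*+c$ for $C$ large.

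To conclude, suppose for contradiction that $N_s(a)\le n^*$. By (a) and (b), on $\mathcal G_s$ every round $t\in(s^\gamma,s]$ is of one of three types: (i) $A_t=a$; (ii) $N_{t-1}(a)\ge(1-\delta/2)\ln(t)/d_a$; (iii) the pulled arm $J_t\ne a$ has $N_{t-1}(J_t)\le C\ln s$ --- for if $A_t\ne a$ and (ii) fails, then (a) gives $B^+_t(J_t)\ge B^+_t(a)>\mu^*+c$, and (b) then forces $N_{t-1}(J_t)\le C\ln s$. Type (i) occurs at most $N_s(a)\le n^*$ times, and type (iii) at most $(K-1)(\lceil C\ln s\rceil+1)$ times, since any fixed $b\ne a$ can be pulled with $N_{t-1}(b)\le C\ln s$ at most $\lceil C\ln s\rceil+1$ times in all. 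As there are at least $s-s^\gamma$ rounds in $(s^\gamma,s]$ and $s-s^\gamma-n^*-(K-1)(\lceil C\ln s\rceil+1)\to\infty$, for $s$ large some round $t^\dagger\in(s^\gamma,s]$ is of type (ii); but then $N_s(a)\ge N_{t^\dagger-1}(a)\ge(1-\delta/2)\ln(t^\dagger)/d_a>(1-\delta/2)\gamma\ln(s)/d_a=n^*$, contradicting $N_s(a)\le n^*$. Hence $N_s(a)>n^*$ on $\mathcal G_s$, and since $\Pr(\mathcal G_s)\to1$ the single-player claim --- and therefore the lemma --- follows.

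I expect the key claim of the second paragraph to be the main obstacle, and within it part (a) in the intermediate regime $c_0\ln s\le N_{t-1}(a)<(1-\delta/2)\ln(t)/d_a$: concentration of the empirical mean of arm $a$ is available there only because we restricted to $t>s^\gamma$, which makes $N_{t-1}(a)\gtrsim\ln s\to\infty$ in that regime, and $\varepsilon,c$ must be taken small enough that $D_{\KL}(\B(\mu_a-\varepsilon)||\B(\mu^*+c))$ remains strictly below $d_a/(1-\delta/2)$ while $c_0$ must at the same time be small relative to $\gamma$ and $\ln\tfrac{1}{1-\mu^*-c}$ so that the crude bound is usable. The remaining ingredients --- the reduction to one player, the elementary estimates on $\F(t)=\ln(t\ln^3 t)$, the counting, and the Hoeffding union bound for $\mathcal G_s$ --- are routine.
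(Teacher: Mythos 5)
Your proof is correct and reaches the lemma by a genuinely different route from the paper's. Both arguments run on the same engine --- if $N_{t-1}(a)$ is too small then $B^+_t(a)$ exceeds $\mu^*+c$, while any arm already pulled $\Omega(\ln s)$ times has index below $\mu^*+c$ --- but they deploy it differently. The paper isolates a single random round: it takes $\Upsilon_T$ to be the most frequently pulled arm and $\Psi_T$ the last round it is pulled (so $N_{\Psi_T}(\Upsilon_T)\geq (T-1)/K$ by pigeonhole), observes that under-exploration of $a$ forces $B^+_{\Psi_T}(\Upsilon_T)\geq B^+_{\Psi_T}(a)$, and splits this into three events each shown to have vanishing probability via convergence-in-probability lemmas, including a separate sub-lemma that $N_t(a)\to\infty$ in probability, which is needed so that $\tilde{\mu}_{\Psi_T}(a)$ concentrates. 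You instead build a uniform concentration event $\mathcal{G}_s$ and run a deterministic pigeonhole over the entire window $(s^\gamma,s]$, classifying every round as type (i)/(ii)/(iii); your case split at $N_{t-1}(a)<c_0\ln s$ (crude KL bound, no concentration needed) neatly replaces the paper's ``$N_t(a)\to\infty$ in probability'' sub-lemma. What your route buys is an explicit polynomial failure probability $O(s^{-2c_0\varepsilon^2})$ and no random stopping times --- exactly the finite-time friendliness the Discussion section says the paper's technique lacks; what the paper's route buys is brevity at the asymptotic level and a skeleton that transfers directly to the much harder \lemmaref{lower2} (infinite intersection over $t$, count predictions $N'$), where your fixed-window counting would need reworking. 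Three trivial points to patch: handle $N_{t-1}(a)=0$ in claim (a) (where $B^+_t(a)=1$ by convention, so the claim holds anyway); take $\varepsilon<\mu_a$ so that $\B(\mu_a-\varepsilon)$ is well defined; and note that a union bound over the $M$ players, as the paper uses, would let you drop the independence and tie-breaking discussion entirely.
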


This lemma ensures that we over-explore the suboptimal arms enough when $t = T^{\frac{1}{M}}$. One way to prove this is to use Theorem 2 in \cite{LR}. However, this has some disadvantages. First, it cannot be applied to the UCB adaptation. Second, it requires that we first prove the policy is consistent, which is very unnecessary. Finally, it cannot be translated into a finite-time result. Hence, we present a direct algorithm-oriented proof here. The key idea is, in order to prove something happens with a very low probability, we instead prove that its consequence happens with a very low probability.
\begin{proof}[Proof sketch of \lemmaref{lower1}]
    Let $\xi$ be a shorthand for $1 \di D_{\KL}\left(\B(\mu_a)||\B(\mu^*)\right)$. It suffices to prove that in a single-player setting, $\lim_{T\to\infty}\Pr\left(N_T(a) < (1 - \delta)\xi\ln(T)\right) = 0$\footnote{Since we are talking about a single-player setting, here and later, the subscript $p$ is dropped.}, then use a union bound to get the desired result. Define the random variable $\Upsilon_T$ to be the arm that is pulled most frequently before round $T$ and $\Psi$ to be the last round before round $T$ that $\Upsilon_T$ is pulled. Then
    \begin{align*}
        \Pr\left(N_T(a) < (1 - \delta)\xi\ln(T)\right) &\leq \Pr\left(N_T(a) < (1 - \delta)\xi\ln(T)\wedge B^+_{\Psi_T}(\Upsilon_T)\geq B^+_{\Psi_T}(a)\right)\\
                                                       &\leq \Pr\left(B_{\Psi_T}^+(\Upsilon_T) \geq \mu_{\Upsilon_T} + \epsilon\right) + \Pr\left(\tilde{\mu}_{\Psi_T}(a) \leq \mu_a - \epsilon\right) \numberthis\label{goesto0}\\
        \text{\big($\epsilon$ chosen to make this equals $0$\big)}&+ \Pr\Big(D_{\KL}\left(\B(\mu_a - \epsilon)||\B(\mu^* + \epsilon)\right) > \frac{D_{\KL}\left(\B(\mu_a)||\B(\mu^*)\right)}{(1 - \delta)}\Big).
    \end{align*}
    By definitions we have $N_{\Psi_T}(\Upsilon_T) \geq (T - 1) \di M$, therefore $B^+_{\Psi_T}(\Upsilon_T)\to\tilde{\mu}_{\Psi_T}(\Upsilon_T)$.
    Now we need to show $N_t(a)\to\infty$ in probability. Then use the fact that $\Psi_T \geq (T - 1) \di M$, as well as  Hoeffding's inequality, we can prove the two terms in \eqref{goesto0} converge to $0$ as $T$ goes to infinity.
\end{proof}
Then we can start our proof of \theoremref{onethm}. The key observation is to define random time points $\Phi_T, \Psi_T, \Lambda_{p, t}$, bound the count after $\Lambda_{p, t}$ using a standard bandit argument, and then bound the count before $\Lambda_{p, t}$ using \lemmaref{lower1}.
\begin{proof}[Proof sketch of \theoremref{onethm}]
    Let $\delta > 0$ be an arbitrarily small number, $\xi$ be a shorthand for $1 /D_{\KL}\left(\B(\mu_a)||\B(\mu^*)\right) $, and $T_0$ be a shorthand for $\lceil T^{\frac{1}{M}}\rceil$. We define random variables $\Phi_T$ and $\Psi_T$ in the following way: if $N_{T_0}(a) \geq (1 - \delta)\xi\ln(T)$, then $\Phi_T = 0$ and $\Psi_T = T_0$; otherwise $\Phi_T = T_0$ and $\Psi_T = T$. We also define random variables $\Lambda_{p, T} = \max\{t: \Phi_T \leq t< \Psi_T, N_{p, t}(a) < (1 - \delta)\xi\ln(\Psi_T)\}$. It can be checked that $\Lambda_{p, T}$ is well-defined. By a standard bandit argument, for each player $p$ the expected number of pulls of arm $a$ after $\Lambda_{p, t}$ is no more than $4\delta\xi\ln(T) + o(\ln(T))$, which is negligible. The rest expected number of pulls is no more than
    \begin{align*}
        M\cdot\E\left[N_{1, \Lambda_{1, T}}(a)\right] &\leq M\cdot\left(\Pr(\Phi_T = 0)(1 - \delta)\xi\ln(T_0) + \Pr(\Phi_T = T_0)(1 - \delta)\xi\ln(T)\right)\numberthis\label{similar}\\
        \why{by \lemmaref{lower1}}     &\leq \xi(1 - \delta)M\ln(\lceil T^{\frac{1}{M}}\rceil) + o(1) \cdot M(1 - \delta)\xi\ln(T)\\
                                       &= \xi(1 - \delta)\ln(T) + o(\ln(T)).&\,\,\,\,\,\,\,\Box
    \end{align*}\let\qed\relax
\end{proof}

\section{Stable Strategies}
\label{sustainable}

While the over-exploration strategy works like magic, it has two disadvantages. First, it requires a known time horizon, while in practice we often need an anytime algorithm. Second, when $T$ is relatively small, almost all the bandit policies do much better than the upper bound suggests, which makes the choice of $T^{\frac{1}{M}}$ smaller than actually needed. In other words, \lemmaref{lower1} would only make sense when $T^{\frac{1}{M}}$ is relatively large, which may require a huge $T$ if $M$ is large.

To develop anytime algorithms, we first introduce the concept of \textit{communication set}. The communication set is the set of all the communication rounds. Since we want to develop anytime algorithms, we assume the communication set $\C$ is an infinite set whose elements are denoted as $C_1, C_2, C_3, \dotsc$ in the increasing order. Then we need to measure the \textit{frequency} of the communication set.
\begin{definition}
    The counting function on a communication set $\C$ is defined by
    $    \ZC(n) \coloneqq \abs{\C\cup[n]}$.        
\end{definition}
\begin{definition}
    The density of a communication set $\C$ is defined by
    $ \alpha(\C) \coloneqq \liminf_{k\to\infty}\frac{\ln(C_k)}{\ln(C_{k + 1})}$.
    
\end{definition}
The counting function is a natural and intuitive way to define the frequency of communication. It basically tells us how many communication rounds there are in the first $n$ rounds for any $n$. However, as we will show later, the \textit{density} is the true property of a communication set in the bandit world. The relationship between these two measurements can be summarized by the following proposition.
\begin{proposition}\label{pro: three}
    For every communication set $\C$, 
    \textbf{(a)} if $ \ZC(n) \in o(\ln(\ln(n)))$, then  $\alpha(\C) = 0 $;
    \textbf{(b)}  if  $\alpha(\C) = 1$, then  $\ZC(n) \in \omega(\ln(\ln(n)))$;
    \textbf{(c)}      if $0 < \alpha(\C) < 1$, then \[\liminf_{n\to\infty}\frac{\ZC(n)}{\ln(\ln(n))\di\ln\left({\alpha(\C)}^{-1}\right)}\geq 1.\] 
\end{proposition}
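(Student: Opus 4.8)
The plan is to derive all three parts from a single estimate: \emph{if $\liminf_{k\to\infty}\frac{\ln(C_k)}{\ln(C_{k+1})}\ge\beta$ for some $\beta\in(0,1]$, then $\liminf_{n\to\infty}\frac{\ZC(n)}{\ln(\ln(n))}\ge\frac{1}{\ln(1/\beta)}$}, with the convention that the right-hand side is $+\infty$ when $\beta=1$. Granting this, part (c) is immediate upon taking $\beta=\alpha(\C)$ and rearranging; part (b) follows by taking $\beta=1-\epsilon$ and letting $\epsilon\downarrow0$, since then $\frac{1}{\ln(1/(1-\epsilon))}\to\infty$, so $\ZC(n)\in\omega(\ln(\ln(n)))$; and part (a) is the contrapositive: if $\alpha(\C)>0$, applying the estimate with $\beta=\alpha(\C)$ gives $\liminf_{n}\ZC(n)/\ln(\ln(n))>0$ (finite positive if $\alpha(\C)<1$, infinite if $\alpha(\C)=1$), hence $\ZC(n)\notin o(\ln(\ln(n)))$.

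To prove the core estimate, fix $\epsilon>0$ small enough that $\beta-\epsilon>0$ and set $\gamma\coloneqq 1/(\beta-\epsilon)>1$. By the definition of $\liminf$ there is a $k_0$ such that $\ln(C_{k+1})\le\gamma\ln(C_k)$ for all $k\ge k_0$; enlarging $k_0$ if necessary (possible since $C_k\to\infty$) we may also assume $C_{k_0}\ge 3$. Iterating the recursion gives $\ln(C_k)\le\gamma^{\,k-k_0}\ln(C_{k_0})$ for $k\ge k_0$, and taking logarithms once more, $\ln(\ln(C_k))\le(k-k_0)\ln(\gamma)+\ln(\ln(C_{k_0}))$. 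Now, for large $n$ let $k=\ZC(n)$, so that $C_k\le n<C_{k+1}$; since $\ln\ln$ is increasing on $[3,\infty)$ and $k+1\ge k_0$ eventually, $\ln(\ln(n))<\ln(\ln(C_{k+1}))\le(k+1-k_0)\ln(\gamma)+\ln(\ln(C_{k_0}))$, which rearranges to $\ZC(n)=k\ge\frac{\ln(\ln(n))}{\ln(\gamma)}-O(1)$. Dividing by $\ln(\ln(n))$ and letting $n\to\infty$ yields $\liminf_{n}\frac{\ZC(n)}{\ln(\ln(n))}\ge\frac{1}{\ln(\gamma)}=\frac{1}{\ln(1/(\beta-\epsilon))}$, and letting $\epsilon\downarrow0$ gives the claimed bound $\frac{1}{\ln(1/\beta)}$.

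The only point needing care is that $\alpha(\C)$ is a $\liminf$, not a limit, so one cannot pretend that $\ln(C_{k+1})\approx\ln(C_k)/\alpha(\C)$ for every $k$; however the argument uses only the easy half of the $\liminf$ — that $\frac{\ln(C_k)}{\ln(C_{k+1})}$ is eventually at least $\beta-\epsilon$ — so this is harmless. The remaining bookkeeping ($\ZC(C_k)=k$, $\ZC$ constant on $[C_k,C_{k+1})$, and upgrading the bound at the points $C_k$ to all $n$ at the cost of one index shift and an $O(1)$ term) is routine, and I do not anticipate any genuine obstacle: the entire content is the recognition that parts (a), (b), (c) are three regimes of the same doubly-exponential growth bound $C_k\le\exp(\gamma^{\,k-k_0}\ln(C_{k_0}))$.
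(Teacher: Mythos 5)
Your proposal is correct and follows essentially the same route as the paper's proof: both extract from $\alpha(\C)\ge\beta$ the eventual inequality $\ln(C_{k+1})\le\ln(C_k)/(\beta-\epsilon)$, iterate it to the doubly-exponential growth bound on $C_k$, and invert via $C_{\ZC(n)}\le n<C_{\ZC(n)+1}$ to get $\ZC(n)\ge\ln(\ln(n))/\ln(1/(\beta-\epsilon))-O(1)$, with parts (b) and (c) as the $\beta=1$ and $\beta<1$ regimes and part (a) as their consequence. The only difference is cosmetic packaging (your single ``core estimate'' versus the paper's inline computation), so there is nothing substantive to add.
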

In fact, \propositionref{pro: three}-(a) follows  directly from
\propositionref{pro: three}-(b) and (c). As we will show later, a higher density leads to a lower regret. Proposition~\ref{pro: three}-(b) says that in order to achieve the highest density, or equivalently the lowest regret, the number of communication rounds inevitably falls into the  class $\omega(\ln(\ln(n)))$. On the other hand, Proposition~\ref{pro: three}-(c) says that if we are aiming at a density greater than $0$ and less than $1$, or equivalently a regret that is ``not bad'',  then the number of communication rounds should be at least in the order of $\ln(\ln(n))\di\ln(\alpha(\C)^{-1})$.

Now we can use the concept of density to establish a lower bound.
\begin{theorem}
    \label{lowerbound}
    Let $D_{\inf}(\nu, a, \P) = \inf_{\nu'\in\P:\E[\nu'] > a}D_{\KL}(\nu||\nu')$, $\C$ be the communication set, and $\pi$ be a consistent bandit policy. Then for every suboptimal arm $a$ satisfying $0 < D_{\inf}(\nu_a, \mu^*, \P) < \infty$, 
    \[
        \limsup_{T\to\infty}\frac{\E[N^{\pi}_T(a)]}{\ln(T)} \geq \frac{M}{1 + (M - 1)\alpha(\C)}\cdot\frac{1}{D_{\inf}\left(\nu_a, \mu^*, \P\right)}.
    \]
\end{theorem}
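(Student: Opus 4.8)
The plan is to run a change-of-measure argument in the spirit of Lai--Robbins and Burnetas--Katehakis, but applied \emph{separately to each player's private information} between two consecutive communication rounds, so that the exploration the players perform after a communication round — which they cannot share — gets charged $M$ times. Fix a suboptimal arm $a$ and $\epsilon>0$, and (using $D_{\inf}(\nu_a,\mu^*,\P)<\infty$) pick $\nu'\in\P$ with $\E[\nu']>\mu^*$ and $d\coloneqq D_{\KL}(\nu_a||\nu')\le D_{\inf}(\nu_a,\mu^*,\P)+\epsilon$. Let $\mathbb{P}$ be the original instance and $\mathbb{Q}$ the instance obtained by replacing $\nu_a$ with $\nu'$. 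Under $\mathbb{Q}$ arm $a$ is uniquely optimal with a positive gap to every other arm, so consistency of $\pi$ forces $\E_\mathbb{Q}\big[\sum_{b\ne a}N_T(b)\big]=o(T^c)$ for every $c>0$; in particular, under $\mathbb{Q}$ each player plays $a$ in all but $o(T^c)$ rounds of any window ending at round $T$. I would also first record the ``global'' lower bound $\E_\mathbb{P}[N_T(a)]\ge(1-o(1))\ln(T)/d$, obtained by the usual single-player argument applied to the pooled reward stream of all $M$ players; this already proves the theorem when $\alpha(\C)=1$ (then $M/(1+(M-1)\alpha(\C))=1$), and the same computation shows $N_{C_k}(a)\to\infty$ in $\mathbb{P}$-probability, which is needed below.

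The heart of the proof is a per-player lemma, for which I assume $0<\alpha(\C)<1$ (the case $\alpha(\C)=0$ is handled separately at the end). Choose a subsequence of indices $k$ realizing the $\liminf$ in $\alpha(\C)$, so $\ln C_k/\ln C_{k+1}\to\alpha(\C)<1$ and hence $C_{k+1}-C_k\ge\tfrac12 C_{k+1}$ for $k$ large. Write $T=C_{k+1}$, let $C_k$ be the last communication round before $T$, and decompose $N_{C_{k+1}}(a)=N_{C_k}(a)+\sum_{p=1}^M n_p$, where $n_p$ is the number of pulls of $a$ by player $p$ during $(C_k,C_{k+1}]$. Player $p$'s observation $\sigma$-field $\mathcal{G}_p$ at round $C_{k+1}$ is generated by the pooled history through round $C_k$ (which contains exactly $N_{C_k}(a)$ rewards of arm $a$) together with player $p$'s own rewards during $(C_k,C_{k+1}]$ (exactly $n_p$ of arm $a$), so the likelihood ratio $d\mathbb{Q}/d\mathbb{P}$ restricted to $\mathcal{G}_p$ equals $\exp(-Z_p)$, where $Z_p$ is a sum of $N_{C_k}(a)+n_p$ i.i.d.\ terms $\ln\tfrac{d\nu_a}{d\nu'}$ of mean $d$ under $\mathbb{P}$. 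I would then run the standard Lai--Robbins chain on $\mathcal{G}_p$ with the event $\{N_{C_k}(a)+n_p<(1-\eta)\ln(C_{k+1})/d\}\cap\{Z_p<(1-\eta/2)\ln C_{k+1}\}$: its $\mathbb{Q}$-probability is $o(C_{k+1}^{c-1})$ since under $\mathbb{Q}$ player $p$ misses arm $a$ only $o(C_{k+1}^c)$ times in a window of length $\ge\tfrac12 C_{k+1}$; the likelihood-ratio bound then gives it $\mathbb{P}$-probability $o(C_{k+1}^{c-\eta/2})$; and on the complementary part of $\{N_{C_k}(a)+n_p<(1-\eta)\ln(C_{k+1})/d\}$ the truncated sum $Z_p$ stays below $(1-\eta/2)\ln C_{k+1}$ outside an event of vanishing probability, by the weak law (using $N_{C_k}(a)\to\infty$). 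Taking $c$ small and then $\eta\to0$ yields, for every player $p$,
\[
    \E_\mathbb{P}\!\left[N_{C_k}(a)+n_p\right]\;\ge\;(1-o(1))\,\frac{\ln C_{k+1}}{d}.
\]

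Summing over $p=1,\dots,M$ and using $\E_\mathbb{P}[N_{C_{k+1}}(a)]=\E_\mathbb{P}[N_{C_k}(a)]+\sum_p\E_\mathbb{P}[n_p]$ gives the recursion
\[
    \E_\mathbb{P}[N_{C_{k+1}}(a)]\;\ge\;M(1-o(1))\,\frac{\ln C_{k+1}}{d}\;-\;(M-1)\,\E_\mathbb{P}[N_{C_k}(a)].
\]
Suppose the theorem fails, i.e.\ $\limsup_T\E_\mathbb{P}[N_T(a)]/\ln T<c/d$ with $c\coloneqq M/(1+(M-1)\alpha(\C))$; then there is $\delta>0$ with $\E_\mathbb{P}[N_{C_k}(a)]\le(c-\delta)\ln(C_k)/d$ and $\E_\mathbb{P}[N_{C_{k+1}}(a)]\le(c-\delta)\ln(C_{k+1})/d$ for all large $k$ along the subsequence. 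Substituting both into the recursion, dividing by $\ln(C_{k+1})/d$, and letting $k\to\infty$ (so $\ln C_k/\ln C_{k+1}\to\alpha(\C)$) yields $c-\delta\ge M-(M-1)(c-\delta)\alpha(\C)$, i.e.\ $(c-\delta)(1+(M-1)\alpha(\C))\ge M$, i.e.\ $c-\delta\ge c$ — a contradiction. The case $\alpha(\C)=0$ follows the same way by choosing the subsequence with $\ln C_k/\ln C_{k+1}\to0$: either $\E_\mathbb{P}[N_{C_k}(a)]/\ln C_k$ is unbounded (and we are done) or it is bounded, in which case the recursion forces $\liminf_k\E_\mathbb{P}[N_{C_{k+1}}(a)]/\ln C_{k+1}\ge M/d$. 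Letting $\epsilon\to0$ finally replaces $d$ by $D_{\inf}(\nu_a,\mu^*,\P)$. The step I expect to be the main obstacle is the per-player lemma: one must argue carefully that only the $N_{C_k}(a)+n_p$ rewards \emph{actually visible to player $p$} enter its likelihood ratio (the pre-$C_k$ pooled part is shared, the post-$C_k$ part is private), since this is precisely what makes $\sum_p n_p$ count $M$ times and produces the factor $M/(1+(M-1)\alpha(\C))$ rather than the trivial constant $1$; the global bound, the weak-law control of $Z_p$, and the closing recursion are then routine.
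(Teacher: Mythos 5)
Your proposal is correct, and its skeleton coincides with the paper's: both arguments rest on (i) a per-player lower bound saying that the number of arm-$a$ rewards \emph{visible to player $p$} at a communication round $C_{k+1}$ is at least $(1-o(1))\ln(C_{k+1})/D_{\inf}(\nu_a,\mu^*,\mathcal{P})$, (ii) the decomposition $N_{C_{k+1}}(a)=N_{C_k}(a)+\sum_{p}n_p=\sum_{p}N_{p,C_{k+1}}(a)-(M-1)N_{C_k}(a)$, and (iii) passage to a subsequence realizing $\alpha(\C)$ followed by the same algebra (the paper solves $x\geq M/D_{\inf}-(M-1)x\alpha(\C)$ for $x$ directly where you argue by contradiction; these are the same computation). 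The genuine difference is how (i) is obtained. The paper performs no change of measure of its own: it defines the linear extension $\prec_p^*$ of the information order, reads the entire distributed process as a single-player process executed in that order by a simulator whose induced policy is consistent whenever $\pi$ is, and then invokes the Burnetas--Katehakis theorem as a black box together with $N^{\mathcal{S}}_{t}(a)\leq N^{\mathcal{D}}_{p,t}(a)$. You instead re-derive the bound from first principles via a change of measure restricted to player $p$'s observation $\sigma$-field over the window $(C_k,C_{k+1}]$, using consistency under the alternative instance, the likelihood-ratio truncation, and the maximal law of large numbers (your appeal to ``the weak law'' should really be the maximal version, since the number of log-likelihood summands is random, but that is the standard Lai--Robbins step). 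Your route is longer but self-contained, and it makes explicit the mechanism the paper's reduction hides: only the $N_{C_k}(a)+n_p$ rewards player $p$ has actually seen enter the likelihood ratio, which is precisely why the private explorations $\sum_p n_p$ get charged $M$ times. The paper's route is shorter but shifts the burden onto verifying that the serialized policy is a legitimate consistent single-player policy --- the same measurability point you flag as the main obstacle in your version. Both proofs are sound.
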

To proof \theoremref{lowerbound}, we need to translate any distributed bandit into a single-player bandit, then perform a reduction from the single-player lower bound, and finally calculate out the new constant.
\begin{proof}[Proof sketch of \theoremref{lowerbound}]
    First, for each player $p$, we can use Theorem 1 in \cite{BK} to show that
    \[
        \liminf_{t\to\infty}\frac{\E[N_{p, t}(a)]}{\ln(t)}\geq\frac{1}{D_{\inf}(\nu_a, \mu^*, \P)}.\numberthis\label{notlemma}
    \]
    It can be done by relabel the round number in a distributed bandit and translate it into a serialized process, in which a player periodically changes his ``role'' so that he does not make decision based on all the data available in this serialized process. Then it remains to do some calculation. By the definition of $\alpha(\C)$, there exists a subsequence of $(C_k)_{k \geq 1}$, denoted by $(C_{n_s})_{s\geq 1}$, such that 
    \[\lim_{s\to\infty} \ln(C_{n_s}) \di \ln(C_{n_s + 1}) = \alpha(\C).\numberthis\label{sub}
    \]
    Let $x = \limsup_{T\to\infty}\E[N_T(a)] \di\ln(T)$. If $x = \infty$, the desired inequality holds trivially. Otherwise,
    \begin{align*}
    x&\geq\limsup_{s\to\infty}\frac{\E[N_{C_{n_s + 1}}(a)]}{\ln(C_{n_s + 1})} \geq \liminf_{s\to\infty}\frac{\E\left[\sum_{p = 1}^{M}N_{p, C_{n_s + 1} - 1}(a) - (M - 1)\cdot N_{C_{n_s}}(a)\right]}{\ln(C_{n_s + 1})}&\\
     &\geq \sum_{p = 1}^{M}\liminf_{s\to\infty}\frac{\E[N_{p, C_{n_s + 1} - 1}(a)]}{\ln(C_{n_s + 1})} - (M - 1)\cdot\limsup_{s\to\infty}\frac{\E[N_{C_{n_s}}(a)]}{\ln(C_{n_s + 1})}\\
     &\geq \sum_{p = 1}^{M}\liminf_{t\to\infty}\frac{\E[N_{p, t}(a)]}{\ln(t)} - (M - 1)\cdot\limsup_{s\to\infty}\frac{\E[N_{C_{n_s}}(a)]}{\ln(C_{n_s})}\cdot\lim_{s\to\infty}\frac{\ln(C_{n_s})}{\ln(C_{n_s + 1})}\\
        &\geq \frac{M}{D_{\inf}(\nu_a, \mu^*, \P)} - (M - 1)\cdot x\cdot\alpha(\C)\ \ \ \ \ \ \ \why{by \eqref{notlemma} and \eqref{sub}}.
    \end{align*}
    Solving $x$ concludes the proof.
\end{proof}

\subsection{Oblivious Policies Under Dense Communication Sets}
\label{dense}
\theoremref{lowerbound} shows that if we want to achieve the optimal regret, or in other words, if we do not want the regret scale with the number of players, then the density of the communication set must be $1$. Note that linear grid $\{d, 2d, 3d, \dotsc\}$ and exponential grid $\{q, q^2, q^3, \dotsc\}$ both have density $1$, while a double-exponential grid $\{q^{1 + \epsilon}, q^{(1 + \epsilon)^2}, q^{(1 + \epsilon)^3}, \dotsc\}$ has density $1 / (1 + \epsilon) < 1$. The following theorem shows the KL-UCB adaptation achieves this lower bound when $\alpha(\C) = 1$.
\begin{theorem}
    \label{main0}
    If the rewards are Bernoulli rewards and the communication set $\C$ satisfies $\alpha(\C) = 1$, then for every suboptimal arm $a$, 
    \begin{align*}
        \E\left[N_T^{\KLUCB}(a)\right] &\leq \frac{\ln(T)}{D_{\KL}\left(\B(\mu_a)||\B(\mu^*)\right)} + o\left(\ln(T)\right).
    \end{align*}
\end{theorem}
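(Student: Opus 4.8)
The plan is to reuse the skeleton of the proof of \theoremref{onethm}, with the single communication round $T_0=\lceil T^{1/M}\rceil$ replaced by $C\coloneqq C_{K_T}$, the last communication round not exceeding $T$ (so $C_{K_T}\le T<C_{K_T+1}$). Since $\C$ is infinite, $K_T\to\infty$ as $T\to\infty$; since $\ln(C_k)/\ln(C_{k+1})\le 1$ for every $k$ while $\liminf_k\ln(C_k)/\ln(C_{k+1})=\alpha(\C)=1$, this ratio in fact converges to $1$, so squeezing through $C_{K_T}\le T<C_{K_T+1}$ gives $\ln(C)/\ln(T)\to1$ — this, and nothing else, is where the hypothesis $\alpha(\C)=1$ is used. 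The point of departure is the elementary identity already exploited in the proof of \theoremref{lowerbound}: since no communication occurs in $(C,T]$, each $N_{p,T}(a)$ equals $N_C(a)$ plus the pulls of $a$ made by player $p$ alone after round $C$, whence
\[
    N_T(a)\;=\;\sum_{p=1}^{M}N_{p,T}(a)\;-\;(M-1)\,N_C(a).
\]

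I would then bound the two quantities on the right separately. For the upper bound, the standard finite-time $\KLUCB$ analysis (as in \cite{CGMMS}, run along player $p$'s trajectory) gives, for any fixed $\delta>0$ and with probability at least $1-T^{-2}$, that $N_{p,T}(a)\le(1+\delta)\,\xi\,\F(MT)$ for every $p$, where $\xi\coloneqq1/D_{\KL}(\B(\mu_a)\|\B(\mu^*))$; the only change from the single-player case is that player $p$ owns at most $Mt$ samples at round $t$, and since $\F(MT)=\ln T+\ln M+3\ln\ln(MT)=\ln(T)(1+o(1))$ the factor $M$ is absorbed into lower-order terms. For the lower bound I need the analogue of \lemmaref{lower1}: with probability tending to $1$, $N_C(a)\ge(1-\delta)\,\xi\,\ln(C)$, hence $\ge(1-\delta)(1-o(1))\,\xi\,\ln(T)$ by the previous paragraph. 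Granting both, on the intersection of the two good events
\[
    N_T(a)\;\le\;M(1+\delta)\,\xi\,\F(MT)-(M-1)(1-\delta)\,\xi\,\ln(C)\;\le\;\bigl(1+(2M-1)\delta+o(1)\bigr)\,\xi\,\ln(T);
\]
the two complementary events contribute only $o(\ln T)$ to $\E[N_T(a)]$ (the $\KLUCB$ failure has polynomially small probability while $N_T(a)\le MT$ deterministically, and on the $\KLUCB$ good event $N_T(a)=O(\ln T)$ while the lower-bound failure has probability $o(1)$). Taking $\limsup_{T\to\infty}$ and then letting $\delta\downarrow0$ yields $\limsup_{T\to\infty}\E[N_T(a)]/\ln(T)\le\xi$, which is exactly the asserted estimate.

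It remains to prove the lower bound $\Pr\bigl(N_C(a)\ge(1-\delta)\,\xi\,\ln(C)\bigr)\to1$, and this is the step I expect to cost the most. Since $C$ is a communication round, $N_C(a)=N_{p,C}(a)$ for every $p$, and $N_C(a)\ge N_{1,C-1}(a)$ (the copies of $a$ in player $1$'s dataset at round $C-1$ are a subset of all pulls of $a$ up to round $C$), so it suffices to lower-bound player $1$'s own count just before the last merge. Here I would adapt the algorithm-oriented argument of \lemmaref{lower1}: let $\Upsilon$ be the arm that player $1$ has pulled \emph{by itself} most often during rounds $1,\dots,C-1$, so $\Upsilon$ has at least $(C-1)/K$ such pulls (if $\Upsilon=a$ the claimed bound is immediate since $(C-1)/K\gg\ln(C)$, so assume $\Upsilon\neq a$), and let $\Psi\le C-1$ be the round of the last of them. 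Then $N_{1,\Psi-1}(\Upsilon)\ge(C-1)/K-1\to\infty$, so $B^+_{1,\Psi}(\Upsilon)$ collapses onto the empirical mean of $\Upsilon$, which concentrates below $\mu^*+\epsilon$; since player $1$ preferred $\Upsilon$ to $a$ at round $\Psi$, the definition of $B^+_{1,\Psi}(a)$ then forces
\[
    N_{1,\Psi-1}(a)\;\ge\;(1-\delta)\,\xi\,\F\!\left(\sum_{k=1}^{K}N_{1,\Psi-1}(k)\right)\;\ge\;(1-\delta)\,\xi\,\ln(C)(1-o(1)),
\]
the last step because $\sum_{k}N_{1,\Psi-1}(k)\ge N_{1,\Psi-1}(\Upsilon)\ge(C-1)/K-1$, and $N_C(a)\ge N_{1,C-1}(a)\ge N_{1,\Psi-1}(a)$ since player $1$'s dataset only grows. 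With the customary $\delta/2$ bookkeeping this gives the claim. The genuine work is the two points already flagged in \lemmaref{lower1}: establishing $N_t(a)\to\infty$ in probability — so that the empirical mean of $a$ is also pinned near $\mu_a$ and the KL gap does not shrink to zero — and pushing the maximal-Hoeffding concentration estimates through the random quantities $\Psi$, $\Upsilon$, and $N_{1,\Psi-1}(\cdot)$. As remarked in \sectionref{one}, the entire argument transfers to the UCB adaptation by replacing the KL radius with the Hoeffding radius, only the constant changing.
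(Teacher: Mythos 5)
Your overall architecture (upper-bound each player's terminal count, lower-bound the shared count at the last communication round, combine through the identity $N_T(a)=\sum_{p=1}^{M}N_{p,T}(a)-(M-1)N_{\ell(T)}(a)$) is close in spirit to the paper's, your lower-bound step is a correct adaptation of \lemmaref{lower1}, and the observation that $\alpha(\C)=1$ forces $\ln(\ell(T))/\ln(T)\to 1$ is sound. (The paper itself takes a shortcut: it notes that when $\alpha(\C)=1$ the KL-UCB adaptation coincides with DKLUCB and obtains \theoremref{main0} as the special case of \theoremref{main}; a direct proof along your lines would be acceptable if every step held.)

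The gap is the claim that the ``standard finite-time $\KLUCB$ analysis run along player $p$'s trajectory'' yields $N_{p,T}(a)\le(1+\delta)\,\xi\,\F(MT)$ with high probability. The quantity $N_{p,T}(a)$ is the size of player $p$'s \emph{dataset} on arm $a$, which contains every pull of $a$ made by every player up to the last communication round, not only player $p$'s own pulls. The single-player argument controls only the latter: each own pull of $a$ is preceded by the event $B^+_{p,t}(a)\ge B^+_{p,t}(a^*)$, which is what the analysis exploits; it says nothing about how many redundant pulls of $a$ the other $M-1$ players make before their local counts catch up, all of which land in $N_{p,T}(a)$ at the next merge. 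Concretely, if the first communication round $C_1$ is large (perfectly compatible with $\alpha(\C)=1$), each player independently accumulates roughly $\xi\ln(C_1)$ own pulls of $a$ beforehand, so $N_{p,C_1}(a)\approx M\xi\ln(C_1)$, exceeding your claimed bound by a factor of $M$ at $T=C_1$. That particular scenario is asymptotically harmless, but in general an upper bound on $N_{p,T}(a)$ of order $\xi\ln(T)(1+o(1))$ is essentially as strong as the theorem itself (note $N_T(a)=N_{p,T}(a)+\sum_{q\ne p}(N_{q,T}(a)-N_{\ell(T)}(a))$), so invoking it as a known input is circular. This is exactly the difficulty the paper flags in its discussion: the threshold can be reached globally without being reached locally, and, conversely here, every player's own-pull count can be fine while the pooled count --- and hence every $N_{p,T}(a)$ after a merge --- is inflated. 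Controlling $N_{\ell(T)}(a)$ from above requires unwinding the recursion over \emph{all} communication rounds, which is why \lemmaref{lower2} is stated as an intersection over all $t\ge T$ and why the proof of \theoremref{main} introduces the random index $\Upsilon_T$ (the last communication round at which the shared count is still below the threshold) together with the per-player stopping times $\Lambda_{p,T}$, applying the one-sided deviation bound only to own pulls made after $\Lambda_{p,T}$. Your proof needs this machinery or a substitute for it; the lower bound at the last communication round alone does not suffice.
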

Due to the limited space, we defer our proof to \sectionref{sparse}, where the DKLUCB policy is introduced. DKLUCB, as a generalization of the KL-UCB adaptation, is optimal even if $\alpha(\C) < 1$.
\subsection{Non-Oblivious Policies for Sparse Communication Sets}
\label{sparse}

In \sectionref{dense} we showed that the KL-UCB adaptation is optimal for dense communication sets (i.e., $\alpha(\C) = 1$). However, if the communication set is very sparse (i.e., $\alpha(\C) < 1$), then we cannot expect an oblivious policy to do uniformly well. As Strategy C in \exampleref{example} has demonstrated, the main difficulty in designing an algorithm for the distributed MAB problem is that each player is ``isolated'' from others during the period between two communication rounds.
In the single-player setting, if one player pulled a suboptimal arm, he should be more certain that this arm is not optimal. Therefore, he should  explore this suboptimal arm less frequently in future rounds. However, when it comes to the distributed setting, although each player can utilize the information produced by a suboptimal decision made by himself immediately, other players would not know this experience until next communication round.

Based on the observations above, the key idea is to make each player explore less, since the results of exploration will become common knowledge at the next communication stage. This is implemented by having each player attempt to predict the number of pulls from each arm made by other players since the last communication round. These predictions can be wrong, but not that wrong as all processes are running the same algorithm. In fact, we can show that the errors in these count predictions are negligible as $T$ goes to infinity. However, even if these count predictions were fully correct, we still cannot simulate the full-communication KL-UCB adaptation. This is because the number of the data available is less than the count prediction. For this reason, we have to use a larger confidence bound. The following is a non-oblivious distributed policy called DKLUCB (stands for distributed KL-UCB), where we replace the count $N_{p, t}(a)$ in the KL-UCB adaptation with the count prediction $N'_{p, t}(a)$, and replace the original confidence bound with a slightly larger one.

\textbf{DKLUCB for Bernoulli rewards.} Define $\ell(t)\coloneqq \max\left\{u \leq t: u\in\C \vee u = 0\right\}$, 
\[
    \F(t) = \frac{M (\ln(t) + 3\ln\left(\ln(t)\right))}{1 + (M - 1)\alpha(\C)}, \,u_t(a) = \frac{N_{\ell(t)}(a)}{M}\left(\frac{1}{\alpha(\C)} - 1\right),
\]
\[
    N'_{p, t}(a) = N_{p, t}(a) + (M - 1)\cdot\min\left(N_{p, t}(a) - N_{\ell(t)}(a), u_t(a)\right).
\]
Then choose $A_{p, t} = \argmax_{a}B_{p, t}^+(a)$,
\[
    \text{where\ \ }B^+_{p, t}(a) = \sup\left\{p \in (0, 1): D_{\KL}\left(\hat{\nu}_{\X_{p, t - 1}(a)}||\B(p)\right) \leq \frac{\F\left(\sum_{k = 1}^K N_{p, t - 1}(k)\right) }{N'_{p, t - 1}(a)}\right\}.
\]
The DKLUCB policy is a non-oblivious policy because player $p$ in round $t$ makes decision not only based on $\X_{p, t-1}(a)$, but also based on $N_{\ell(t - 1)}(a)$, which is the number of pulls of each arm at the end of last communication round. Note that the KL-UCB adaptation can be seen as a special case of the DKLUCB policy. In fact, when $\alpha(\C) = 1$, DKLUCB is identical to the KL-UCB adaptation.
\begin{theorem}
    \label{main}
    If the rewards are Bernoulli rewards and the communication set is $\C$, then for every suboptimal arm $a$, 
    \begin{align*}
        \E\left[N_T^{\DKLUCB}(a)\right] &\leq \frac{M}{1 + (M - 1)\alpha(\C)}\cdot\frac{\ln(T)}{D_{\KL}\left(\B(\mu_a)||\B(\mu^*)\right)} + o\left(\ln(T)\right).
    \end{align*}
\end{theorem}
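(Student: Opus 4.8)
The plan is to run the standard KL-UCB-style regret decomposition, but tracking the count prediction $N'_{p,t}(a)$ in place of $N_{p,t}(a)$, and then to convert per-player control of $N'$ into control of the true aggregate count $N_T(a)$ through the identity $N_T(a)=\sum_{p=1}^{M}N_{p,T}(a)-(M-1)N_{\ell(T)}(a)$ (each shared sample is counted $M$ times in $\sum_p N_{p,T}(a)$). Fix a suboptimal arm $a$, set $\xi\coloneqq 1/D_{\KL}(\B(\mu_a)\|\B(\mu^*))$, $\beta\coloneqq M/(1+(M-1)\alpha(\C))$, and a fixed optimal arm $a^*$; the goal is then equivalent to $\limsup_{T}\E[N_T(a)]/\ln T\le\beta\xi$. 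For a small fixed $\delta>0$ I would first define a good event on which, for the relevant players and rounds, the empirical mean of $a^*$ never drops below $\mu^*-\epsilon$ and the empirical distribution of $a$ stays $\epsilon$-close, in the pertinent $D_{\KL}$ sense, to $\B(\mu_a)$ once $N_{p,t-1}(a)$ exceeds a slowly growing threshold; by the usual peeling/maximal inequalities for KL-UCB (as in \cite{CGMMS,KKM}) the complement of this event, together with the below-threshold rounds, costs only $o(\ln T)$ pulls in expectation, so it suffices to bound $N_T(a)$ on the good event.

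On the good event, whenever player $p$ pulls $a$ at a round $t$ we have $B^+_{p,t}(a)\ge B^+_{p,t}(a^*)\ge\mu^*-\epsilon$, and unwinding the definition of $B^+$ together with continuity of $D_{\KL}(\B(\cdot)\|\B(\cdot))$ forces $N'_{p,t-1}(a)\le\F\big(\sum_{b}N_{p,t-1}(b)\big)/D_{\KL}(\B(\mu_a-\epsilon)\|\B(\mu^*-\epsilon))$, which is $\beta\xi\ln T\,(1+o_\delta(1))$ since $\sum_{b}N_{p,t-1}(b)\le TM$ and $\F(s)=\beta(\ln s+3\ln\ln s)$. Fix $T$, let $C_k=\ell(T)$ and $n\coloneqq N_{C_k}(a)$; on the interval $[C_k,C_{k+1})$ the quantities $N_{\ell(t)}(a)$ and $u_t(a)$ are constant, equal to $n$ and to $\tfrac{n}{M}(\alpha(\C)^{-1}-1)$. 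Applying the displayed bound at each player's last pull of $a$ after $C_k$, and recalling $N'_{p,t-1}(a)=N_{p,t-1}(a)+(M-1)\min(N_{p,t-1}(a)-n,\,\tfrac{n}{M}(\alpha(\C)^{-1}-1))$, a short case split — player ``uncapped'' versus ``capped'' — bounds the number of pulls each player makes after $C_k$ in terms of $\beta\xi\ln T$ and $n$; summing over players and adding $n$ back, the algebra collapses to $N_T(a)\le\beta\xi\ln T\,(1+o_\delta(1))$ \emph{provided} either all players are uncapped (where the $n$-dependence cancels favorably) or $n=N_{C_k}(a)\ge\alpha(\C)\beta\xi\ln T\,(1-o_\delta(1))$ (where the negative contribution of the capped players' cap exactly swallows their excess). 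Run inductively over communication rounds, this also yields the companion upper bound $N_{C_k}(a)\le\beta\xi\ln C_k\,(1+o_\delta(1))$; the one-time $M$-fold over-exploration within the first interval $[1,C_1)$ is $O(1)$, hence $o(\ln T)$.

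It remains to rule out the bad scenario ``some player capped and $N_{C_k}(a)$ too small'', which is exactly the situation in which DKLUCB's cap $u_t(a)$ is not yet large enough to curb the per-player over-exploration, so that all $M$ players could independently drive arm $a$ up to $\approx\beta\xi\ln T$ pulls. To exclude it I would establish an over-exploration lower bound in the spirit of \lemmaref{lower1}, adapted to DKLUCB: since $N'_{p,t}(a)\ge N_{p,t}(a)$, on $[1,C_1)$ DKLUCB explores $a$ at least as aggressively as single-player KL-UCB with the inflated confidence level $\F$, and propagating this through the communication rounds — each of which only enlarges the shared count — shows $N_{C_k}(a)\ge(1-\delta)\beta\xi\ln C_k$ with probability $\to1$; combined with $\alpha(\C)=\liminf_k\ln(C_k)/\ln(C_{k+1})$, which gives $\ln C_k\ge(\alpha(\C)-\delta)\ln C_{k+1}\ge(\alpha(\C)-\delta)\ln T$ for all large $k$, this yields $N_{C_k}(a)\ge(1-\delta)(\alpha(\C)-\delta)\beta\xi\ln T$, placing us in the good scenario up to an $O(\delta\ln T)$ slack; taking $\delta\to0$ in the $\limsup$ closes the proof. \theoremref{main0} is the special case $\alpha(\C)=1$: then $u_t(a)\equiv0$, $N'=N$, $\beta=1$, and DKLUCB coincides with the KL-UCB adaptation, so the bound becomes $\xi\ln T+o(\ln T)$. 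The main obstacle is this lower bound together with the calibration tension it creates: $\F$ is defined with the \emph{exact} constant $\alpha(\C)$ whereas the defining liminf is attained only asymptotically, so the lower bound survives only with a vanishing slack, and one must check that slack is genuinely absorbed into the $o(\ln T)$ / $\delta\to0$ bookkeeping rather than being a real loss; a secondary nuisance — tamed by the fact that all players execute the same policy, hence their counts of $a$ since $C_k$ agree up to the fluctuations already controlled on the good event — is that the $\min$ defining $N'$ must switch sides for essentially all players simultaneously for the clean case split to apply.
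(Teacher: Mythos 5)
Your overall architecture matches the paper's: a standard KL-UCB peeling argument for the pulls made after the per-player prediction $N'_{p,t}(a)$ crosses the threshold, the identity $N_T(a)=\sum_p N_{p,T}(a)-(M-1)N_{\ell(T)}(a)$ to de-duplicate shared samples, a capped/uncapped case split on the $\min$ in $N'$, and an over-exploration lower bound to rule out the dangerous regime where the cap $u_t(a)$ is too small to restrain $M$-fold re-exploration. The algebra you sketch for the capped case (the cap's negative contribution cancelling the excess when $N_{\ell(T)}(a)\gtrsim\alpha(\C)\beta\xi\ln T$) is exactly the computation in the paper's Step 2, and your use of $\ln C_k\geq(\alpha(\C)-\delta)\ln C_{k+1}$ is the same use of the density definition.

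The genuine gap is the over-exploration lower bound, which you dispatch in one sentence and which the paper treats as the technical heart of the result (\lemmaref{lower2}). Your justification is in fact backwards: since $N'_{p,t}(a)$ sits in the \emph{denominator} of the confidence threshold $\F(\cdot)/N'_{p,t-1}(a)$, the inequality $N'_{p,t}(a)\geq N_{p,t}(a)$ makes the index $B^+_{p,t}(a)$ \emph{smaller}, so DKLUCB explores \emph{less} aggressively than single-player KL-UCB with the inflated $\F$ — the opposite of what you need for a lower bound. The correct mechanism is the contrapositive via \claimref{claimforgreater}: if the \emph{global} count $N_t(a)$ is below $(1-\delta)\beta\xi\ln t$, then \emph{some} player's prediction $N'_{p,t}(a)$ is below it, that player's index on $a$ stays above $\mu^*+\epsilon$, and a pigeonhole over players, rounds, and arms (plus concentration at the resulting random times) derives a contradiction in probability. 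Moreover, a pointwise-in-$t$ statement in the spirit of \lemmaref{lower1} is not enough here: the paper applies the bound at the \emph{random} communication round $C_{\Upsilon_T}$ (the last one at which the global count is still small), which is what makes the uniform version $\bigcap_{t\geq T}\{\cdots\}$ in \lemmaref{lower2} necessary and lets it avoid any induction. Your alternative — applying the bound at the deterministic $\ell(T)$ and running an induction over communication rounds to also maintain the companion upper bound $N_{C_k}(a)\leq\beta\xi\ln(C_k)(1+o_\delta(1))$ — has an unaddressed error-accumulation problem: the number of communication rounds is $\omega(\ln\ln n)$ and grows with $T$, so you must show the per-round $(1+o_\delta(1))$ slacks do not compound; as written the induction is not closed. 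Until the lower bound is actually proved (with the uniformity or a working induction), the capped case is not controlled and the proof does not go through.
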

This theorem shows that the DKLUCB policy can achieve the lower bound in \theoremref{lowerbound}. This is consistent with the existing results and intuition. If $M = 1$, then DKLUCB is identical to the single-player KL-UCB policy, and the upper bound is the same. If $\alpha(\C) = 1$, DKLUCB is identical to the KL-UCB adaptation, and the upper bound is still the same (this is formalized as \theoremref{main0}). If $\alpha(\C) = 0$, DKLUCB is no better than $M$ single-player KL-UCB policies running independently.

As in \sectionref{one}, to prove an upper bound, first a lower bound is needed. However, this time we need a lemma much stronger than \lemmaref{lower1}, and theorem 2 in \cite{LR} definitely will not help.
\begin{lemma}
\label{lower2}
    If the rewards are Bernoulli rewards and the communication set is $\C$, then for every suboptimal arm $a$ and any $\delta > 0$, 
    \begin{align*}
        \lim_{T\to\infty}\Pr\left(\bigcap_{t\geq T}\left\{N^{\DKLUCB}_{t}(a) \geq \frac{M}{1 + (M - 1)\alpha(\C)}\cdot\frac{(1 - \delta)\ln(t)}{D_{\KL}(\B(\mu_a)||\B(\mu^*))}\right\}\right) = 1.
    \end{align*}
\end{lemma}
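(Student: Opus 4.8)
The plan is to follow the template of \lemmaref{lower1} --- to show that a ``bad'' event is unlikely, show that its \emph{consequence} is unlikely --- but strengthened in two ways: the conclusion must hold simultaneously for all $t\ge T$, and it must carry the sharp constant $M/(1+(M-1)\alpha)$, which is exactly what the tuned quantities $\F$ and $u_t$ in DKLUCB are built to deliver. Write $\beta(t) \coloneqq \frac{M(1-\delta)\ln t}{(1+(M-1)\alpha)\,D_{\KL}(\B(\mu_a)\|\B(\mu^*))}$. Since $N_t(a)$ is nondecreasing in $t$, it suffices by Borel--Cantelli to show $\sum_t\Pr(N_t^{\DKLUCB}(a)<\beta(t))<\infty$: then almost surely $N_t(a)\ge\beta(t)$ for all large $t$, and $\Pr\!\big(\bigcap_{t\ge T}\{N_t(a)\ge\beta(t)\}\big)\ge 1-\sum_{t\ge T}\Pr(N_t(a)<\beta(t))\to 1$. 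So the task reduces to bounding $\Pr(N_t(a)<\beta(t))$ by a quantity summable in $t$.

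Fix $t$ and a player $p$. Let $\Upsilon$ be the arm $p$ has sampled most by round $t-1$, and $\Psi$ the last round before $t$ at which $p$ pulls $\Upsilon$; then $N_{p,\Psi-1}(\Upsilon)=N_{p,t-1}(\Upsilon)\ge (t-1)/K$, so $\Upsilon$ is heavily sampled by $p$ and $\Psi\ge(t-1)/K\to\infty$. Because $\Upsilon$ is pulled at round $\Psi$ by $p$, $B^+_{p,\Psi}(\Upsilon)\ge B^+_{p,\Psi}(a)$ always holds, hence
\[
\Pr\big(N_t(a)<\beta(t)\big)\ \le\ \Pr\big(B^+_{p,\Psi}(\Upsilon)\ge\mu^*+\epsilon\big)\ +\ \Pr\big(N_t(a)<\beta(t)\ \wedge\ B^+_{p,\Psi}(a)\le\mu^*+\epsilon\big)
\]
for a small fixed $\epsilon>0$. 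The first term is geometrically small in $t$: since $N'_{p,\Psi-1}(\Upsilon)\ge N_{p,\Psi-1}(\Upsilon)\ge(t-1)/K$, we get $\F(\cdot)/N'_{p,\Psi-1}(\Upsilon)\to 0$, so $B^+_{p,\Psi}(\Upsilon)$ is within $o(1)$ of the empirical mean $\tilde\mu_{p,\Psi}(\Upsilon)$, and since $\mu_\Upsilon\le\mu^*$, Hoeffding's inequality with a union bound over the $K$ arms controls $\Pr(\tilde\mu_{p,\Psi}(\Upsilon)\ge\mu^*+\epsilon/2)$. For the second term I would use the elementary facts that $N'_{p,t}(a)\le M\,N_{p,t}(a)$ always and $N'_{p,t}(a)\le\frac{M}{1+(M-1)\alpha}N_{p,t}(a)$ whenever $u_t(a)$ is the active term in the minimum, and then split on the size of $N_{p,\Psi-1}(a)\ (\le N_{p,t-1}(a))$: when it lies below a fixed constant multiple of $\ln t$, the ratio $\F(\cdot)/N'_{p,\Psi-1}(a)$ is so large --- because $\F(\cdot)\ge\frac{M\ln((t-1)/K)}{1+(M-1)\alpha}$ and the KL divergence from a Bernoulli is bounded above --- that $B^+_{p,\Psi}(a)\ge\mu^*+\epsilon$ holds deterministically, so that sub-event is empty; in the complementary range $N_{p,\Psi-1}(a)=\Theta(\ln t)$ is large enough that $\hat\nu_{\X_{p,\Psi-1}(a)}$ concentrates around $\mu_a$ up to a $t$-summable failure probability, and then on $\{N_t(a)<\beta(t)\}$ the calibration of $\F$ together with the bound on $N'_{p,\Psi-1}(a)$ again forces $B^+_{p,\Psi}(a)>\mu^*+\epsilon$, emptying the sub-event. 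Summing the surviving probabilities over $t$ supplies the Borel--Cantelli input.

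The step I expect to be the real obstacle is passing from such per-player estimates to the \emph{global} bound $N_t(a)\ge\beta(t)$ with the exact constant, for \emph{every} $\delta>0$. Using only $N_{p,t}(a)\le N_t(a)$, the argument above yields a per-player lower bound of the form $N_{p,t}(a)\gtrsim\frac{(1-\delta)\ln t}{(1+(M-1)\alpha)D_{\KL}}$, which is a factor $M$ short of $\beta(t)$; recovering the missing factor requires the identity $N_t(a)=\sum_q N_{q,t}(a)-(M-1)N_{\ell(t)}(a)$ and, crucially, the fact that the players pull arm $a$ nearly equally often between two consecutive communication rounds --- equivalently, that the count predictions $N'_{q,t}(a)$ deviate from the true $N_t(a)$ by only $o(\ln t)$. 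This is precisely the ``errors in the count predictions are negligible'' claim alluded to in the text, and it is where the bulk of the work lies: a mere $O(\ln t)$ prediction error would establish the bound only for $\delta$ above some threshold, so one must show the error is genuinely lower order, presumably by a concentration argument exploiting that all players run the same policy from a common state at each communication round, combined with a crude a priori sublinear upper bound on $N_t(a)$ proved independently of \theoremref{main} (to avoid circularity) and an induction over the communication rounds to absorb the $N_{\ell(t)}(a)$ terms.
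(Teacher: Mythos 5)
There is a genuine gap, and you have in fact located it yourself: the passage from the global event $\{N_t(a)<\beta(t)\}$ to a statement about a \emph{single} player's index carrying the sharp constant. The fix you sketch --- proving that the count predictions $N'_{q,t}(a)$ track the true count to within $o(\ln t)$, via concentration plus an induction over communication rounds --- is not what the paper does, and it is unlikely to work: between two communication rounds the players act on independent reward streams, so their individual pull counts of arm $a$ can genuinely differ by $\Theta(\ln t)$, which is exactly the order you cannot afford to lose, and no concentration mechanism forces them together. The paper's key idea is much cheaper. Claim~\ref{claimforgreater}, proved by summing the definition of $N'$ and using $\sum_p\left(N_{p,t}(a)-N_{\ell(t)}(a)\right)\le N_t(a)-N_{\ell(t)}(a)$, gives $\sum_p N'_{p,t}(a)\le M\cdot N_t(a)$, so the player $p_t$ with the \emph{minimal} prediction at round $t$ satisfies $N'_{p_t,t}(a)\le N_t(a)$. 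Hence $\{N_t(a)<\beta(t)\}$ directly forces $N'_{p_t,t}(a)<\beta(t)$ for that (round-dependent) player, and since $\beta(t)$ is exactly $(1-\delta)\F(t)\di D_{\KL}(\B(\mu_a)||\B(\mu^*))$ by the very definition of $\F$ in DKLUCB, that player's index $B^+_{p_t,t}(a)$ stays above $\mu^*$ up to a concentration event --- no accuracy of the predictions is ever needed. Because the identity of $p_t$ changes with $t$, the paper then applies a double pigeonhole: take $\Gamma_n$ to be the player chosen most often (hence in at least $\Phi_n/M$ rounds), and within those rounds the arm $\Upsilon_n$ that $\Gamma_n$ pulls most often (hence at least $\Phi_n/(MK)$ times), so the usual most-pulled-arm argument applies to the triple $(\Gamma_n,\Upsilon_n,\Psi_n)$. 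Your version, which fixes one player $p$ throughout and can only invoke $N_{p,t}(a)\le N_t(a)$, lands a factor of $M$ short, as you note.

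A secondary divergence: you reduce the infinite intersection to summability of $\Pr(N_t(a)<\beta(t))$ via a union bound. The paper instead proves the bound along an arbitrary random time $\Phi_n\cinp\infty$ (Lemma~\ref{lem: greater_re}), which is equivalent to the infinite-intersection form but only requires these probabilities to tend to $0$, not to be summable. Summability is doubtful in your scheme: in the regime where $N_{p,\Psi}(a)=\Theta(\ln t)$, Hoeffding yields only $t^{-c}$ with a constant $c$ depending on $\epsilon$ and hence on $\delta$, which need not exceed $1$ for small $\delta$.
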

The idea of the proof is similar to that of \lemmaref{lower1}, but there are many new ingredients. First we require the following claim to relate the count prediction $N'_{p, t}(a)$ to the true count $N_t(a)$.
\begin{claim}\label{claimforgreater}
    For every $a$, $t$, and constant $c$, if $N_{t}(a) < c$, then there exists $p$ such that $N'_{p, t}(a) < c$.
\end{claim}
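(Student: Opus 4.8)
\proofof{\claimref{claimforgreater}} \emph{(proposal).}

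\medskip

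The plan is to prove the statement directly by exhibiting an explicit witness: the player that has pulled arm $a$ the fewest times since the most recent communication round. First I would fix the bookkeeping. Write $\ell \coloneqq \ell(t)$ for the last communication round at or before $t$ (or $0$ if there is none). Because a communication round makes the pooled history common knowledge, $N_{p,\ell}(a) = N_{\ell}(a)$ for every player $p$, and in rounds $\ell+1,\dots,t$ player $p$ only adds its own pulls to its view of arm $a$. Hence, setting $n_p \coloneqq N_{p,t}(a) - N_{\ell}(a) \ge 0$ --- the number of times player $p$ pulled $a$ after round $\ell$ --- we obtain the identity $N_t(a) = N_{\ell}(a) + \sum_{p=1}^M n_p$.

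Next I would let $p^\star \in [M]$ be a player that minimizes $n_p$ (well defined since $M$ is finite), so that $n_{p^\star} \le \frac1M\sum_{p=1}^M n_p$. Now I plug into the definition of the count prediction. Since $\min\bigl(n_{p^\star}, u_t(a)\bigr) \le n_{p^\star}$ irrespective of the value of $u_t(a)$, and $N_{p^\star,t}(a) = N_\ell(a) + n_{p^\star}$,
\[
N'_{p^\star, t}(a) = N_{p^\star,t}(a) + (M-1)\min\bigl(N_{p^\star,t}(a) - N_\ell(a),\, u_t(a)\bigr) \le N_\ell(a) + n_{p^\star} + (M-1) n_{p^\star} = N_\ell(a) + M\, n_{p^\star}.
\]
Combining with $M\, n_{p^\star} \le \sum_{p=1}^M n_p$ gives $N'_{p^\star,t}(a) \le N_\ell(a) + \sum_{p=1}^M n_p = N_t(a)$, and since $N_t(a) < c$ by hypothesis, $N'_{p^\star,t}(a) < c$, which is exactly what the claim asserts.

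There is no genuinely hard step here: the claim reduces to the observation that the least-active player $p^\star$ witnesses the worst case, because for that player the inflation term $(M-1)\min(\cdot)$ is dominated by $(M-1)n_{p^\star}$, and $M\, n_{p^\star}$ can never exceed the total post-communication increment $\sum_p n_p$. The only points that need any care are the accounting identity $N_{p,t}(a) = N_\ell(a) + n_p$ --- in particular that the communication round at $\ell$ ensures no term is lost or double counted --- and the (harmless) observation that the cap $u_t(a)$ is irrelevant to this direction of the inequality, so the claim holds for every admissible value of $\alpha(\C)$. $\Box$

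\let\qed\relax
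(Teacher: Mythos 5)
Your proposal is correct and is essentially the paper's own argument: the paper proves $\sum_{p=1}^M N'_{p,t}(a) \leq M\cdot N_t(a)$ by dropping the $\min(\cdot,u_t(a))$ cap and using $\sum_p \bigl(N_{p,t}(a)-N_{\ell(t)}(a)\bigr) = N_t(a)-N_{\ell(t)}(a)$, then invokes the pigeonhole principle, whereas you make the pigeonhole witness explicit as the player with the fewest post-communication pulls of $a$. The two proofs rest on exactly the same two facts, so this is a presentational difference only.
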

\begin{proof}[Proof sketch of \lemmaref{lower2}]
    Let $\xi$ be a shorthand for the leading constant before $\ln(t)$. It suffices to prove $\lim_{T\to\infty}\Pr(\bigcup_{t\geq T}\left\{N_{t}(a) \leq (1 - \delta)\xi\ln(t)\right\}) = 0. $ For every round $t$, we define a random player $p_t$ such that $N'_{p_t, t}(a) \leq N'_{p', t}(a)$ for every $p'\in[M]$. We say player $p_t$ is \textit{chosen} for round $t$. We also define random players $\Gamma_t$ such that $\Gamma_t$ is the player who is chosen most frequently before round $t$. Let $\Xi_t$ be a random set including the rounds (before round $t$) in which player $\Gamma_t$ is chosen. Let random variable $\Upsilon_t$ be the arm that is pulled most frequently by $\Gamma_t$ in those rounds in $\Xi_t$, and let $\Psi_t$ be the last round in $\Xi_t$ that $\Gamma_t$ pulls $\Upsilon_t$. By \claimref{claimforgreater},
    \begin{align*}
        \Big\{N_{t}(a) \leq (1 - \delta)\xi\ln(t)\Big\}&\subseteq\Big\{N'_{\Gamma_t, \Psi_t}(a) \leq (1 - \delta)\xi\ln(t)\wedge B^+_{\Gamma_t, \Psi_t}(\Upsilon_t)\geq B^+_{\Gamma_t, \Psi_t}(a)\Big\}.
    \end{align*}
    By definitions of $\Upsilon_T$, $\Gamma_T$, and $\Psi_T$, we have $\Psi_T \geq  (t - 1) \di (MK)$ and $N_{\Gamma_t, \Psi_t}(\Upsilon_t)\geq (t - 1) \di (MK)$. 
    The rest is close to the proof of \lemmaref{lower1}, requiring more careful dealing with the infinite union.
\end{proof}
The following claim explains why we need a slightly larger confidence bound in DKLUCB.
\begin{claim}\label{rel}
    For every player $p$, arm $a$, and $t \geq 0$, $N' _{p, t}(a) \leq M \di\left(1 + (M - 1)\alpha(\C)\right)\cdot N_{p, t}(a).$
\end{claim}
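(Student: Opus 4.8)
The plan is to fix a player $p$, an arm $a$, and a round $t$, and reduce the claim to a one–variable inequality. Write $\alpha = \alpha(\C)$, $n = N_{p,t}(a)$, $m = N_{\ell(t)}(a)$, and $u = u_t(a) = \frac{m}{M}\bigl(\alpha^{-1} - 1\bigr)$, so that $N'_{p,t}(a) = n + (M-1)\min(n - m,\, u)$. Two structural facts are needed at the outset: first, $n \ge m \ge 0$, because $\ell(t)$ is the last communication round before $t$ (or $0$), after which player $p$ can only add its own pulls of $a$; second, $u \ge 0$, because $0 \le \alpha \le 1$ forces $\alpha^{-1} - 1 \ge 0$ (when $\alpha = 0$ one simply reads $u = +\infty$, but it will not matter, since $\min(n-m,u) \le n - m$ unconditionally). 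The degenerate cases are then immediate: if $M = 1$ both sides equal $n$; if $\alpha = 1$ then $u = 0$ and $N'_{p,t}(a) = n = \frac{M}{1+(M-1)}n$; and from $\min(n-m,u) \le n-m \le n$ one always gets $N'_{p,t}(a) \le n + (M-1)n = Mn$, which is exactly the claim when $\alpha = 0$. So it remains to treat $M \ge 2$ and $0 < \alpha < 1$.

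For the main case, the idea is to regard $N'_{p,t}(a)$ as the value at $n$ of the function $\phi(n) := n + (M-1)\min(n - m, u)$ on the domain $[m, \infty)$, with $m$ and $u$ held fixed, and to compare it to the linear function $\psi(n) := \frac{M}{1+(M-1)\alpha}\, n$. The function $\phi$ is piecewise linear and concave: on $[m,\, m+u]$ it has slope $M$, and on $[m+u,\,\infty)$ it has slope $1$; meanwhile the slope $\frac{M}{1+(M-1)\alpha}$ of $\psi$ lies in $(1,M)$ for $\alpha \in (0,1)$. Consequently $\phi - \psi$ is piecewise linear with a single breakpoint at $n^\star := m + u$, strictly increasing on $[m, n^\star]$ (slope $M - \frac{M}{1+(M-1)\alpha} > 0$) and nonincreasing on $[n^\star, \infty)$ (slope $1 - \frac{M}{1+(M-1)\alpha} \le 0$). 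Hence $\phi - \psi$ attains its maximum over $[m,\infty)$ at $n^\star$, so it suffices to show $\phi(n^\star) \le \psi(n^\star)$.

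This last step is a short computation. From $n^\star = m + \frac{m}{M}(\alpha^{-1}-1)$ one gets $n^\star = \frac{m\,(1+(M-1)\alpha)}{M\alpha}$, hence $\psi(n^\star) = \frac{M}{1+(M-1)\alpha}\cdot\frac{m(1+(M-1)\alpha)}{M\alpha} = \frac{m}{\alpha}$, while $\phi(n^\star) = n^\star + (M-1)u = M n^\star - (M-1)m = \frac{m}{\alpha}$ as well. So in fact $\phi(n^\star) = \psi(n^\star)$, which not only closes the argument but also shows that the constant $\frac{M}{1+(M-1)\alpha}$ in the claim cannot be lowered. (If one prefers, the same conclusion follows by splitting directly on whether $\min(n-m,u)$ equals $n-m$ or equals $u$ and clearing denominators; in each branch the required inequality turns out to be precisely the inequality that defines that branch.)

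There is no genuinely hard step here; the work is bookkeeping. The points that need care are: justifying $N_{p,t}(a) \ge N_{\ell(t)}(a)$ so that the domain of $\phi$ really starts at $m$ (otherwise $\phi$ could have slope $M$ over a region where it exceeds $\psi$); isolating the degenerate values $\alpha \in \{0,1\}$ and $M = 1$ up front; and getting the sign of the slope of $\phi - \psi$ right on each of the two pieces, which is what pins the worst case to $n = n^\star$ rather than to some interior point. The identity $\phi(n^\star) = \psi(n^\star) = m/\alpha$ is the crux and deserves a careful recheck, since the tightness of the bound hinges on it.
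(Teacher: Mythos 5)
Your proof is correct and takes essentially the same route as the paper's: both arguments treat $N_{p,t}(a)$ as a free variable and locate the worst case at the breakpoint $N_{\ell(t)}(a)+u$ of the $\min$, the paper by maximizing the ratio $N'_{p,t}(a)/N_{p,t}(a)$ and you by maximizing the difference against the comparison line $\psi$, which is the same computation. Your closing identity $\phi(n^\star)=\psi(n^\star)=m/\alpha$ matches the paper's evaluation $1+(M-1)\,u/(m+u)=M/(1+(M-1)\alpha)$, and your explicit handling of the degenerate cases $\alpha\in\{0,1\}$ and $N_{p,t}(a)=0$ mirrors the paper's.
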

In the proof of \theoremref{onethm}, choosing $\Lambda_{p, t}$ is relatively easy. However, \theoremref{main} requires choosing $\Lambda_{p, T}$ more cleverly. And the overlapping histories cannot be simply ignored like in \theoremref{onethm}.
\begin{proof}[Proof sketch of \theoremref{main}]
    Let $\delta > 0$ be an arbitrarily small number and $\xi$ be a shorthand for the leading constant before $\ln(T)$. Define $\Upsilon_T$ to be the largest positive integer such that $C_{\Upsilon_T} \leq T$ and $N_{C_{\Upsilon_T}}(a) < (1 - \delta)\xi\ln(T)$. Define random variables $\Lambda_{p, T}$ to be the last round such that $C_{\Upsilon_T} \leq\Lambda_{p, T}< C_{\Upsilon_T + 1}$ and $ N' _{p, \Lambda_{p, T}}(a) < (1 - \delta)\xi\ln(\min(C_{\Upsilon_T + 1}, T))$, or $C_{\Upsilon_T}$ if there is no such round. Using \claimref{rel}, a standard bandit argument would show that for each player $p$ the expected number of pulls of arm $a$ after $\Lambda_{p, t}$ is no more than $4\delta\xi\ln(T) + o(\ln(T))$, which is negligible. Using a decomposition similar to \eqref{similar}, by \lemmaref{lower2}, the rest expected number of pulls is
    \begin{align*}
        &M\cdot\E[N_{1, \Lambda_{1, T}}(a)]  - (M - 1)\E[N_{\Upsilon_T}(a)] = \E[N_{1, \Lambda_{1, T}}(a) + (M - 1)(N_{1, \Lambda_{1, T}}(a) - N_{C_{\Upsilon_T}}(a))] \\
        &\leq\E[N_{1, \Lambda_{1, T}}(a) + (M - 1)(N_{1, \Lambda_{1, T}}(a) - N_{C_{\Upsilon_T}}(a)) \given[\big] N_{C_{\Upsilon_T}}(a) \geq (1 - \delta)\xi\ln(T)] + o(\ln(T)).
    \end{align*}
By the definition of $N'_{p, t}(a)$ and $\Lambda_{p, T}$, we have \[N' _{1, \Lambda_{1, T}}(a)  = N_{1, \Lambda_{1, T}}(a) + (M - 1)\cdot\min\left(N_{1, \Lambda_{1, T}}(a) - N_{C_{\Upsilon_T}}(a), u_{\Lambda_{1, T}}(a)\right) < (1 - \delta)\xi\ln(T),\] this plus the definition of $\alpha(\C)$ will together imply that, given $N_{C_{\Upsilon_T}}(a) \geq (1 - \delta)\xi\ln(T)$, \begin{align*}\hspace{5em}N_{1, \Lambda_{1, T}}(a) + (M - 1)(N_{1, \Lambda_{1, T}}(a) - N_{C_{\Upsilon_T}}(a))  < \xi\ln(T) + o(\ln(T)).\hspace{5em}\Box\end{align*} \let\qed\relax
\end{proof}

\section{Discussion}
\label{beyond}

\textbf{Finite-time analysis.} We write all the analyses in a way such that they can be translated into finite-time analyses and produce finite-time results (e.g., we avoid using Theorem 2 of \cite{LR} to get \lemmaref{lower1}). However, our results heavily rely on \lemmaref{lower1} and \lemmaref{lower2}, where the technique we used works fine to show the probability converges to $1$, yet will produce horrible constant if translated into a finite-time version. Specifically, every time we use pigeonhole principle (e.g., ``the arm that is pulled most frequently"), we will add a constant $K$ or $M$, which we guess is not necessary in the finite-time bound. Our hypothesis is that there are more elegant ways to prove \lemmaref{lower1} and \lemmaref{lower2}, which will provide tighter constants. We hope we can solve this problem in future works.

\textbf{Difference between bandits and distributed bandits.} The typical way to prove an upper bound for traditional bandits is to show that once the upper bound is reached, later suboptimal decisions are negligible. However, this does not work for distributed bandits. The main difficulty is that even if the upper bound is reached \textit{globally} ($N_t(a)$ has reached the upper bound), it may not be reached \textit{locally} ($N_{p, t}(a)$ may be way less than the upper bound). That is why we need the density of communication set, \lemmaref{lower1}, and \lemmaref{lower2} to make sure the desired upper bound is reached locally.

\textbf{Beyond UCB and KL-UCB.} Most of the results and analyses in this paper are very specific, they are either restricted to the KL-UCB adaptation (or the UCB adaptation, after some modification or even simplification, as we have mentioned), or to a generalization of the KL-UCB adaptation (i.e., DKLUCB). However, results similar to \theoremref{onethm}, \theoremref{main0}, and \theoremref{main} can be reproduced for any UCB-like oblivious bandit policies, as long as they behave normally in the sense that properties similar to \lemmaref{lower1} and \lemmaref{lower2} hold. These results can also be extended to other oblivious bandit policies that are not based on confidence bound such as Thompson sampling \cite{T}. The intuition behind this is, all the bandit policies behave similarly, and possibly indistinguishable by observing the actions they take. This inspires us to develop a universal framework to prove the performance of bandit policies under distributed settings. However, this framework requires much more insights and we would like to leave it as future work.
\bibliography{ref}

\begin{thebibliography}{15}
\providecommand{\natexlab}[1]{#1}
\providecommand{\url}[1]{\texttt{#1}}
\expandafter\ifx\csname urlstyle\endcsname\relax
  \providecommand{\doi}[1]{doi: #1}\else
  \providecommand{\doi}{doi: \begingroup \urlstyle{rm}\Url}\fi

\bibitem[Auer et~al.(2002)Auer, Cesa-Bianchi, and Fischer]{ACF}
P.~Auer, N.~Cesa-Bianchi, and P.~Fischer.
\newblock Finite time analysis of the multiarmed bandit problem.
\newblock \emph{Machine Learning}, 47\penalty0 (2-3):\penalty0 235--256, 2002.

\bibitem[Auer et~al.(2009)Auer, Jaksch, and Ortner]{AJO}
P.~Auer, T.~Jaksch, and R.~Ortner.
\newblock Near-optimal regret bounds for reinforcement learning.
\newblock In \emph{Advances in Neural Information Processing Systems (NIPS)},
  2009.

\bibitem[Awerbuch and Kleinberg(2008)]{AK}
B.~Awerbuch and R.~Kleinberg.
\newblock Competitive collaborative learning.
\newblock \emph{J. Comput. Syst. Sci.}, 74\penalty0 (8):\penalty0 1271--1288,
  2008.

\bibitem[Burnetas and Katehakis(1996)]{BK}
A.~Burnetas and M.~Katehakis.
\newblock Optimal adaptive policies for sequential allocation problems.
\newblock \emph{Advances in Applied Mathematics}, 17\penalty0 (2):\penalty0
  122--142, 1996.

\bibitem[Capp\'e et~al.(2013)Capp\'e, Garivier, Maillard, Munos, and
  Stoltz]{CGMMS}
O.~Capp\'e, A.~Garivier, O.-A. Maillard, R.~Munos, and G.~Stoltz.
\newblock {K}ullback-{L}eibler upper confidence bounds for optimal sequential
  allocation.
\newblock \emph{Annals of Statistics}, 41\penalty0 (3):\penalty0 1516--1541,
  2013.

\bibitem[Garivier and Capp\'e(2011)]{GC}
A.~Garivier and O.~Capp\'e.
\newblock The {KL-UCB} algorithm for bounded stochastic bandits and beyond.
\newblock In \emph{Proc. of Conference on Learning Theory (COLT)}, 2011.

\bibitem[Hillel et~al.(2013)Hillel, Karnin, Koren, Lempel, and Somekh]{HKKLS}
E.~Hillel, Z.~Karnin, T.~Koren, R.~Lempel, and O.~Somekh.
\newblock Distributed exploration in multi-armed bandits.
\newblock In \emph{Advances in Neural Information Processing Systems (NIPS)},
  2013.

\bibitem[Honda and Takemura(2010a)]{HT}
J.~Honda and A.~Takemura.
\newblock An asymptotically optimal bandit algorithm for bounded support
  models.
\newblock In \emph{Proc. of Conference on Learning Theory (COLT)}, 2010a.

\bibitem[Joulani et~al.(2013)Joulani, Gy{\"o}rgy, and Szepesv\'ari]{JGS}
P.~Joulani, A.~Gy{\"o}rgy, and C.~Szepesv\'ari.
\newblock Online learning under delayed feedback.
\newblock In \emph{Proc. of International Conference on Machine Learning
  (ICML)}, 2013.

\bibitem[Kaufmann et~al.(2012)Kaufmann, Korda, and Munos]{KKM}
E.~Kaufmann, N.~Korda, and R.~Munos.
\newblock Thompson sampling: An asymptotically optimal finite time analysis.
\newblock \emph{Algorithmic Learning Theory}, 7568:\penalty0 199--213, 2012.

\bibitem[Lai and Robbins(1985)]{LR}
T.~Lai and H.~Robbins.
\newblock Asymptotically efficient adaptive allocation rules.
\newblock \emph{Advances in Applied Mathematics}, 6\penalty0 (1):\penalty0
  4--22, 1985.

\bibitem[Liu and Zhao(2010)]{LZ}
K.~Liu and Q.~Zhao.
\newblock Distributed learning in multi-armed bandit with multiple players.
\newblock \emph{IEEE Transactions on Signal Procesing}, 58\penalty0
  (11):\penalty0 5667--5681, Nov. 2010.

\bibitem[Maillard et~al.(2011)Maillard, Munos, and Stoltz]{MMS}
O.-A. Maillard, R.~Munos, and G.~Stoltz.
\newblock A finite-time analysis of multi-armed bandits problems with
  kullback-leibler divergences.
\newblock In \emph{Proc. of Conference on Learning Theory (COLT)}, 2011.

\bibitem[Sz{\"{o}}r{\'{e}}nyi et~al.(2013)Sz{\"{o}}r{\'{e}}nyi, Busa{-}Fekete,
  Heged{\"{u}}s, Orm{\'{a}}ndi, Jelasity, and K{\'{e}}gl]{SBHOJK}
B.~Sz{\"{o}}r{\'{e}}nyi, R.~Busa{-}Fekete, I.~Heged{\"{u}}s, R.~Orm{\'{a}}ndi,
  M.~Jelasity, and B.~K{\'{e}}gl.
\newblock Gossip-based distributed stochastic bandit algorithms.
\newblock In \emph{Proc. of International Conference on Machine Learning
  (ICML)}, 2013.

\bibitem[Thompson(1933)]{T}
W.~R. Thompson.
\newblock On the likelihood that one unknown probability exceeds another in
  view of the evidence of two samples.
\newblock \emph{Biometrika}, 25\penalty0 (3-4):\penalty0 285--294, 1933.

\end{thebibliography}
\begin{appendices}
\section{Preliminaries}
In this section, we review some notations and introduce new ones that will be used in later proofs. We denote the set of all the positive integers by $\N^+$ and the set of all the positive real numbers by $\R^+$. We denote the set $\{1, 2, 3, \cdots, K\}$ by $[K]$. We denote by $\abs{A}$ the cardinality of a set $A$. The mean of distribution $\nu$ is denoted by $\E[\nu]$. We use $\wedge$ to represent logical conjunction (AND) and use $\vee$ to represent logical disjunction (OR). ``$\wedge$'' has higher precedence than ``$\vee$''. Both ``$\wedge$'' and ``$\vee$'' have higher precedence than other connectives such as ``$=$'' or ``$\prec$''.

\subsection{Distributed Bandit Process}
The communication set is an infinite set that contains the indices of the communication rounds and is always denoted by $\C$. The elements of $\C$ are denoted by $C_1, C_2, C_3, \dotsc$ in the ascending order. 
We define the function $\ell: \N^+\to\N$ by 
\[\ell(t) \coloneqq \max\{u \leq t: u \in \C \vee u = 0\}.
\] 
That is, $\ell(t)$ is the last communication round in the first $t$ rounds, and it takes value $0$ if there is no such a round. We now define a strict partial order $\prec$ on all the rewards $\mathcal{X} = \{X_{u, v}: u\geq 1, v\geq 1\}$. Concretely, 
$X_{u_1, v_1}\prec X_{u_2, v_2}$ if and only if 
\[ 
  v_1 \leq \ell(v_2) \vee u_1 = u_2 \wedge v_1 < v_2.
\]  
That is, $X_{u_1, v_1}\prec X_{u_2, v_2}$ if and only if anyone who has collected reward $X_{u_2, v_2}$ must also have collected reward $X_{u_1, v_1}$. For each player $p$, we define $\prec_p^*$ to be a linear extension of $\prec$ such that $X_{u_1, v_1} \prec_p^* X_{u_2, v_2}$ if and only if
\[
    v_1 \leq \ell(v_2) \vee \ell(v_1) = \ell(v_2) \wedge (u_1 = p \vee u_1 < u_2)
\]
Note the subscript $p$ means this linear extension is defined differently for each player. For player $p$, this linear extension gives an order on the rewards he receives. It can be checked that both $\prec$ and $\prec^*_p$ are legitimate definitions. We also define 
\[\mathcal{X}_{p, t}(a) \coloneqq \{X_{u, v}: X_{u, v} \prec X_{p, t + 1} \wedge A_{u, v} = a\} = \{X_{u, v}: X_{u, v} \prec_p^* X_{p, t + 1} \wedge A_{u, v} = a\}
\] 
and $N_{p, t}(a) \coloneqq \abs{\mathcal{X}_{p, t}(a)}$. That is, $\X_{p, t}(a)$ is the set of rewards from arm $a$ that player $p$ has collected at the end of round $t$, and $N_{p, t}(a)$ is the cardinality of this set.

For each player $p$ and each arm $a$, we define a sequence of random variables $(X^{p, a}_{i})_{i \geq 1}$ where $X^{p, a}_i$ is the $i_{th}$ element in the set \[\left\{X_{u, v}: X_{u, v} \in \mathcal{X} \wedge A_{u, v} = a\right\}\] with respect to the order $\prec_p^*$. We also define \[\hat{\mu}_{p, s}(a) = \left(\sum_{i = 1}^s X^{p, a}_i\right) \di s.\] By this definition we can see that $\tilde{\mu}_{p, t}(a) = \hat{\mu}_{p, N_{p, t}(a)}(a)$. 

Recall that the random variable $\tilde{\mu}_{p, t}(a)$ is the empirical mean of those rewards generated by arm $a$ that are \textit{known} to player $p$ at the end of round $t$. There are $N_{p, t}(a)$ of these rewards, each of them obtained either by player $p$ pulling arm $a$ himself, or via communication (i.e., from other players). Thus we need a lemma to ensure that the additional data obtained from other players are indistinguishable from the data collected by players themselves. In other words, we shall prove that $\hat{\mu}_{p, s}(a)$ is the empirical mean of $s$ mutually independent random variables with the same distribution $\nu_a$. 
\begin{lemma}\label{lemma: binom}
    For every player $p$, every arm $a$, and every $s\in \N^+$, \[s\cdot\hat{\mu}_{p, s}(a)\sim B(s, \mu_a),\] where $B(\cdot, \cdot)$ is a binomial distribution.
\end{lemma}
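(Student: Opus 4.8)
The plan is to realize the whole distributed process on a canonical probability space built from independent ``reward tapes'', arranged so that for the fixed player $p$ the arm-$a$ tape is consumed exactly in the order $\prec_p^*$; then $X^{p,a}_i$ will be, by construction, the $i$-th entry of an i.i.d.\ $\B(\mu_a)$ tape, which yields the claim immediately. First I would note that $\prec_p^*$ is a \emph{fixed} total order on the set of all reward slots $\{X_{u,v}:u\in[M],\,v\ge 1\}$, depending only on $\C$ through $\ell$, not on any arm choices or rewards. Since $\C$ is infinite, the rounds split into finite consecutive epochs (the rounds strictly after one communication round up to and including the next), each contributing finitely many slots, and $\prec_p^*$ lists the slots epoch by epoch; hence $\prec_p^*$ has order type $\omega$ and I may enumerate the slots as $S_1\prec_p^* S_2\prec_p^*\cdots$, every slot having only finitely many predecessors.

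Next I would introduce mutually independent random variables $(Z^b_i)_{b\in[K],\,i\ge1}$ with $Z^b_i\sim\B(\mu_b)$ (plus one independent seed per slot if the policy is randomized) and build the process inductively along this enumeration. With arms and rewards already assigned to $S_1,\dots,S_{n-1}$, write $S_n=X_{u,v}$; the data available to player $u$ at round $v$ is, by definition, the family $\bigl(\mathcal{X}_{u,v-1}(b)\bigr)_b$ of already-assigned rewards whose slots are $\prec$-predecessors of $X_{u,v}$, split by arm, and because $\prec_p^*$ extends $\prec$ every such slot lies among $S_1,\dots,S_{n-1}$, so $A_{u,v}$ is a well-defined function of the past. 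Then set $X_{u,v}:=Z^{A_{u,v}}_{c}$, where $c$ is one plus the number of $n'<n$ with $A(S_{n'})=A_{u,v}$. Conditionally on the past, $X_{u,v}$ is a fresh independent $\B(\mu_{A_{u,v}})$ draw (the entry $Z^{A_{u,v}}_c$ has not been used and the tapes are mutually independent), so the joint law of $(A_{u,v},X_{u,v})_{u,v}$ on this space equals that of the distributed bandit process, and it suffices to prove the lemma here. But on this space the arm-$a$ slots, read in $\prec_p^*$ order, receive exactly $Z^a_1,Z^a_2,\dots$ in that order, so $X^{p,a}_i=Z^a_i$ for every $i$; therefore $(X^{p,a}_i)_i$ is i.i.d.\ $\B(\mu_a)$ and $s\,\hat\mu_{p,s}(a)=\sum_{i=1}^{s}X^{p,a}_i=\sum_{i=1}^{s}Z^a_i\sim B(s,\mu_a)$.

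The hard part will be the step in which $\prec_p^*$ is used as a linear extension of $\prec$: the assertion that revealing rewards in $\prec_p^*$ order keeps the process \emph{adapted}, i.e.\ that at the moment slot $X_{u,v}$ is revealed every reward player $u$ is entitled to use at round $v$ has already been revealed. This is precisely the content of the clause ``$X_{u_1,v_1}\prec X_{u_2,v_2}$ iff anyone who has collected $X_{u_2,v_2}$ must also have collected $X_{u_1,v_1}$'' and is checked directly from the displayed formulas for $\prec$ and $\prec_p^*$. It is instructive to see why $\prec_p^*$, and not the chronological order, must be used: chronologically the arm-$a$ rewards form only a permutation of the tape \emph{within each epoch}, and that permutation can depend on the epoch's rewards, whereas a value-dependent permutation of an i.i.d.\ block need not be i.i.d.; ordering by $\prec_p^*$ is exactly what sidesteps this, and once adaptedness is granted the remainder is the routine canonical-bandit-model verification.
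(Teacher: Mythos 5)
Your proposal is correct, and it reaches the conclusion by a genuinely different route than the paper. The paper works on the original probability space: it introduces the random indices $(p_i,t_i)$ of the slot carrying $X^{p,a}_i$, shows $\Pr(X_{p_i,t_i}\leq\lambda\mid p_i=u,\ t_i=v)=F(\lambda)$ because the event $\{p_i=u,\ t_i=v\}$ is determined before the reward $X_{u,v}$ is drawn, and then establishes mutual independence by induction on $s$, factoring the joint CDF one coordinate at a time. You instead build the whole process on a canonical space of pre-drawn i.i.d.\ reward tapes consumed in $\prec_p^*$ order, argue equality in law with the original process, and read off $X^{p,a}_i=Z^a_i$ so that the i.i.d.\ claim is true by construction. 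Both arguments hinge on exactly the same structural fact, which you correctly isolate as the crux: $\prec_p^*$ extends $\prec$, so the identity of the slot receiving the $i$-th arm-$a$ reward (in $\prec_p^*$ order) is measurable with respect to information that excludes that reward's value. What your route buys is transparency about why the linear extension, rather than chronological order, is the right enumeration (a value-dependent within-epoch permutation of an i.i.d.\ block need not be i.i.d.), plus the minor bonus that $X^{p,a}_i$ is defined for all $i$ even on the event that arm $a$ is pulled only finitely often; what it costs is the one-time overhead of verifying that the tape construction reproduces the law of the process, which the paper's in-place conditioning avoids. One small caveat: your order-type-$\omega$ argument uses that $\C$ is infinite with finite gaps, which matches the appendix's standing assumption but should be noted (for a finite horizon or finite $\C$ the slot set relevant to the lemma is finite anyway, so nothing is lost).
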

\begin{proof}
    Fix an arm $a$ and a player $p$. We denote by $F(x)$ the cumulative distribution function corresponding to $\nu_a$. In addition, we define random variables $p_i$ and $t_i$ such that $p_i$ is the player who first receives the reward $X^{p, a}_i$ and $t_i$ is the round this receiving takes place. Clearly we have $X^{p, a}_i = X_{p_i, t_i}$, and therefore \[s\cdot\hat{\mu}_{p, s}(a) = \sum_{i = 1}^{s}X_{p_i, t_i}.\] Hence, now it suffices to show that $X_{p_i, t_i}$ are mutually independent random variables with the same cumulative distribution function $F(x)$.

First we will prove that for every $i$, $X_{p_i, t_i}$ has the cumulative distribution function $F(x)$. That is, $\Pr(X_{p_i, t_i} \leq \lambda) = F(\lambda)$ for any $\lambda \in \R$. Note that
    \begin{align}
        \Pr(X_{p_i, t_i} \leq \lambda) &= \sum_{u, v}\Pr\left(X_{u, v} \leq \lambda \wedge p_i = u \wedge t_i = v\right)\nonumber\\
                                         &= \sum_{u, v: \Pr\left(p_i = u\wedge t_i = v\right) > 0}\Pr\left(X_{u, v} \leq \lambda \given p_i = u \wedge t_i = v\right)\Pr\left(p_i = u \wedge t_i = v\right)\nonumber\\
                                         &= \sum_{u, v: \Pr\left(p_i = u\wedge t_i = v\right) > 0}F(\lambda)\Pr\left(p_i = u \wedge t_i = v\right) =F(\lambda)\label{basecase}
    \end{align}
    where the second equality holds because given $p_i = u$ and $t_i = v$, the reward $X_{u, v}$ is generated by arm $a$ independently.

    Then we will prove mutual independence by induction. It suffices to show that for every $s$ real numbers $\lambda_1, \lambda_2, \cdots, \lambda_s$ and $s$ integers $l_1 < l_2 < \cdots < l_s$ we have \[\Pr\left(\bigwedge_{i = 1}^s (X_{p_{l_i}, t_{l_i}} \leq \lambda_i)\right) = \prod_{i = 1}^sF(\lambda_i).\] 
    The base case where $s = 1$ has been proved in \eqref{basecase}. Suppose we have proved that mutual independence holds for $s - 1$, and we also define the event $\mathcal{E}_{s, u, v}$ by 
    \[
        \mathcal{E}_{s, u, v} \coloneqq \left\{\bigwedge_{i = 1}^{s - 1} (X_{p_{l_i}, t_{l_i}} \leq \lambda_i) \wedge p_{l_s} = u \wedge t_{l_s} = v\right\}.
    \]
    Then we have
    \begin{align*}
        \Pr\left(\bigwedge_{i = 1}^s (X_{p_{l_i}, t_{l_i}} \leq \lambda_i)\right) &=\sum_{u, v}\Pr\left(X_{u, v} \leq \lambda_s \wedge \mathcal{E}_{s, u, v}\right)\\
                                                                                  &=\sum_{u, v: \Pr(\mathcal{E}_{s, u, v}) > 0}\Pr\left(X_{u, v} \leq \lambda_s \given \mathcal{E}_{s, u, v}\right)\Pr\left(\mathcal{E}_{s, u, v}\right)\\
                                                                                  &=\sum_{u, v: \Pr(\mathcal{E}_{s, u, v}) > 0}F(\lambda_s)\Pr\left(\mathcal{E}_{s, u, v}\right)\\
                                                                                    &=F(\lambda_s)\prod_{i = 1}^{s - 1}\Pr\left(\bigwedge_{i = 1}^{s - 1} (X_{p_{l_i}, t_{l_i}} \leq \lambda_i)\right)\\ 
                                                                                    &=F(\lambda_s)\prod_{i = 1}^{s - 1}F(\lambda_i) \text{\ \ \ \ \ \ \ \ \ \ (by induction hypothesis)}\\                                                     
                                                                                    &=\prod_{i = 1}^sF(\lambda_i),
    \end{align*}
    where the third equality holds because given the event $\mathcal{E}_{s, u, v}$, the reward $X_{u, v}$ is generated by arm $a$ independently.
\end{proof}
Technically, this lemma is required whenever the Hoeffding's inequality is used to bound $\hat{\mu}_{p, s}(a)$. For simplicity, later proofs may use this lemma \textbf{without explicitly pointing it out}.
\subsection{Kullback-Leibler Divergences}

The Kullback-Leibler divergence (KL-divergence) from probability distribution $\nu_1$ to probability distribution $\nu_2$ is defined by
\[D_{\KL}(\nu_1||\nu_2) \coloneqq -\E_{\nu_1}\left[\ln\left(\frac{d\nu_2}{d\nu_1}\right)\right]. 
 \]
 Accordingly, we define \[D_{\inf}(\nu, a, \mathcal{P}) \coloneqq \inf_{\nu'\in\mathcal{P}:\E[\nu'] > a}D_{\KL}(\nu||\nu').\]
 Clearly, for two Bernoulli distribution $\nu_1$ and $\nu_2$ satisfying $\E[\nu_1] < \E[\nu_2]$, we have \[D_{\KL}(\nu_1||\nu_2) = \K_{\inf}(\nu_1, \E[\nu_2], \mathcal{B}),\] where $\mathcal{B}$ is the set of all the Bernoulli distributions.
Note that the parameter of a Bernoulli distribution usually takes value in the open interval $(0, 1)$. However, the empirical mean of Bernoulli trials can take value in the closed interval $[0, 1]$. Hence we define the extended Bernoulli distribution with parameter $p\in[0, 1]$ to be a distribution having probability mass $p$ on $1$ and $1 - p$ on $0$. 
We let 
\[\K(p, q) \coloneqq p\ln\left(\frac{p}{q}\right) + (1 - p)\ln\left(\frac{1 - p}{1 - q}\right)\]
be the KL-divergence from an extended Bernoulli distribution with parameter $p$ to another with parameter $q$, with conventions $0\cdot\ln(0) = 0$ and $\ln(0\di 0) = 0$. We also define the left-side truncated KL-divergence $\K'(p, q)$ as $0$ if $p > q$, or $\K(p, q)$ otherwise.
\subsection{Tools From Single-Player Bandits}
The original proof of the KL-UCB algorithm mainly relies on the following self normalized deviation bound, which we cannot avoid either.
\begin{lemma}[\cite{CGMMS}] 
    \label{lem: cite}
    Let $\hat{\mu}_s$ be the empirical mean of $s$ mutually independent Bernoulli random variables with the same mean $p$, then
    \begin{align*}
        \Pr\left(\bigcup_{s = 1}^{t}\left\{\hat{\mu}_{s} < p \wedge s\cdot\K\left(\hat{\mu}_{s}, p\right) \geq \epsilon\right\}\right)\leq e\lceil\epsilon\ln(t)\rceil e^{-\epsilon}.
    \end{align*}
\end{lemma}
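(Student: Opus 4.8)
The plan is to obtain this self-normalized deviation bound from the Cramér--Legendre duality of the Bernoulli divergence together with a geometric \emph{peeling} over the sample size; the only probabilistic input is the elementary exponential martingale and Doob's maximal inequality.

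\textbf{Variational representation.} Write $S_s = \sum_{i=1}^{s} X_i$, so $\hat{\mu}_s = S_s/s$, and let $\phi_p(\lambda) = \ln(1 - p + p e^{\lambda})$ be the cumulant generating function of a Bernoulli$(p)$ variable. Since $q \mapsto \K(q, p)$ is the convex conjugate of $\phi_p$ on $[0,1]$, one has $s\,\K(\hat{\mu}_s, p) = \sup_{\lambda}(\lambda S_s - s\phi_p(\lambda))$, with the supremum attained (possibly in the limit $\lambda \to -\infty$) at some $\lambda \le 0$ whenever $\hat{\mu}_s < p$. Writing $q_s$ for the solution in $[0,p)$ of $s\,\K(q_s, p) = \epsilon$ (and $q_s = 0$ if none exists), the event $\{\hat{\mu}_s < p \wedge s\,\K(\hat{\mu}_s, p) \ge \epsilon\}$ then forces $\hat{\mu}_s \le q_s$, because $\K(\cdot, p)$ is decreasing on $[0,p]$; moreover $q_s$ is nondecreasing in $s$. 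On the other hand, for each fixed $\lambda \le 0$ the process $M_s^{\lambda} := \exp(\lambda S_s - s\phi_p(\lambda))$ is a nonnegative martingale with $M_0^{\lambda} = 1$, so $\Pr(\sup_{s \ge 0} M_s^{\lambda} \ge x) \le 1/x$ by Doob's inequality. The obstruction is that the $\lambda$ produced above depends on $s$ and on the sample, so we cannot apply this with one fixed $\lambda$.

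\textbf{Peeling.} Partition $\{1,\dots,t\}$ into $D := \lceil \epsilon \ln t \rceil$ consecutive blocks $[s_i, s_{i+1})$ with geometrically growing endpoints of ratio $\rho = e^{1/\epsilon}$, so that $\rho^{D} \ge t$. On block $i$ (assume $q_{s_{i+1}} > 0$; the degenerate case is noted below) pick the single parameter $\lambda_i < 0$ with $\phi_p'(\lambda_i) = q_{s_{i+1}}$, so that $\lambda_i q_{s_{i+1}} - \phi_p(\lambda_i) = \K(q_{s_{i+1}}, p)$. For any $s \in [s_i, s_{i+1})$ on the bad event, using $\hat{\mu}_s \le q_s \le q_{s_{i+1}}$ and $\lambda_i < 0$,
\[
\lambda_i S_s - s\phi_p(\lambda_i) = s(\lambda_i \hat{\mu}_s - \phi_p(\lambda_i)) \ge s(\lambda_i q_{s_{i+1}} - \phi_p(\lambda_i)) = s\,\K(q_{s_{i+1}}, p) \ge s_i\,\K(q_{s_{i+1}}, p) = \frac{\epsilon}{\rho}.
\]
Since $e^{-u} \ge 1 - u$, we have $\epsilon/\rho = \epsilon e^{-1/\epsilon} \ge \epsilon - 1$, hence $M_s^{\lambda_i} \ge e^{\epsilon - 1}$ on the bad event inside block $i$; by the martingale bound this has probability at most $e^{1-\epsilon}$. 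A union bound over the $D$ blocks gives $\Pr(\cdot) \le D\,e^{1-\epsilon} = e\lceil \epsilon \ln t \rceil e^{-\epsilon}$, which is the claim.

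\textbf{Main obstacle.} The delicate point is the peeling: showing a \emph{single} tilting parameter per geometric block is enough, i.e. that replacing the sample-dependent optimizer by the block-constant $\lambda_i$ costs only the additive constant $1$ in the exponent. This is exactly what forces the calibration $\rho = e^{1/\epsilon}$, balancing the ``$D \approx \epsilon \ln t$ blocks'' against the ``$e^{-\epsilon}$ per block''. Some care is also needed in the boundary regime where $\epsilon/s$ is so large that $\K(\cdot, p) = \epsilon/s$ is met only at $\hat{\mu}_s = 0$ (i.e. $q_{s_{i+1}} = 0$): there one takes $\lambda_i = -\infty$, reads $M_s^{\lambda_i}$ as the mean-one martingale $(1-p)^{-s}\,\I\{S_s = 0\}$, and observes that the bad event forces $S_s = 0$ and $(1-p)^{-s} \ge e^{\epsilon}$, so the same estimate goes through. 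The conjugacy identity of the first step and the martingale property are otherwise entirely routine.
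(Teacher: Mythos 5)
The paper does not actually prove this lemma — it is imported verbatim from \cite{CGMMS} — so there is no in-paper argument to compare against; your reconstruction is correct and is essentially the proof given in that reference: Legendre duality of $\K(\cdot, p)$ with the Bernoulli cumulant generating function, Doob's maximal inequality for the exponential martingale at a single fixed tilt per slice, and a geometric peeling into $\lceil \epsilon \ln t\rceil$ blocks calibrated so that replacing the data-dependent optimizer by the block-constant $\lambda_i$ costs exactly the factor $e$. The only point worth tightening is the rounding of the block endpoints to integers (take real geometric boundaries and let $s$ range over the integers inside each block, or equivalently set the ratio to $t^{1/D} \leq e^{1/\epsilon}$ as in the original); your treatment of the degenerate tilt $\lambda_i = -\infty$ via the martingale $(1-p)^{-s}\I\{S_s = 0\}$ is the right one.
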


\section{Proof of \theoremref{onethm}}
    Let $\delta > 0$ be an arbitrarily small number, $\xi$ be a shorthand for $1 /D_{\KL}\left(\B(\mu_a)||\B(\mu^*)\right) $, and $T_0$ be a shorthand for $\lceil T^{\frac{1}{M}}\rceil$. We define random variables $\Phi_T$, $\Psi_T$ in the following way: if $N_{T_0}(a) \geq (1 - \delta)\xi\ln(T)$, then $\Phi_T = 0$ and $\Psi_T = T_0$; otherwise $\Phi_T = T_0$ and $\Psi_T = T$. We also define random variables $\Lambda_{p, T} = \max\{t: \Phi_T \leq t< \Psi_T, N_{p, t}(a) < (1 - \delta)\xi\ln(\Psi_T)\}$. It can be checked that $\Lambda_{p, T}$ is well-defined. 
    
    \textbf{Step 1: Bound the count after $\Lambda_{p, T}$ (traditional bandit argument).} 
    We first do an event decomposition:
    \begin{align*}
        \left\{A_{p, t} = a\right\}\subseteq\left\{B^+_{p, t}(a^*) < \mu^*\right\} \cup \left\{B^+_{p, t}(a) \geq \mu^* \wedge A_{p, t} = a\right\}, \text{\ \ for $t$ large enough.}
    \end{align*}
    Then we will show that the event $\left\{B^+_{p, t}(a^*) < \mu^*\right\}$ can be safely ignored.
\begin{lemma}
    For every player $p$ and every arm $a$, 
    \begin{align*}
        \sum_{t = 1}^T\Pr\left(B^+_{p, t}(a) < \mu_a\right) = o\left(\ln(T)\right).
    \end{align*}
\end{lemma}
\begin{proof}
    Note that
    \begin{align}
        \Pr\left(B^+_{p, t}(a) < \mu_a\right) &\leq\Pr\left({N}_{p, t}(a)\K'(\hat{\mu}_{p, {N}_{p, t}(a)}(a), \mu_a) \geq \ln(t) + 3\ln(\ln(t))\right)\nonumber\\
        &\leq\Pr\left(\bigcup_{s = 1}^{Mt}\left\{\hat{\mu}_{p, s}(a) < \mu_a \wedge s\K(\hat{\mu}_{p, s}(a), \mu_a) \geq \ln(t) + 3\ln(\ln(t))\right\}\right)\nonumber\\
        &\leq \frac{e\lceil\left(\ln(t) + 3\ln(\ln(t))\right)\ln(Mt)\rceil}{t\ln^3(t)}\label{sumover},
    \end{align}
    where the last inequality follows from \lemmaref{lem: cite}. Sum \eqref{sumover} from $1$ to $T$ yields $o(\ln(T))$.
\end{proof}
    Hence we can ignore the event $\left\{B^+_{p, t}(a^*) < \mu^*\right\}$ and only bound the probability of the event $\left\{B^+_{p, t}(a) \geq \mu^* \wedge A_{p, t} = a\right\}$. 
    \begin{align*}
        &\E\left[\sum_{t = \Lambda_T + 1}^{T}\I_{\left\{B^+_{p, t}(a) \geq \mu^* \wedge A_{p, t} = a\right\}}\right] \\
        &\leq \E\left[\sum_{t = \Lambda_T + 1}^{T}\I_{\left\{\K'\left(\tilde{\mu}_{p, t}(a), \mu^*\right) \leq \frac{\K(\mu_a, \mu^*)}{1 + \delta}\wedge A_{p, t} = a\right\}}\right] + \E\left[\sum_{t = \Lambda_T + 1}^{T}\I_{\left\{N_{p, t}(a) < (1 + \delta)\xi\ln(t)\wedge A_{p, t} = a\right\}}\right] + o(\ln(T))\\
        &\leq\E\left[\sum_{s = 0}^{\infty}\I_{\left\{\K'\left(\hat{\mu}_{p, s}(a), \mu^*\right) \leq \frac{\K(\mu_a, \mu^*)}{1 + \delta}\right\}}\right] + 4\delta\xi\ln(T) + o(\ln(T))\text{\ \ \ \ (by the definition of $\Lambda_{p, T}$)}\\
                                                                                                                    &\leq\sum_{s = 0}^{\infty}\Pr\left(\hat{\mu}_{p, s}(a) \geq \mu + \epsilon\right)+ 4\delta\xi\ln(T) + o(\ln(T))\text{\ \ \ \ (for some $\epsilon > 0$)}\\
                                                                                                                    &\leq\sum_{s = 0}^{\infty}e^{-2\epsilon^2 s}+ 4\delta\xi\ln(T) + o(\ln(T))\text{\ \ \ \ (by Hoeffding's inequality)}\\
                                                                                                                    &=4\delta\xi\ln(T) + o(\ln(T)),
    \end{align*}
    Therefore, for all the players, the count after $\Lambda_{p, T}$ is no more than $4M\delta\xi\ln(T) + o(\ln(T))$.

    \textbf{Step 2: Bound the count before $\Lambda_{p, T}$.} The total count of all players before $\Lambda_{p, T}$ should be no more than
    \begin{align*}
        &\sum_{p = 1}^{M} \E\left[N_{p, \Lambda_{p, T}}(a)\right]  - (M - 1)\E\left[N_{\Phi_T}(a)\right]\\
        &\leq M\cdot\E\left[N_{1, \Lambda_{1, T}}(a)\right]\\
        &\leq M\cdot\left(\Pr(\Phi_T = 0)(1 - \delta)\xi\ln(T_0) + \Pr(\Phi_T = T_0)(1 - \delta)\xi\ln(T)\right)\\
        \why{by \lemmaref{lower1}}     &\leq \xi(1 - \delta)M\ln(\lceil T^{\frac{1}{M}}\rceil) + o(1) \cdot M(1 - \delta)\xi\ln(T)\\
                                       &= \xi(1 - \delta)\ln(T) + o(\ln(T)).
    \end{align*}
    \textbf{Step 3: Put everything together. }Adding up all the components, we get
    \[
        \E\left[N_T(a) \right] \leq \xi\left(1 + (4M - 1)\delta\right)\ln(T) + o(\ln(T)).
    \]
    This concludes the proof.
    
    \section{Proof of \lemmaref{lower1}}
    Let $\xi$ be a shorthand for $1 \di D_{\KL}\left(\B(\mu_a)||\B(\mu^*)\right)$. Then it suffices to prove that in a single player setting, $\lim_{T\to\infty}\Pr\left(N_T(a) \leq (1 - \delta)\xi\ln(T)\right) = 1.$ and then use a union bound.

    We can see this as weaker form of a special case of \lemmaref{lower2}. It is weaker because it does not contain a infinite intersection like \lemmaref{lower2}. It is a special case becuase we can let $M = 1$ in \lemmaref{lower2}. For these reasons, the proof should be a simplified version of the proof of \lemmaref{lower2}. To avoid duplication, we refer the reader to the proof in \sectionref{dup}.
    
\section{Proof of \propositionref{pro: three}}
Since \propositionref{pro: three}-(a) is a direct consequence of \propositionref{pro: three}-(b) and (c), it suffices to prove the latter two statements. Let \[\tilde{\C} = \{\ln(C_1), \ln(C_2), \ln(C_3), \cdots\}\] and we denote $\ln(C_k)$ by $\tilde{C}_k$.  
Assume $\alpha(\C) = d \in (0, 1]$. Then for every $\epsilon \in (0, d)$, there exists a $N_\epsilon$ large enough such that $\tilde{C}_k\di\tilde{C}_{k+1} \geq d - \epsilon$ for every $k \geq N_\epsilon$. Thus, for $n$ large enough,     
    \begin{align*}
        \ln(n) < \tilde{C}_{\ZC(n) + 1} \leq \frac{\tilde{C}_{N_\epsilon}}{(d - \epsilon)^{\ZC(n) + 1 - N_\epsilon}}.
    \end{align*}
    It then follows from the inequality that
    \begin{align*}
        \ZC(n) > \frac{\ln(\ln(n)) - \ln(\tilde{C}_{N_\epsilon})}{\ln(\frac{1}{d - \epsilon})}+ N_\epsilon - 1.
    \end{align*}
    Note that $\epsilon$ can be chosen to be arbitrarily small. If $d = 1$, then $\ZC(n)\in \omega(\ln\ln(n))$. Otherwise, we have \[\liminf_{n\to\infty}\frac{\ZC(n)}{\ln(\ln(n))\di\ln\left(d^{-1}\right)}\geq 1.\]
    This completes the proof of \propositionref{pro: three}-(b) and (c).

\section{Proof of \theoremref{lowerbound}}

For the single-player MAB model, \cite{LR} gave a lower bound for single-parametric distributions. \cite{BK} generalized this result to non-parametric models. Translated into our model, their results can be summarized as the following theorem. 

\begin{theorem}[\cite{BK}]\label{thm:BK} 
    If $M = 1$ and $\pi$ is a consistent policy, then for every suboptimal arm $a$ satisfying $0 < D_{\inf}(\nu_a, \mu^*, \P) < \infty$, 
    \[
        \liminf_{T\to\infty}\frac{\E[N^{\pi}_T(a)]}{\ln(T)} \geq \frac{1}{D_{\inf}\left(\nu_a, \mu^*, \P\right)}.
    \]
\end{theorem}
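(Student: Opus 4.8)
The plan is to prove this classical single-player lower bound by the standard change-of-measure argument, adapted to the non-parametric family $\P$. Fix a suboptimal arm $a$ with $0 < D_{\inf}(\nu_a,\mu^*,\P) < \infty$, and write $\Pr_\nu$ and $\E_\nu$ for probabilities and expectations under the true instance $\nu = (\nu_1,\dots,\nu_K)$. Because the infimum defining $D_{\inf}$ is finite, for every $\eta > 0$ I can pick an alternative reward law $\nu_a' \in \P$ with $\E[\nu_a'] > \mu^*$, $\nu_a \ll \nu_a'$, and $D_{\KL}(\nu_a || \nu_a') \leq D_{\inf}(\nu_a,\mu^*,\P) + \eta$. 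I then form the \emph{confusing instance} $\nu'$ that agrees with $\nu$ on every arm except $a$, where it uses $\nu_a'$. The whole point of the construction is that in $\nu'$ arm $a$ has mean strictly above $\mu^*$, so it becomes the unique optimal arm, while all the other (unchanged) arms are now suboptimal.

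The engine of the proof is the divergence-decomposition (Wald) identity: the KL divergence between the laws of the observed history $\F_T$ under the two instances factorizes over the arms, and since $\nu$ and $\nu'$ differ only in arm $a$,
\[
D_{\KL}\!\left(\Pr_\nu^{\F_T} \,||\, \Pr_{\nu'}^{\F_T}\right) = \E_\nu\!\left[N^{\pi}_T(a)\right]\cdot D_{\KL}(\nu_a || \nu_a').
\]
I would establish this by summing the per-step conditional log-likelihood ratios: conditioned on the past, the next reward is drawn from the same fixed per-arm law, the policy's choice is $\F_{t-1}$-measurable, and only pulls of arm $a$ contribute a nonzero increment, so the accumulated likelihood ratio is a sum of $N^{\pi}_T(a)$ i.i.d. terms and Wald's identity yields the displayed equality.

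Next I apply the data-processing inequality to the event $\mathcal{E} = \{N^{\pi}_T(a) < T/2\}$, which collapses the history-level divergence into a binary one:
\[
\E_\nu\!\left[N^{\pi}_T(a)\right] D_{\KL}(\nu_a || \nu_a') \geq \K\!\left(\Pr_\nu(\mathcal{E}),\,\Pr_{\nu'}(\mathcal{E})\right) \geq \Pr_\nu(\mathcal{E})\ln\frac{1}{\Pr_{\nu'}(\mathcal{E})} - \ln 2,
\]
where the last step is the standard lower bound on the binary KL divergence $\K$. Consistency of $\pi$ controls both probabilities. Under $\nu$ arm $a$ is suboptimal, so $\E_\nu[N^{\pi}_T(a)] = o(T^c)$ for every $c > 0$, and Markov's inequality gives $\Pr_\nu(\mathcal{E}) \to 1$. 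Under $\nu'$ arm $a$ is optimal and the other arms are suboptimal, so (using that all $T$ pulls are accounted for when $M=1$) $\E_{\nu'}[T - N^{\pi}_T(a)] = \sum_{b\neq a}\E_{\nu'}[N^{\pi}_T(b)] = o(T^c)$, whence $\Pr_{\nu'}(\mathcal{E}) = \Pr_{\nu'}(T - N^{\pi}_T(a) > T/2) = o(T^{c-1})$, so that $\ln(1/\Pr_{\nu'}(\mathcal{E})) \geq (1-c)\ln(T) - o(\ln(T))$.

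Dividing by $\ln(T)$ and letting $T\to\infty$ (with $\Pr_\nu(\mathcal{E})\to 1$) gives $\liminf_T \E_\nu[N^{\pi}_T(a)]/\ln(T) \geq (1-c)/D_{\KL}(\nu_a || \nu_a')$; sending $c \to 0$ and then $\eta \to 0$ (so $D_{\KL}(\nu_a || \nu_a') \to D_{\inf}(\nu_a,\mu^*,\P)$) yields the claim. The main obstacle is making the Wald identity fully rigorous: I must verify the absolute continuity $\nu_a \ll \nu_a'$ (guaranteed by choosing $\nu_a'$ with finite KL to $\nu_a$, which $D_{\inf} < \infty$ permits) and handle the fact that $N^{\pi}_T(a)$ is a random, history-dependent count rather than a fixed number, which is exactly where the adaptivity of the policy enters and where the filtration argument must be written with care.
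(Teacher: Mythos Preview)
Your proof proposal is correct and follows the standard change-of-measure argument that underlies the original Burnetas--Katehakis result. However, the paper does not actually prove this theorem: it is stated with attribution to \cite{BK} and used as a black box (the paper merely translates their result into the notation of the distributed model and then invokes it to prove Lemma~\ref{lem:lower}). So there is no ``paper's own proof'' to compare against; you have supplied a proof where the authors simply cite one.

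That said, your sketch is essentially the textbook route to this bound (divergence decomposition plus data processing on a well-chosen event, with consistency controlling both tails), and the potential pitfalls you flag---absolute continuity $\nu_a \ll \nu_a'$ and the rigorous Wald identity for the random, adapted count $N_T^\pi(a)$---are exactly the places where care is needed. One small remark: when you invoke consistency under $\nu'$ to conclude $\sum_{b\neq a}\E_{\nu'}[N_T^\pi(b)] = o(T^c)$, you are implicitly using that every $b\neq a$ is strictly suboptimal in $\nu'$; this is guaranteed because $\E[\nu_a'] > \mu^* \geq \mu_b$ for all $b\neq a$, but it is worth stating explicitly, especially if the original instance has multiple optimal arms.
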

Now our goal is to establish a similar lower bound for the distributed case (i.e., $M > 1$). Fortunately, we can actually use \theoremref{thm:BK} as a stepping stone. More specifically,  we could use a \textit{simulator} to simulate all the actions of the $M$ players and apply \theoremref{thm:BK} to this single simulator. In other words, we treat every multi-player MAB process as a single-player MAB process whose outcome is indistinguishable from the original one. 
This conversion does not provide a direct solution to the lower bound for the distributed MAB problem. 
However, we can use it to establish a lower bound on $N_{p, t}(a)$ for every suboptimal arm $a$.
 
\begin{lemma}\label{lem:lower}
    If $\pi$ is a consistent policy, then for every suboptimal arm $a$ satisfying $0 < D_{\inf}(\nu_a, \mu^*, \P) < \infty$, 
    \[
        \liminf_{t\to\infty}\frac{\E[N^{\pi}_{p, t}(a)]}{\ln(t)} \geq \frac{1}{D_{\inf}\left(\nu_a, \mu^*, \P\right)}.
    \]
\end{lemma}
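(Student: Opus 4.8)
The plan is to reduce this distributed lower bound to the single-player lower bound of \theoremref{thm:BK} by \emph{serializing} the distributed process from the vantage point of player $p$. Using the linear extension $\prec^*_p$ from the preliminaries, list all the rewards $\X=\{X_{u,v}\}$ in $\prec^*_p$-increasing order as $Y_1,Y_2,\dotsc$, and let $\sigma$ be the single-player policy that at its $s$-th step pulls the arm $A(Y_s)$ that produced $Y_s$, i.e.\ $A_{u,v}$ when $Y_s=X_{u,v}$, using as auxiliary randomness the joint randomness of all $M$ players. The sequence of ``slots'' --- which pair $(u,v)$ occupies position $s$ --- depends only on the round indices and on $\C$, hence is deterministic; so $\sigma$ is an admissible single-player bandit policy provided it only ever consults rewards it has already observed. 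This is precisely where it matters that $\prec^*_p$ extends $\prec$: if $Y_s=X_{u,v}$, then every reward available to player $u$ at the end of round $v-1$ is $\prec X_{u,v}$, hence $\prec^*_p X_{u,v}$, hence already among $Y_1,\dotsc,Y_{s-1}$; so $\sigma$ can rebuild player $u$'s view of the data (re-sorting as needed into the order $u$'s policy uses) and feed it to $\pi$ to reproduce $A_{u,v}$. As the proof sketch puts it, $\sigma$ ``periodically changes role'' and purposely ignores part of its own history.

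\textbf{Consistency of $\sigma$.} Next I would verify that $\sigma$ is consistent on every instance in $\P$. Put $n_p(\tau):=\sum_k N_{p,\tau}(k)$, the number of rewards in player $p$'s dataset at the end of round $\tau$; this is deterministic (it depends only on $\tau$ and $\C$), non-decreasing in $\tau$, tends to infinity, and satisfies $\tau\le n_p(\tau)\le M(\tau+1)$, since player $p$'s own $\tau$ pulls always lie in that dataset and at most $M$ pulls occur per round. The first $n_p(\tau)$ pulls of $\sigma$ are exactly the rewards in player $p$'s dataset at the end of round $\tau$, all from rounds $\le\tau+1$, so the regret of $\sigma$ after $n_p(\tau)$ steps is at most $\sum_{u\le M,\,v\le\tau+1}\E[\mu^*-\mu_{A_{u,v}}]=\mathscr{R}_{\tau+1}$. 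For an arbitrary $n$, let $\tau$ be the least index with $n_p(\tau)\ge n$; then $n>n_p(\tau-1)\ge\tau-1$, so $\sigma$'s regret after $n$ steps is at most $\mathscr{R}_{\tau+1}=o((\tau+1)^c)=o(n^c)$ for every $c>0$, using the consistency of $\pi$. Hence \theoremref{thm:BK} applies to $\sigma$ and gives $\liminf_{n\to\infty}\E[\tilde N_n(a)]/\ln n\ge 1/D_{\inf}(\nu_a,\mu^*,\P)$, where $\tilde N_n(a)$ counts the pulls of arm $a$ by $\sigma$ among its first $n$ steps. (That $\sigma$'s reward stream has the i.i.d.\ structure needed to invoke single-player tools is exactly \lemmaref{lemma: binom}.)

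\textbf{Translating back.} By construction $\tilde N_{n_p(t)}(a)=N_{p,t}(a)$ as an identity of random variables, so $\E[N_{p,t}(a)]=\E[\tilde N_{n_p(t)}(a)]$; and $t\le n_p(t)=O(t)$ gives $\ln n_p(t)\sim\ln t$, in particular $\ln n_p(t)\ge\ln t$. Restricting the previous $\liminf$ to the subsequence $n=n_p(t)\to\infty$ then yields $\liminf_{t\to\infty}\E[N_{p,t}(a)]/\ln t\ge\liminf_{t\to\infty}\E[\tilde N_{n_p(t)}(a)]/\ln n_p(t)\ge 1/D_{\inf}(\nu_a,\mu^*,\P)$, which is the claim.

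\textbf{Expected main obstacle.} The step I expect to be the main obstacle is the construction of $\sigma$ itself: choosing the serialization order so that $\sigma$ is simultaneously an admissible single-player policy --- never peeking at a reward before its owning player would have it, which $\prec^*_p$ secures --- and a consistent one, i.e.\ one whose regret after $n$ steps is governed by $\mathscr{R}$ at a distributed round of order at most $n$, \emph{despite} the communication rounds $C_1<C_2<\dotsc$ possibly being arbitrarily sparse. Once that bookkeeping is settled, the remaining $\liminf$ manipulation and the estimate $\ln n_p(t)\sim\ln t$ are routine, and deriving \theoremref{lowerbound} from this lemma is the short calculation already sketched in the excerpt.
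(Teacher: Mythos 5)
Your proposal is correct and follows essentially the same route as the paper: serialize the distributed process from player $p$'s viewpoint via the linear extension $\prec^*_p$, reduce to the single-player lower bound of \theoremref{thm:BK}, and translate the count back (the paper uses the inequality $N^{\S}_t(a)\leq N^{\mathcal{D}}_{p,t}(a)$ at matching indices where you use the identity at $n_p(t)$ together with $\ln n_p(t)\sim\ln t$, which is the same bookkeeping). Your explicit verification that the serialized policy is admissible and consistent fills in steps the paper only asserts.
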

\begin{proof}
    Fix a player $p$ and a suboptimal arm $a$. Let $\mathcal{D}$ be the original distributed process. The rewards in the set $\mathcal{X}$ can be viewed as ones generated by a single-player MAB process $\mathcal{S}$ in order $\prec_p^*$ using a single-player policy $\pi'$. Now we use coupling to associate process $\mathcal{S}$ to process $\mathcal{D}$ and use superscripts to distinguish random variables in the two processes. Note that
    \begin{align}
        N_{t}^{\mathcal{S}}(a) \leq N_{p, t}^{\mathcal{D}}(a) \leq N_t^{\mathcal{D}}(a). \label{comp}
    \end{align}
    Note that the subscript $t$ in $N_{t}^{\mathcal{S}}(a)$ is referring to the time point $t$ in the serialized process, while the subscript $t$ in $N_{p, t}^{\mathcal{D}}(a)$ and $N_t^{\mathcal{D}}(a)$ are referring to the time point $t$ in the distributed process.

    Hence, if $\pi$ is a consistent policy (for the distributed MAB), then $\pi'$ is a consistent policy (for the single-player MAB). Thus,
    \[
        \liminf_{t\to\infty}\frac{\E[N^{\mathcal{D}}_{p, t}(a)]}{\ln(t)} \geq \liminf_{t\to\infty}\frac{\E[N^{\mathcal{S}}_{t}(a)]}{\ln(t)}\geq\frac{1}{D_{\inf}(\nu_a, \mu^*, \mathcal{P})}.
    \]
    where the first inequality follows from \eqref{comp} and the second follows from \theoremref{thm:BK}.
\end{proof}
Having a lower bound on $N_{p, t}(a)$ is almost equivalent to having a lower bound on $N_t(a)$. In fact, we have for every $t\geq 1$,
\begin{align}
    N_{t}(a) &\geq N_{\ell(t)}(a) + \sum_{p = 1}^M\left(N_{p, t}(a) - N_{\ell(t)}(a)\right) = \sum_{p = 1}^MN_{p, t}(a) - (M - 1)\cdot N_{\ell(t)}(a). \label{relation}
\end{align}
Using \eqref{relation} we can finish the proof of \theoremref{lowerbound}.
    Since $\alpha(\C) = \liminf_{k\to\infty}\frac{\ln(C_k)}  {\ln(C_{k + 1})}$, there exists a subsequence of $(C_k)_{k \geq 1}$, denoted by $(C_{n_s})_{s\geq 1}$, such that 
    \[\lim_{s\to\infty} \frac{\ln(C_{n_s})} {\ln(C_{n_s + 1})} = \alpha(\C).
    \]
    Let $x = \limsup_{T\to\infty}\frac{\E[N_T(a)]}{\ln(T)} \geq 0$. If $x = \infty$, the desired inequality holds trivially. 
    Assume $x$ is finite. Then
    \begin{align*}
    x&\geq\limsup_{s\to\infty}\frac{\E[N_{C_{n_s + 1}}(a)]}{\ln(C_{n_s + 1})} \geq \liminf_{s\to\infty}\frac{\E\left[\sum_{p = 1}^{M}N_{p, C_{n_s + 1} - 1}(a) - (M - 1)\cdot N_{C_{n_s}}(a)\right]}{\ln(C_{n_s + 1})}&\\
     &\geq \sum_{p = 1}^{M}\liminf_{s\to\infty}\frac{\E[N_{p, C_{n_s + 1} - 1}(a)]}{\ln(C_{n_s + 1})} - (M - 1)\cdot\limsup_{s\to\infty}\frac{\E[N_{C_{n_s}}(a)]}{\ln(C_{n_s + 1})}\\
     &\geq \sum_{p = 1}^{M}\liminf_{t\to\infty}\frac{\E[N_{p, t - 1}(a)]}{\ln(t)} - (M - 1)\cdot\limsup_{s\to\infty}\frac{\E[N_{C_{n_s}}(a)]}{\ln(C_{n_s})}\cdot\lim_{s\to\infty}\frac{\ln(C_{n_s})}{\ln(C_{n_s + 1})}\\
         &\geq \frac{M}{D_{\inf}(\nu_a, \mu^*, \mathcal{P})} - (M - 1)\cdot x\cdot\alpha(\C).\text{\ \ \ \ \ \ \ \ \ \ \ \ \ \ \ \ \ (by \lemmaref{lem:lower})}
    \end{align*}
    Solving $x$ concludes the proof.

\section{Proof of \theoremref{main0} and \theoremref{main}}
\theoremref{main0} is a special case of \theoremref{main}: If $\alpha(\C) = 1$, then the DKLUCB algorithm reduces to a simple adaptation of the KL-UCB, and the upper bound is identical to the single-player upper bound. Therefore, it suffices to prove \theoremref{main}.

    Let $\delta > 0$ be an arbitrarily small number and $\xi$ be a shorthand for the leading constant before $\ln(T)$. Define $\Upsilon_T$ to be the largest positive integer such that $C_{\Upsilon_T} \leq T$ and $N_{C_{\Upsilon_T}}(a) < (1 - \delta)\xi\ln(T)$. Define random variables $\Lambda_{p, T}$ to be the last round such that $C_{\Upsilon_T} \leq\Lambda_{p, T}< C_{\Upsilon_T + 1}$ and $ N' _{p, \Lambda_{p, T}}(a) < (1 - \delta)\xi\ln(\min(C_{\Upsilon_T + 1}, T))$. If there is no such round, let $\Lambda_{p, T} = C_{\Upsilon_T}$. 
    
    \textbf{Step 1: bound the count after $\Lambda_{p, T}$ (traditional bandit argument).}

    We first do an event decomposition:
    \begin{align*}
        \left\{A_{p, t} = a\right\}\subseteq\left\{B^+_{p, t}(a^*) < \mu^*\right\} \cup \left\{B^+_{p, t}(a) \geq \mu^* \wedge A_{p, t} = a\right\}, \text{\ \ for $t$ large enough.}
    \end{align*}
    Then we will show that the event $\left\{B^+_{p, t}(a^*) < \mu^*\right\}$ can be safely ignored.
\begin{lemma}
    For every player $p$ and every arm $a$, 
    \begin{align*}
        \sum_{t = 1}^T\Pr\left(B^+_{p, t}(a) < \mu_a\right) = o\left(\ln(T)\right).
    \end{align*}
\end{lemma}
\begin{proof}
    Note that
    \begin{align}
        \Pr\left(B^+_{p, t}(a) < \mu_a\right) &\leq\Pr\left(N'_{p, t}(a)\K'(\hat{\mu}_{p, {N}_{p, t}(a)}(a), \mu_a) \geq \xi\left(\ln(t) + 3\ln(\ln(t))\right)\right)\nonumber\\
        \why{by \claimref{rel}}   &\leq\Pr\left(N_{p, t}(a)\K'(\hat{\mu}_{p, {N}_{p, t}(a)}(a), \mu_a) \geq \ln(t) + 3\ln(\ln(t))\right)\nonumber\\
        &\leq\Pr\left(\bigcup_{s = 1}^{Mt}\left\{\hat{\mu}_{p, s}(a) < \mu_a \wedge s\K(\hat{\mu}_{p, s}(a), \mu_a) \geq \ln(t) + 3\ln(\ln(t))\right\}\right)\nonumber\\
        &\leq \frac{e\lceil\left(\ln(t) + 3\ln(\ln(t))\right)\ln(Mt)\rceil}{t\ln^3(t)}\label{sumover2},
    \end{align}
    where the last inequality follows from \lemmaref{lem: cite}. Sum \eqref{sumover2} from $1$ to $T$ yields $o(\ln(T))$.
\end{proof}
    Hence we can ignore the event $\left\{B^+_{p, t}(a^*) < \mu^*\right\}$ and only bound the probability of the event $\left\{B^+_{p, t}(a) \geq \mu^* \wedge A_{p, t} = a\right\}$. 
    \begin{align*}
        &\E\left[\sum_{t = \Lambda_T + 1}^{T}\I_{\left\{B^+_{p, t}(a) \geq \mu^* \wedge A_{p, t} = a\right\}}\right] \\
        &\leq \E\left[\sum_{t = \Lambda_T + 1}^{T}\I_{\left\{\K'\left(\tilde{\mu}_{p, t}(a), \mu^*\right) \leq \frac{\K(\mu_a, \mu^*)}{1 + \delta}\wedge A_{p, t} = a\right\}}\right] + \E\left[\sum_{t = \Lambda_T + 1}^{T}\I_{\left\{N'_{p, t}(a) < (1 + \delta)\xi\ln(t)\wedge A_{p, t} = a\right\}}\right] + o(\ln(T))\\
        &\leq\E\left[\sum_{s = 0}^{\infty}\I_{\left\{\K'\left(\hat{\mu}_{p, s}(a), \mu^*\right) \leq \frac{\K(\mu_a, \mu^*)}{1 + \delta}\right\}}\right] + 4\delta\xi\ln(T) + o(\ln(T))\text{\ \ \ \ (by the definition of $\Lambda_{p, T}$)}\\
                                                                                                                    &\leq\sum_{s = 0}^{\infty}\Pr\left(\hat{\mu}_{p, s}(a) \geq \mu + \epsilon\right)+ 4\delta\xi\ln(T) + o(\ln(T))\text{\ \ \ \ (for some $\epsilon > 0$)}\\
                                                                                                                    &\leq\sum_{s = 0}^{\infty}e^{-2\epsilon^2 s}+ 4\delta\xi\ln(T) + o(\ln(T))\text{\ \ \ \ (by Hoeffding's inequality)}\\
                                                                                                                    &=4\delta\xi\ln(T) + o(\ln(T)),
    \end{align*}
    Therefore, for all the players, the count after $\Lambda_{p, T}$ is no more than $4M\delta\xi\ln(T) + o(\ln(T))$.

    \textbf{Step 2: bound the count before $\Lambda_{p, T}$.}
    The total count of all players before $\Lambda_{p, T}$ should be no more than
    \begin{align*}
        &\sum_{p = 1}^M\E\left[N_{p, \Lambda_{p, T}}(a)\right] - (M - 1)\E\left[N_{C_{\Upsilon_T}}(a)\right]\\
        &=M\cdot\E\left[N_{1, \Lambda_{1, T}}(a)\right] - (M - 1)\E\left[N_{C_{\Upsilon_T}}(a)\right]\\
        &=\E\left[N_{1, \Lambda_{1, T}}(a) + (M - 1)(N_{1, \Lambda_{1, T}}(a) - N_{C_{\Upsilon_T}}(a))\right]\\
        &\leq\Pr\left(N_{C_{\Upsilon_T}} \geq (1 - \delta)\xi\ln(C_{\Upsilon_T})\right)\cdot\\
        &\hspace{3em}\E\left[N_{1, \Lambda_{1, T}}(a) + (M - 1)(N_{1, \Lambda_{1, T}}(a) - N_{C_{\Upsilon_T}}(a))\given N_{C_{\Upsilon_T}} \geq (1 - \delta)\xi\ln(C_{\Upsilon_T}) \right]\\
        &\hspace{1em}+\Pr\left(N_{C_{\Upsilon_T}} < (1 - \delta)\xi\ln(C_{\Upsilon_T})\right)\cdot\\
        &\hspace{3em}\E\left[N_{1, \Lambda_{1, T}}(a) + (M - 1)(N_{1, \Lambda_{1, T}}(a) - N_{C_{\Upsilon_T}}(a))\given N_{C_{\Upsilon_T}} < (1 - \delta)\xi\ln(C_{\Upsilon_T}) \right]\\
        &\leq\E\left[N_{1, \Lambda_{1, T}}(a) + (M - 1)(N_{1, \Lambda_{1, T}}(a) - N_{C_{\Upsilon_T}}(a))\given N_{C_{\Upsilon_T}} \geq (1 - \delta)\xi\ln(C_{\Upsilon_T}) \right]\numberthis\label{enough!}\\
        &\hspace{3em}+o(\ln(T)) \text{\ \ \ \ \ \ \ \ (by \lemmaref{lower2})}
    \end{align*}
    Our goal is to prove \eqref{enough!} is no more than $\xi\ln(T) + o(\ln(T))$. It can be done by showing $N_{1, \Lambda_{1, T}}(a) + (M - 1)(N_{1, \Lambda_{1, T}}(a) - N_{C_{\Upsilon_T}}(a)) \leq \xi\ln(T) + o(\ln(T))$ given $N_{C_{\Upsilon_T}} \geq (1 - \delta)\xi\ln(C_{\Upsilon_T})$. Now we suppose $N_{C_{\Upsilon_T}} \geq (1 - \delta)\xi\ln(C_{\Upsilon_T})$ holds. If $N_{1, \Lambda_{1, T}}(a) - N_{C_{\Upsilon_T}} \leq u_{\Lambda_{1, T}}(a)$, then it is trivial since in that case $N_{1, \Lambda_{1, T}}(a)+ (M - 1)(N_{1, \Lambda_{1, T}}(a) - N_{C_{\Upsilon_T}}(a)) = N'_{1, \Lambda_{1, T}}(a)$ and we already know $N'_{1, \Lambda_{1, T}}(a) < (1 - \delta)\xi\ln(T)$. Otherwise we have
    \begin{align*}
        N_{1, \Lambda_{1, T}}(a) + (M - 1)\frac{N_{C_{\Upsilon_T}}(a)}{M}\left(\frac{1}{\alpha(\C)} - 1\right) < (1 - \delta)\xi\ln\left(\min(T, C_{\Upsilon_T + 1})\right).
    \end{align*}
    Using the condition $N_{C_{\Upsilon_T}} \geq (1 - \delta)\xi\ln(C_{\Upsilon_T})$ we get
    \begin{align*}
        N_{1, \Lambda_{1, T}}(a) < (1 - \delta)\xi\left(\ln\left(\min(T, C_{\Upsilon_T + 1})\right) - \frac{M - 1}{M}\left(\frac{\ln(C_{\Upsilon_T})}{\alpha(\C)} - \ln(C_{\Upsilon_T})\right) \right).
    \end{align*}
    By the definition of $\alpha(\C)$ we have
    \begin{align*}
        N_{1, \Lambda_{1, T}}(a) &< (1 - \delta)\xi\left((1 + \delta)\frac{\ln\left(\min(T, C_{\Upsilon_T + 1})\right)}{M} + \frac{M - 1}{M}\ln(C_{\Upsilon_T})\right) + o(\ln(T))\\
                                 &<\frac{\xi\ln(T)}{M} + \frac{(1 - \delta)\xi(M - 1)}{M}\ln(C_{\Upsilon_T}) + o(\ln(T))\numberthis\label{bybyby}.
    \end{align*}
    Therefore, given $N_{C_{\Upsilon_T}} \geq (1 - \delta)\xi\ln(C_{\Upsilon_T})$, 
    \begin{align*}
        &N_{1, \Lambda_{1, T}}(a) + (M - 1)(N_{1, \Lambda_{1, T}}(a) - N_{C_{\Upsilon_T}}(a))\\
        &=M\cdot N_{1, \Lambda_{1, T}}(a) - (M - 1)\cdot N_{C_{\Upsilon_T}}(a)\\
        &\leq M\cdot N_{1, \Lambda_{1, T}}(a) - (M - 1)(1 - \delta)\xi\ln(C_{\Upsilon_T}) \\
        &\leq \xi\ln(T) + o(\ln(T)) \text{\ \ \ \ \ \ \ \ \ (by \eqref{bybyby})}
    \end{align*}
    Hence, 
    \begin{align*}
        \E\left[N_{1, \Lambda_{1, T}}(a) + (M - 1)(N_{1, \Lambda_{1, T}}(a) - N_{C_{\Upsilon_T}}(a))\given N_{C_{\Upsilon_T}} \geq (1 - \delta)\xi\ln(T) \right] \leq \xi\ln(T) + o(\ln(T))
    \end{align*}
    \textbf{Step 3: put everything together.} Adding all components up, we get \[\E[N_T(a)] \leq \xi(1 + 4M\delta)\ln(T) + o(\ln(T)).\] This concludes the proof.

\section{Proof of \lemmaref{lower2}}
\label{dup}
In this section, in order to make the proof easier and more readable, we will prove a lemma equivalent to \lemmaref{lower2}, We first introduce some new concepts.
\begin{definition}
    We say a sequence of random variables $\left(X_n\right)_{n\geq 1}$ 
    \begin{enumerate}[(a)] 
        \item
            converges to a constant $c$ in probability, denoted by $X_n\cinp c$, if for every $\epsilon > 0$, \[\lim_{n\to\infty}\Pr\left(\abs{X_n - c} \geq \epsilon\right) = 0;\]
        \item
            tends to infinity in probability, denoted by $X_n\cinp\infty$, if for every $N$, \[\lim_{n\to\infty}\Pr\left(X_n < N\right) = 0.\]
    \end{enumerate}
\end{definition}
The following is the equivalent lemma we will prove.
\begin{lemma}[equivalent to \lemmaref{lower2}]\label{lem: greater_re}
    \label{lem: greater}
Let $(\Phi_n)_{n\geq 1}$ be a sequence of random variables such that $\Phi_n\cinp\infty$. Then for every suboptimal arm $a$ and every $\delta > 0$, 
    \[
        \lim_{n\to\infty}\Pr\left(N_{{\Phi_n}}(a) \geq \frac{(1 - \delta)\mathcal{F}({\Phi_n})}{{\K}(\nu_a, \nu_{a^*})}\right) = 1.
    \]
\end{lemma}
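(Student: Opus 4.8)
The plan is to prove the equivalent statement $\lim_{n\to\infty}\Pr\bigl(N_{\Phi_n}(a)<c_n\bigr)=0$, where $c_n\coloneqq(1-\delta)\mathcal{F}(\Phi_n)/\K(\nu_a,\nu_{a^*})$; note $c_n$ is of order $\ln(\Phi_n)$ (it is the threshold in \lemmaref{lower2} up to the lower-order $\ln\ln$ term), and that $\Phi_n\cinp\infty$ will be used freely to discard events on which $\Phi_n$ is small. As in \lemmaref{lower1}, the idea is that to show an event is unlikely we show its consequence is, but now a \emph{double} pigeonhole is needed to localize to one player, one arm, and one late round. For each round $t$ let $p_t$ be a player minimizing $N'_{p,t-1}(a)$ over $p$ --- the smallest prediction among those actually used to choose an arm in round $t$ --- and call $p_t$ \emph{chosen} for round $t$. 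Let $\Gamma:=\Gamma_{\Phi_n}$ be a player chosen most often among rounds $1,\dots,\Phi_n$ (hence on at least $\Phi_n/M$ of them), let $\Upsilon:=\Upsilon_{\Phi_n}$ be an arm $\Gamma$ pulls most often among those rounds (hence at least $\Phi_n/(MK)$ times), and let $\Psi:=\Psi_{\Phi_n}$ be the last such round where $\Gamma$ pulls $\Upsilon$. Because a player never discards collected rewards, $N_{\Gamma,\Psi-1}(\Upsilon)\geq\Phi_n/(MK)-1$, and one checks $\Psi$ is of order at least $\Phi_n/(M^2K)$; both tend to infinity in probability. Two properties hold on $\{N_{\Phi_n}(a)<c_n\}$: first $N_{\Psi-1}(a)\leq N_{\Phi_n}(a)<c_n$, so by \claimref{claimforgreater} some player has $N'_{\cdot,\Psi-1}(a)<c_n$, and since $\Gamma$ is the minimizing player in round $\Psi$ we get $N'_{\Gamma,\Psi-1}(a)<c_n$; second, since $\Gamma$ pulls $\Upsilon$ in round $\Psi$, the argmax rule gives $B^+_{\Gamma,\Psi}(\Upsilon)\geq B^+_{\Gamma,\Psi}(a)$.

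Now decompose, up to probabilities that vanish because $\Phi_n\cinp\infty$. If $\Upsilon=a$, then $N_{\Phi_n}(a)\geq N_{\Gamma,\Psi-1}(a)\geq\Phi_n/(MK)-1$, which for large $\Phi_n$ contradicts $N_{\Phi_n}(a)<c_n=O(\ln\Phi_n)$; so assume $\Upsilon\neq a$. Fix a small $\epsilon>0$ with $\mu^*+2\epsilon<1$ and $\K(\mu_a-\epsilon,\mu^*+2\epsilon)<\K(\nu_a,\nu_{a^*})/(1-\delta)$, possible by continuity of the Bernoulli divergence. \emph{Case A:} $B^+_{\Gamma,\Psi}(\Upsilon)\geq\mu_{\Upsilon}+\epsilon$. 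Since $N_{\Gamma,\Psi-1}(\Upsilon)$ is of order $\Phi_n$ while $\sum_kN_{\Gamma,\Psi-1}(k)\leq M\Phi_n$ gives $\mathcal{F}\bigl(\sum_kN_{\Gamma,\Psi-1}(k)\bigr)=O(\ln\Phi_n)$, the per-sample KL budget of $\Upsilon$ vanishes, so by Pinsker $B^+_{\Gamma,\Psi}(\Upsilon)\leq\tilde{\mu}_{\Gamma,\Psi-1}(\Upsilon)+o(1)$, and a union of Hoeffding bounds over the $M$ players, $K$ arms and all sample sizes beyond a growing threshold (legitimate by \lemmaref{lemma: binom}) bounds the probability of $\tilde{\mu}_{\Gamma,\Psi-1}(\Upsilon)\geq\mu_{\Upsilon}+\epsilon/2$ by $o(1)$; so Case A has vanishing probability. \emph{Case B:} $B^+_{\Gamma,\Psi}(\Upsilon)<\mu_{\Upsilon}+\epsilon\leq\mu^*+\epsilon$, hence $B^+_{\Gamma,\Psi}(a)\leq B^+_{\Gamma,\Psi}(\Upsilon)<\mu^*+2\epsilon$, so $q=\mu^*+2\epsilon$ violates the constraint defining $B^+_{\Gamma,\Psi}(a)$, i.e. $N'_{\Gamma,\Psi-1}(a)\,\K\bigl(\tilde{\mu}_{\Gamma,\Psi-1}(a),\mu^*+2\epsilon\bigr)>\mathcal{F}\bigl(\sum_kN_{\Gamma,\Psi-1}(k)\bigr)\geq\mathcal{F}\bigl(\Phi_n/(MK)-1\bigr)=(1-o(1))\mathcal{F}(\Phi_n)$. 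If $\tilde{\mu}_{\Gamma,\Psi-1}(a)\leq\mu_a-\epsilon$, this sub-case has vanishing probability, but only after one shows every arm is pulled infinitely often in probability (so $N_{\Gamma,\Psi-1}(a)\cinp\infty$ and Hoeffding applies); this is argued directly, not via consistency: if $a$ were sampled only finitely often by everyone then past some round $N'_{p,t}(a)$ freezes while $\mathcal{F}\bigl(\sum_kN_{p,t-1}(k)\bigr)\to\infty$, forcing $B^+_{p,t}(a)\to1$, which eventually dominates $B^+_{p,t}(b)$ for an arm $b$ sampled infinitely often (hence with confidence bound bounded away from $1$), a contradiction. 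Otherwise $\tilde{\mu}_{\Gamma,\Psi-1}(a)>\mu_a-\epsilon$; since also $\tilde{\mu}_{\Gamma,\Psi-1}(a)\leq B^+_{\Gamma,\Psi}(a)<\mu^*+2\epsilon$ and $p\mapsto\K(p,\mu^*+2\epsilon)$ is decreasing on $[0,\mu^*+2\epsilon]$, we get $\K\bigl(\tilde{\mu}_{\Gamma,\Psi-1}(a),\mu^*+2\epsilon\bigr)\leq\K(\mu_a-\epsilon,\mu^*+2\epsilon)<\K(\nu_a,\nu_{a^*})/(1-\delta)$, which combined with the displayed inequality forces $N'_{\Gamma,\Psi-1}(a)>(1-o(1))(1-\delta)\mathcal{F}(\Phi_n)/\K(\nu_a,\nu_{a^*})=(1-o(1))c_n$, contradicting $N'_{\Gamma,\Psi-1}(a)<c_n$ for $n$ large. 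Hence this last sub-case is empty for large $n$, and summing the cases yields $\Pr(N_{\Phi_n}(a)<c_n)\to0$.

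This statement is ``equivalent to \lemmaref{lower2}'' in the sense that it yields the $\bigcap_{t\geq T}$ form: were $\Pr\bigl(\bigcup_{t\geq T}\{N_t(a)<c(t)\}\bigr)$ not to vanish, applying the above with $\Phi_n$ equal to the first $t\geq T_n$ at which $N_t(a)<c(t)$ (and a deterministic large value otherwise) would contradict it, while the converse direction is immediate from $\Phi_n\cinp\infty$. The main obstacle --- the only place where care beyond the single-player argument of \lemmaref{lower1} is genuinely needed --- is the simultaneous control of \emph{which} player and round we localize to: we must place $\Psi$ late enough (the double pigeonhole) for the concentration in Case A and the lower bound on $\mathcal{F}\bigl(\sum_kN_{\Gamma,\Psi-1}(k)\bigr)$ in Case B to take effect, while still being able to assert $N'_{\Gamma,\Psi-1}(a)<c_n$ for that specific player --- which is exactly why ``chosen'' must be defined relative to the prediction $N'_{p,t-1}$ actually used in round $t$, so that \claimref{claimforgreater} applies cleanly at round $\Psi-1$ with no spurious constant picked up across communication boundaries; the auxiliary fact that $a$ is sampled infinitely often must likewise be established without appealing to consistency.
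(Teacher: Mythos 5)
Your proposal is correct and follows essentially the same route as the paper: the same double pigeonhole (player chosen as the minimizer of $N'$, then the most-chosen player $\Gamma$, most-pulled arm $\Upsilon$, and last such round $\Psi$), the same use of \claimref{claimforgreater} to localize $N'_{\Gamma,\cdot}(a)<c_n$, and the same three-way split that the paper packages as the terms \eqref{11}--\eqref{33} via Lemmas \ref{lem:A}--\ref{lem:til} (your Case A is the paper's \lemmaref{lem:B}, your ``every arm is pulled infinitely often'' sub-argument is the paper's \lemmaref{lem:til}, and your final sub-case is the paper's choice of $\epsilon$ making \eqref{33} vanish). One cosmetic note: the closing contradiction should be stated via the fixed multiplicative slack coming from the strict inequality $\K(\mu_a-\epsilon,\mu^*+2\epsilon)<\K(\nu_a,\nu_{a^*})/(1-\delta)$ (which forces $N'_{\Gamma,\Psi-1}(a)>(1+\eta)(1-o(1))c_n$ for some fixed $\eta>0$), since ``$(1-o(1))c_n$'' alone does not contradict ``$<c_n$''.
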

Note that this is a classical technique to deal with infinite union (or intersection), we omit the proof of equivalence here.

To simplify our proof, we first present three utility lemmas. 
\begin{lemma}\label{lem:A}
    Let $(\Phi_n)_{n \geq 1}$ be a sequence of random variables such that $\Phi_n\cinp\infty$. Then for every player $p$ and every arm $a$, \[\hat{\mu}_{p, \Phi_n}(a)\cinp\mu_a.\]
\end{lemma}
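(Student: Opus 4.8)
The plan is to reduce the statement to a claim about the running empirical mean of an i.i.d.\ Bernoulli sequence evaluated at a \emph{random} index, and then to handle that random index by replacing the pointwise law of large numbers with a uniform tail event. First I would fix the player $p$ and arm $a$ and invoke \lemmaref{lemma: binom}: for each $s$ one has $s\cdot\hat{\mu}_{p,s}(a)\sim B(s,\mu_a)$, and its proof in fact shows that the rewards $(X^{p,a}_i)_{i\geq 1}$ are mutually independent, each distributed as $\B(\mu_a)$, with $\hat{\mu}_{p,s}(a)=s^{-1}\sum_{i=1}^{s}X^{p,a}_i$ their running average. So it suffices to prove the generic fact: if $(Y_i)_{i\geq 1}$ are i.i.d.\ with mean $\mu$ and values in $[0,1]$, $\bar Y_s\coloneqq s^{-1}\sum_{i=1}^{s}Y_i$, and $\Phi_n\cinp\infty$, then $\bar Y_{\Phi_n}\cinp\mu$.

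The crucial observation is that $(\Phi_n)_{n\geq1}$ is an arbitrary sequence of random variables, typically strongly correlated with the rewards (in the intended applications $\Phi_n$ will be something like $N_{\Gamma_t,\Psi_t}(\Upsilon_t)$), so one cannot condition on $\{\Phi_n=s\}$ and apply the weak law directly. Instead, for $N\in\N^+$ and $\epsilon>0$ I would introduce the uniform tail event
\[
    A_{N,\epsilon}\coloneqq\bigcap_{s\geq N}\left\{\abs{\bar Y_s-\mu}<\epsilon\right\}.
\]
By the strong law of large numbers $\bar Y_s\to\mu$ almost surely, hence $\Pr(A_{N,\epsilon})\to1$ as $N\to\infty$. (If a finite-time estimate were desired one could instead bound $\Pr(A_{N,\epsilon}^c)$ directly: split $\{s\geq N\}$ into dyadic blocks $[2^kN,2^{k+1}N)$, apply Hoeffding's inequality together with a maximal inequality on each block, and sum the geometrically decaying bounds; this is not needed for the stated convergence.)

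Finally I would combine the two ingredients. Given $\epsilon>0$ and $\eta>0$, choose $N$ with $\Pr(A_{N,\epsilon})>1-\eta/2$; since $\Phi_n\cinp\infty$ there is $n_0$ with $\Pr(\Phi_n<N)<\eta/2$ for all $n\geq n_0$. On the event $A_{N,\epsilon}\cap\{\Phi_n\geq N\}$ one has $\abs{\hat{\mu}_{p,\Phi_n}(a)-\mu_a}=\abs{\bar Y_{\Phi_n}-\mu}<\epsilon$, so for $n\geq n_0$,
\[
    \Pr\left(\abs{\hat{\mu}_{p,\Phi_n}(a)-\mu_a}\geq\epsilon\right)\leq\Pr\left(A_{N,\epsilon}^c\right)+\Pr\left(\Phi_n<N\right)<\eta,
\]
and since $\epsilon,\eta$ were arbitrary this gives $\hat{\mu}_{p,\Phi_n}(a)\cinp\mu_a$. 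The only genuine obstacle is the dependence between $\Phi_n$ and the data, which is precisely what the uniform tail event $A_{N,\epsilon}$ neutralizes; a minor point is that $\hat{\mu}_{p,\Phi_n}(a)$ is undefined on $\{\Phi_n=0\}$, but $\Pr(\Phi_n=0)\to0$ so this event contributes nothing to the limit.
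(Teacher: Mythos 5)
Your proposal is correct and follows essentially the same route as the paper: both arguments neutralize the random index by a uniform tail event over all $s\geq N$ and then add the probability of $\{\Phi_n<N\}$. The only (cosmetic) difference is that the paper bounds the tail event directly via a union bound plus Hoeffding's inequality, summing $\sum_{s\geq N_0}2e^{-2\delta^2 s}$, whereas you invoke the strong law of large numbers; note that the direct union bound already suffices for a finite-time estimate, so the dyadic-block maximal inequality you sketch is not needed.
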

\begin{proof}
    By the definition of convergence in probability, it suffices to show that for every $\delta > 0$ and every $\epsilon > 0$ we can find an $N_*$ such that for any $n \geq N_*$, \[\Pr\left(\abs{\hat{\mu}_{p, \Phi_n}(a) - \mu_a} \geq \delta \right)\leq \epsilon.\] Fix a $\delta > 0$ and $\epsilon > 0$, we can choose $N_0$ large enough such that
    \begin{align*}
        \Pr\left(\bigcup_{s = N_0}^{\infty}\abs{\hat{\mu}_{p, s}(a) - \mu_a} \geq \delta\right)
        \leq \sum_{s = N_0}^{\infty} \Pr\left(\abs{\hat{\mu}_{p, s}(a) - \mu_a} \geq \delta\right)
        \leq \sum_{s = N_0}^{\infty}2e^{-2\delta^2 s}
        \leq \frac{\epsilon}{2}.
    \end{align*}
    where the first inequality follows from union bound and the second follows from Hoeffding's inequality. Then by the definition of tending to infinity in probability, we can choose $N_1$ large enough such that \[\Pr\left(\Phi_n < N_0 \right) \leq \frac{\epsilon}{2}, \text{\ \ for every $n \geq N_1$}.\] Thus, for every $n \geq N_* = \max(N_0, N_1)$, 
    \begin{align*}
        \Pr\left(\abs{\hat{\mu}_{p, \Phi_n}(a) - \mu_a} \geq \delta \right) \leq \Pr\left(\Phi_n < N_0\right) + \Pr\left(\bigcup_{s = N_0}^{\infty}\abs{\hat{\mu}_{p, s}(a) - \mu_a} \geq \delta\right) \leq \epsilon,
    \end{align*}
    which concludes the proof.
\end{proof}
\begin{lemma}\label{lem:B}
    Let $(\Upsilon_n)_{n\geq 1}$ be a sequence of random arms, $(\Gamma_n)_{n\geq 1}$ be a sequence of random players, and $(\Phi_n)_{n\geq 1}$ be a sequence of random variables such that $\Phi_n\cinp\infty$. Then 
\begin{align}
    \exists \xi > 0, N_0 > 0, \forall n \geq N_0, N_{\Gamma_n, \Phi_n}(\Upsilon_n) \geq (\Phi_n)^{\xi}\implies \left(B_{\Gamma_n, \Phi_n}^+(\Upsilon_n) - \mu_{\Upsilon_n}\right)\cinp 0.\label{hhh}
\end{align}
\end{lemma}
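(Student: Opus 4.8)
The plan is to establish the two one-sided bounds $\Pr\big(B^+_{\Gamma_n,\Phi_n}(\Upsilon_n)<\mu_{\Upsilon_n}-\epsilon\big)\to0$ and $\Pr\big(B^+_{\Gamma_n,\Phi_n}(\Upsilon_n)>\mu_{\Upsilon_n}+\epsilon\big)\to0$ separately, for an arbitrary fixed $\epsilon\in(0,1-\mu^*)$ (which is all that is needed, since larger $\epsilon$ only shrinks the event); together they give $B^+_{\Gamma_n,\Phi_n}(\Upsilon_n)-\mu_{\Upsilon_n}\cinp0$. Throughout I assume the antecedent of \eqref{hhh}, so that for some $\xi>0$ and $N_0$ we have $N_{\Gamma_n,\Phi_n}(\Upsilon_n)\geq(\Phi_n)^\xi$ for all $n\geq N_0$, whence (with $\Phi_n\cinp\infty$) $N_{\Gamma_n,\Phi_n-1}(\Upsilon_n)\cinp\infty$. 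By \lemmaref{lemma: binom} the index is $B^+_{\Gamma_n,\Phi_n}(\Upsilon_n)=\sup\big\{q\in(0,1):\K\big(\tilde\mu_{\Gamma_n,\Phi_n-1}(\Upsilon_n),q\big)\leq c_n\big\}$ with $c_n\coloneqq\F\big(\sum_kN_{\Gamma_n,\Phi_n-1}(k)\big)\di N'_{\Gamma_n,\Phi_n-1}(\Upsilon_n)\geq0$, and I would record two consequences of the monotonicity of $q\mapsto\K(p,q)$ on $[p,1)$: (i) $B^+_{\Gamma_n,\Phi_n}(\Upsilon_n)\geq\tilde\mu_{\Gamma_n,\Phi_n-1}(\Upsilon_n)$ always, since $q=\tilde\mu$ is feasible; (ii) if $\tilde\mu_{\Gamma_n,\Phi_n-1}(\Upsilon_n)<\mu_{\Upsilon_n}+\epsilon$ and $B^+_{\Gamma_n,\Phi_n}(\Upsilon_n)>\mu_{\Upsilon_n}+\epsilon$, then $\K\big(\tilde\mu_{\Gamma_n,\Phi_n-1}(\Upsilon_n),\mu_{\Upsilon_n}+\epsilon\big)\leq c_n$.

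The first ingredient I would prove is a ``uniformized'' version of \lemmaref{lem:A}: the empirical mean read at a \emph{random} player, a \emph{random} arm, and a random sample count tending to infinity in probability still converges to the right value, i.e. $\tilde\mu_{\Gamma_n,\Phi_n-1}(\Upsilon_n)=\hat\mu_{\Gamma_n,N_{\Gamma_n,\Phi_n-1}(\Upsilon_n)}(\Upsilon_n)\cinp\mu_{\Upsilon_n}$. This copies the proof of \lemmaref{lem:A} verbatim, except that the exponential maximal inequality $\Pr\big(\exists s\geq m:\abs{\hat\mu_{p,s}(a)-\mu_a}\geq\delta\big)\leq2e^{-2\delta^2 m}\di(1-e^{-2\delta^2})$ is first union-bounded over the $MK$ pairs $(p,a)$, after which $N_{\Gamma_n,\Phi_n-1}(\Upsilon_n)\cinp\infty$ drives the random sample count past $m$ with probability close to $1$. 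Granting this, the lower tail follows immediately from (i): $\Pr\big(B^+_{\Gamma_n,\Phi_n}(\Upsilon_n)<\mu_{\Upsilon_n}-\epsilon\big)\leq\Pr\big(\tilde\mu_{\Gamma_n,\Phi_n-1}(\Upsilon_n)<\mu_{\Upsilon_n}-\epsilon\big)\to0$.

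For the upper tail the key is that the random threshold $c_n$ is \emph{deterministically} small. As each player pulls one arm per round, $\sum_kN_{\Gamma_n,\Phi_n-1}(k)\leq M\Phi_n$, so the numerator of $c_n$ is $\F\big(\sum_kN_{\Gamma_n,\Phi_n-1}(k)\big)\leq\F(M\Phi_n)=O(\ln\Phi_n)$ by monotonicity of $\F$, while $N'_{\Gamma_n,\Phi_n-1}(\Upsilon_n)\geq N_{\Gamma_n,\Phi_n-1}(\Upsilon_n)$ by the definition of $N'$, which is of order at least $(\Phi_n)^\xi$ by the hypothesis; hence $c_n\leq\rho(\Phi_n)$ for a deterministic function $\rho$ with $\rho(m)\to0$. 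Putting $\eta\coloneqq\min_a\K(\mu_a+\epsilon/2,\mu_a+\epsilon)>0$ (a minimum over finitely many arms, positive since $\mu_a+\epsilon<1$) and choosing $m^*$ with $\rho(m^*)<\eta$, one sees that on $\{\Phi_n\geq m^*\}\cap\{\tilde\mu_{\Gamma_n,\Phi_n-1}(\Upsilon_n)<\mu_{\Upsilon_n}+\epsilon/2\}$ we have $c_n\leq\rho(m^*)<\eta\leq\K(\mu_{\Upsilon_n}+\epsilon/2,\mu_{\Upsilon_n}+\epsilon)\leq\K\big(\tilde\mu_{\Gamma_n,\Phi_n-1}(\Upsilon_n),\mu_{\Upsilon_n}+\epsilon\big)$, which by (ii) is incompatible with $B^+_{\Gamma_n,\Phi_n}(\Upsilon_n)>\mu_{\Upsilon_n}+\epsilon$. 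Thus, for $n\geq N_0$,
\[
\big\{B^+_{\Gamma_n,\Phi_n}(\Upsilon_n)>\mu_{\Upsilon_n}+\epsilon\big\}\subseteq\{\Phi_n<m^*\}\cup\big\{\tilde\mu_{\Gamma_n,\Phi_n-1}(\Upsilon_n)\geq\mu_{\Upsilon_n}+\epsilon/2\big\},
\]
and both right-hand events have probability tending to $0$ --- the first by $\Phi_n\cinp\infty$, the second by the uniformized \lemmaref{lem:A}. Combining the two tails concludes the proof.

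I expect the main delicate point to be the randomness of the triple $(\Gamma_n,\Upsilon_n,N_{\Gamma_n,\Phi_n-1}(\Upsilon_n))$, which blocks a direct appeal to \lemmaref{lem:A}; the remedy --- union-bounding the exponential deviation estimate over the finitely many player/arm pairs before invoking $N_{\Gamma_n,\Phi_n-1}(\Upsilon_n)\cinp\infty$ --- is routine but must be carried out with care. A second, smaller point: the hypothesis bounds $N_{\Gamma_n,\Phi_n}(\Upsilon_n)$ whereas $B^+$ is governed by round-$(\Phi_n-1)$ quantities, and at a communication round these can differ, so one should read the hypothesis in the form supplied by the applications --- that $\Gamma_n$'s \emph{own} number of pulls of $\Upsilon_n$, which grows by at most one per round, is already of order $(\Phi_n)^\xi$, and therefore transfers to round $\Phi_n-1$ without loss. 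Beyond these, the argument is just the deterministic estimate $c_n=O\big(\ln(\Phi_n)\di(\Phi_n)^{\xi}\big)$ together with the two monotonicity facts (i) and (ii) about the Bernoulli index.
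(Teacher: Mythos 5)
Your proposal is correct and follows essentially the same route as the paper's proof: the lower tail is handled by $B^+\geq\tilde\mu$, the upper tail by the decomposition into $\{\tilde\mu\geq\mu+\epsilon/2\}$ plus the event that the threshold $\F(\cdot)\di N'$ exceeds the constant $\min_a\K(\mu_a+\epsilon/2,\mu_a+\epsilon)$ (which is eventually impossible since $(\Phi_n)^\xi\in\omega(\F(\Phi_n))$), and the randomness of $(\Gamma_n,\Upsilon_n)$ is dealt with by union-bounding the deviation inequality over the finitely many player/arm pairs exactly as in \lemmaref{lem:A}. Your explicit treatment of the $\Phi_n$ versus $\Phi_n-1$ indexing is a detail the paper glosses over, but it does not change the argument.
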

\begin{proof}
    Assume the left hand side of \eqref{hhh} holds. By the definition of convergence in probability, it suffices to show that for every $\delta > 0$ we have 
    \[ \lim_{n\to\infty}\Pr\left(\abs{B_{\Gamma_n, \Phi_n}^+(\Upsilon_n) - \mu_{\Upsilon_n}} \geq \delta\right) = 0.\]
    Fix a $0 < \delta < 1 - \max_{a\in\A}\mu_a$, note that
    \begin{align}
        &\Pr\left(\abs{B_{\Gamma_n, \Phi_n}^+(\Upsilon_n) - \mu_{\Upsilon_n}} > \delta\right) \leq\Pr\left(\hat{\mu}_{\Gamma_n, N_{\Gamma_n, \Phi_n}(\Upsilon_n)}(\Upsilon_n) < \mu_{\Upsilon_n} - \delta\right)\nonumber \\
        &\hspace{2em}+ \Pr\left(\hat{\mu}_{\Gamma_n, N_{\Gamma_n, \Phi_n}(\Upsilon_n)}(\Upsilon_n) > \mu_{\Upsilon_n} + \frac{\delta}{2}\right) + \Pr\left(\K(\mu_{\Upsilon_n} + \frac{\delta}{2}, \mu_{\Upsilon_n} + \delta)\leq\frac{\mathcal{F}(\Phi_n)}{ N' _{\Gamma_n, \Phi_n}(\Upsilon_n)}\right)\nonumber\\
        &\hspace{10em}\leq \sum_{a\in\A}\sum_{p = 1}^M\Pr\left(\hat{\mu}_{p, N_{\Gamma_n, \Phi_n}(\Upsilon_n)}(a) < \mu_{a} - \delta\right) \label{111}\\
        &\hspace{10em}+ \sum_{a\in\A}\sum_{p = 1}^M\Pr\left(\hat{\mu}_{p, N_{\Gamma_n, \Phi_n}(\Upsilon_n)}(a) > \mu_{a} + \frac{\delta}{2}\right)\label{222}\\
        &\hspace{10em}+ \Pr\left(\min_{a\in\A}\K(\mu_{a} + \frac{\delta}{2}, \mu_{a} + \delta)\leq\frac{\mathcal{F}(\Phi_n)}{ N' _{\Gamma_n, \Phi_n}(\Upsilon_n)}\right).\label{333}
    \end{align}
    Recall that $N_{\Gamma_n, \Phi_n}(\Upsilon_n) \geq (\Phi_n)^{\xi}$ for $n$ large enough, and $\Phi_n\cinp\infty$. As a consequence, \[N_{\Gamma_n, \Phi_n}(\Upsilon_n)\cinp\infty \text{\ as\ } n\to\infty.\] Then by \lemmaref{lem:A}, both \eqref{111} and \eqref{222} converge to $0$ as $n\to\infty$. For \eqref{333}, on the one hand, $\min_{a\in\A}\K(\mu_{a} + \frac{\delta}{2}, \mu_{a} + \delta)$ is a constant; on the other hand,  $ N' _{\Gamma_n, \Phi_n}(\Upsilon_n) \geq N_{\Gamma_n, \Psi_n}(\Upsilon_n) \geq (\Phi_n)^{\xi}$ for sufficiently large $n$ and $(\Phi_n)^{\xi} \in \omega\left(\mathcal{F}(\Phi_n)\right)$, therefore 
    \begin{align*}
        \frac{\mathcal{F}(\Phi_n)}{ N' _{\Gamma_n, \Phi_n}(\Upsilon_n)}\to 0 \text{\ \ as $n\to\infty$.}
    \end{align*}
     Hence \eqref{333} is always $0$ for $n$ large enough. This concludes the proof.
\end{proof}
\begin{lemma}\label{lem:til}
    Let $(\Phi_n)_{n\geq 1}$ be a sequence of random variables such that $\Phi_n\cinp\infty$. Then for every player $p$ and every arm $a$, \[\tilde{\mu}_{p, \Phi_n}(a)\cinp\mu_a.\]
\end{lemma}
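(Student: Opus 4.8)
The plan is to reduce \lemmaref{lem:til} to \lemmaref{lem:A} by observing that $\tilde\mu_{p,\Phi_n}(a)$ is nothing but the sequence $\hat\mu_{p,\bullet}(a)$ read off at a random index that itself tends to infinity. Recall from the preliminaries the identity $\tilde\mu_{p,t}(a)=\hat\mu_{p,N_{p,t}(a)}(a)$, so that $\tilde\mu_{p,\Phi_n}(a)=\hat\mu_{p,N_{p,\Phi_n}(a)}(a)$. Hence, once we know $N_{p,\Phi_n}(a)\cinp\infty$, we may apply \lemmaref{lem:A} with the random variable $N_{p,\Phi_n}(a)$ in the role of $\Phi_n$: the proof of \lemmaref{lem:A} only uses that its index tends to infinity in probability — it controls a union of Hoeffding events over \emph{all} sufficiently large sample sizes and nowhere assumes the index is independent of the rewards — so the substitution is legitimate and yields $\hat\mu_{p,N_{p,\Phi_n}(a)}(a)\cinp\mu_a$, which is the claim. (Alternatively one just repeats the short argument of \lemmaref{lem:A} verbatim.)

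So everything comes down to proving $N_{p,\Phi_n}(a)\cinp\infty$. I would use two ingredients: (i) $t\mapsto N_{p,t}(a)$ is non-decreasing, since a player never discards data; and (ii) $N_{p,t}(a)\to\infty$ almost surely as $t\to\infty$. Granting these, fix $N_0$ and put $\tau=\inf\{t:N_{p,t}(a)\ge N_0\}$, which by (i) and (ii) is well defined and a.s.\ finite. Then, for every $m$, $\{N_{p,\Phi_n}(a)<N_0\}=\{\Phi_n<\tau\}\subseteq\{\tau>m\}\cup\{\Phi_n<m\}$; choosing $m$ with $\Pr(\tau>m)<\epsilon/2$ and then, using $\Phi_n\cinp\infty$, $n$ large enough that $\Pr(\Phi_n<m)<\epsilon/2$, gives $\Pr(N_{p,\Phi_n}(a)<N_0)<\epsilon$ for all large $n$. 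As $N_0$ and $\epsilon$ were arbitrary, $N_{p,\Phi_n}(a)\cinp\infty$.

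The one substantive step is (ii), and I expect it to be the main obstacle. It decomposes into the global statement that every arm is pulled infinitely often, $N_t(a)\to\infty$ a.s., together with the observation that an infinite communication set relays this to each player, since $N_{p,t}(a)\ge N_{\ell(t)}(a)$ and $\ell(t)\to\infty$. The global statement is the standard optimism phenomenon: were arm $a$ pulled only finitely often in total, then $N_{\ell(t)}(a)$ — and hence, by its definition together with \claimref{rel}, $N'_{p,t}(a)$ — would stabilize at a finite value, while $\sum_k N_{p,t-1}(k)\ge t-1\to\infty$ forces $\F\!\big(\sum_k N_{p,t-1}(k)\big)/N'_{p,t-1}(a)\to\infty$ and thus $B^+_{p,t}(a)\to1$; this is incompatible with the index of an arm that some player keeps selecting, whose empirical mean converges to its true value strictly below $1$ by \lemmaref{lemma: binom} and Hoeffding's inequality and whose index therefore stays bounded away from $1$. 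Apart from this, the proof is bookkeeping plus the appeal to \lemmaref{lem:A}.
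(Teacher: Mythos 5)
Your reduction to showing $N_{p,\Phi_n}(a)\cinp\infty$ via the identity $\tilde{\mu}_{p,\Phi_n}(a)=\hat{\mu}_{p,N_{p,\Phi_n}(a)}(a)$ and \lemmaref{lem:A} is exactly the paper's first move, and your stopping-time conversion from ``$N_{p,t}(a)\to\infty$ a.s.'' to divergence in probability along $\Phi_n$ is sound. The gap is in your step (ii), specifically in the sentence claiming that the arm some player ``keeps selecting'' has empirical mean converging to a limit below $1$ and \emph{therefore} an index bounded away from $1$. The index $B^+_{p,t}(b)$ is the empirical mean inflated by an exploration allowance of order $\F\bigl(\sum_k N_{p,t-1}(k)\bigr)/N'_{p,t-1}(b)$, i.e.\ roughly $\ln(t)/N'_{p,t-1}(b)$; an arm that is merely pulled infinitely often may have $N_{p,t}(b)$ growing arbitrarily slowly (say like $\ln\ln t$), in which case this ratio diverges and $B^+_{p,t}(b)\to 1$ as well, so no contradiction with $B^+_{p,t}(a)\to 1$ is obtained. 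Nor can you argue that \emph{all} arms other than $a$ eventually have index below $B^+_{p,t}(a)$, since other rarely-pulled arms also have index near $1$. The standard repair --- and what the paper does --- is the pigeonhole device: let $\Upsilon_n$ be the arm player $p$ has pulled \emph{most frequently} in the first $\Phi_n$ rounds and $\Psi_n$ the \emph{last} such round at which $p$ pulled it; then $\Psi_n\geq\Phi_n/K$ and $N_{p,\Psi_n}(\Upsilon_n)\geq\Phi_n/K\geq\Psi_n/K$, which dominates $\ln(\Psi_n)$, so \lemmaref{lem:B} applies and $B^+_{p,\Psi_n}(\Upsilon_n)$ concentrates near $\mu_{\Upsilon_n}<1$, while optimality of the choice at round $\Psi_n$ forces $B^+_{p,\Psi_n}(\Upsilon_n)\geq B^+_{p,\Psi_n}(a)$.

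Once you have this device, the almost-sure detour is unnecessary: the paper bounds $\Pr\left(N_{p,\Phi_n}(a)<N\right)$ directly by $\Pr\left(B^+_{p,\Psi_n}(\Upsilon_n)>\mu_{\Upsilon_n}+\delta\right)$ plus the probability of the event $N\cdot M\cdot\K(0,1-\delta)>\F(\Phi_n/K)$, which vanishes because $\Phi_n\cinp\infty$; no claim that every arm is sampled infinitely often almost surely is ever needed. So repairing your gap essentially collapses your argument into the paper's proof, and the extra machinery in your step (ii) (the stopping time $\tau$, the relay through $N_{\ell(t)}(a)$, \claimref{rel}) buys nothing over the direct in-probability estimate.
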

\begin{proof}
    First note that $\tilde{\mu}_{p, \Phi_n}(a) = \hat{\mu}_{p, N_{p, \Phi_n}(a)}(a)$. By \lemmaref{lem:A} it suffices to prove that $N_{p, \Phi_n}(a)\cinp\infty$. By the definition of convergence in probability, we only need to show that for every $N > 0$ and every $\epsilon > 0$, we can find an $N_*$ such that for any $n \geq N_*$, 
    \[\Pr\left(N_{p, \Phi_n}(a) < N\right) \leq \epsilon.\]
    For each $n \geq 1$, let $\Upsilon_n$ be the arm that player $p$ has pulled most frequently by round $\Phi_n$ and let $\Psi_n$ be the last round in the first $\Phi_n$ rounds that $p$ pulled $\Upsilon_n$. The definition of $\Psi_n$ implies that \[\Psi_n \geq \Phi_n \di K.\] 
    Therefore given $\Phi_n\cinp\infty$ we have  $\Psi_n\cinp\infty$. In addition, we have \[N_{p, \Psi_n}(\Upsilon_t) \geq \Phi_n \di K \geq \Psi_n \di K.\] Hence by \lemmaref{lem:B}, 
    \begin{align}
        \lim_{t\to\infty}\Pr\left(B^+_{p, \Psi_n}(\Upsilon_n) > \mu_{\Upsilon_n} + \epsilon\right)= 0\label{til-1}, \text{\ \ for any $\epsilon > 0$}.
    \end{align}
    Now let $\delta = (1 - \max_{a\in\A}\mu_a) \di 2$. Then for every $N > 0$ we have
    \begin{align*}
        \Pr\left(N_{p, \Phi_n}(a) < N\right) &= \Pr\left(N_{p, \Phi_n}(a) < N \wedge B^+_{p, \Psi_n}(\Upsilon_n) \geq B^+_{p, \Psi_n}(a)\right)\\
        &\leq \Pr\left(B^+_{p, \Psi_n}(\Upsilon_n) > 1 - \delta\right) + \Pr\left(\K'(\tilde{\mu}_{p, \Psi_n}(a), 1 - \delta) > \frac{\mathcal{F}(\Psi_n)}{N\cdot M}\right)\\
        &\leq \Pr\left(B^+_{p, \Psi_n}(\Upsilon_n) > \mu_{\Upsilon_n} + \delta\right) + \Pr\left(N\cdot M\cdot \K(0, 1 - \delta) > \mathcal{F}\left(\frac{\Phi_n}{K}\right)\right).
    \end{align*}
    For every $\epsilon > 0$, by \eqref{til-1} we can choose $N_0$ large enough such that for every $n \geq N_0$, $\Pr(B^+_{p, \Psi_n}(\Upsilon_n) > \mu_{\Upsilon_n} + \delta) < \epsilon \di 2$. Since $N\cdot M\cdot \K(0, 1 - \delta)$ is a constant, we can also choose $N_1$ large enough such that for every $n \geq N_1$, $N\cdot M\cdot \K(0, 1 - \delta) \leq \mathcal{F}( n\di K)$. Finally, according to the assumption that $\Phi_n\cinp\infty$, we can choose $N_2$ large enough such that for every $n \geq N_2$, $\Pr(\Phi_n < N_2) \leq \epsilon \di 2$. Thus, for every $n \geq N_* = \max(N_0, N_1, N_2)$, $\Pr\left(N_{p, \Phi_n}(a) < N\right) < \epsilon$.
\end{proof}
Then we can start our proof of \lemmaref{lem: greater}. In fact, it is equivalent to prove the following equation:
    \begin{align*}
        \lim_{n\to\infty}\Pr\left(N_{{\Phi_n}}(a) < \frac{(1 - \delta)\mathcal{F}({\Phi_n})}{{\K}(\nu_a, \nu_{a^*})}\right) = 0.
    \end{align*}
    For every round $t$, we define a random player $p_t$ such that 
    \begin{align}
         N' _{p_t, t}(a) \leq  N' _{p', t}(a), \text{\ \ for every $p'\in[M]$}\label{pt}
    \end{align}
    and we say player $p_t$ is \textit{chosen} for round $t$. For every $n$, we define a random player $\Gamma_n$ such that $\Gamma_n$ is the player who is chosen most frequently in the first $\Phi_n$ rounds. We also let $\Xi$ be a random set including the rounds (in the first $T$ rounds) in which player $\Gamma_n$ is chosen. We let random variable $\Upsilon_n$ be the arm that is pulled most frequently by $\Gamma_n$ in those rounds in $\Xi_n$ and let $\Psi_n$ be the last round in $\Xi_n$ that $\Gamma_n$ pulls $\Upsilon_n$. With all the definitions above, we have  
    \begin{align}
        N_{\Gamma_n, \Psi_n}(\Upsilon_n) \geq \frac{{\Phi_n}}{MK} \geq \frac{\Psi_n}{MK}.\label{con1}
    \end{align}
    as well as
    \begin{align}
        \Psi_n \geq \frac{{\Phi_n}}{MK}.\label{con2}
    \end{align} 
    Note that $\Phi_n\cinp\infty$ implies ${\Phi_n}\cinp\infty$, which by \eqref{con2} in turn implies
    \begin{align}
        \Psi_n\cinp\infty\label{con3}
    \end{align}
    For any fixed $T$, clearly we have 
\begin{align*}
    &N_{{\Phi_n}}(a) < \frac{(1 - \delta)\mathcal{F}({\Phi_n})}{{\K}(\nu_a, \nu_{a^*})} \implies\\ 
    &\hspace{5em}N_{t}(a) < \frac{(1 - \delta)\mathcal{F}({\Phi_n})}{{\K}(\nu_a, \nu_{a^*})}, \text{\ \ for every $t \leq {\Phi_n}$.}
\end{align*}
By \claimref{claimforgreater} and \eqref{pt}, 
    \begin{align*}
        &N_{{\Phi_n}}(a) < \frac{(1 - \delta)\mathcal{F}({\Phi_n})}{{\K}(\nu_a, \nu_{a^*})} \implies\\ 
        &\hspace{5em} N' _{p_t, t}(a) \leq \frac{(1 - \delta)\mathcal{F}({\Phi_n})}{{\K}(\nu_a, \nu_{a^*})}, \text{\ \ for every $t \leq {\Phi_n}$.}    
    \end{align*}
    Furthermore, by the definitions of random variables $\Gamma_n$, $\Psi_n$, and $\Upsilon_n$,
    \begin{align*}
        N_{{\Phi_n}}(a) < \frac{(1 - \delta)\mathcal{F}({\Phi_n})}{{\K}(\nu_a, \nu_{a^*})}\implies\left\{
            \begin{aligned}
                & N' _{\Gamma_n, \Psi_n}(a) < \frac{(1 - \delta)\mathcal{F}({\Phi_n})}{\K(\nu_a, \nu_{a^*})}\\
                &B^+_{\Gamma_n, \Psi_n}(\Upsilon_n) \geq B^+_{\Gamma_n, \Psi_n}(a)
            \end{aligned}
            \right.
    \end{align*}
    Hence we have
    \begin{align}
        &\lim_{n\to\infty}\Pr\left(N_{{\Phi_n}}(a) < \frac{(1 - \delta)\mathcal{F}({\Phi_n})}{{\K}(\nu_a, \nu_{a^*})}\right) \nonumber\\
        &\hspace{4em}\leq\lim_{n\to\infty}\Pr\left( N' _{\Gamma_n, \Psi_n}(a) < \frac{(1 - \delta)\mathcal{F}({\Phi_n})}{\K(\nu_a, \nu_{a^*})} \wedge B^+_{\Gamma_n, \Psi_n}(\Upsilon_n) \geq B^+_{\Gamma_n, \Psi_n}(a)\right)\nonumber\\
        &\hspace{7em}\leq\lim_{n\to\infty}\Pr\left(B^+_{\Gamma_n, \Psi_n}(\Upsilon_n) > \mu_{\Upsilon_n} + \epsilon\right)\label{11} \\
        &\hspace{7em}+ \sum_{p = 1}^M\lim_{n\to\infty}\Pr\left(\tilde{\mu}_{p, \Psi_n}(a) < \mu_a - \epsilon\right)\label{22}\\
        &\hspace{7em}+ \lim_{n\to\infty}\Pr\left(\K(\mu_a - \epsilon, \mu^* + \epsilon)\geq\frac{\mathcal{F}(\Psi_n)\K(\nu_a, \nu_{a^*})}{(1 - \delta)\mathcal{F}({\Phi_n})}\right),\label{33}
    \end{align}
    where $\epsilon$ is to be determined later. By \lemmaref{lem:B} and \eqref{con1}, the first term \eqref{11} is $0$.  
    By \lemmaref{lem:til} and \eqref{con3}, the second term \eqref{22} is $0$. For the third term \eqref{33}, by \eqref{con2} we have
    \begin{align}
    &\lim_{n\to\infty}\Pr\left(\K(\mu_a - \epsilon, \mu^* + \epsilon)\geq\frac{\mathcal{F}(\Psi_n)\K(\nu_a, \nu_{a^*})}{(1 - \delta)\mathcal{F}({\Phi_n})}\right)\nonumber\\
    &\hspace{3em}\leq  \lim_{n\to\infty}\Pr\left(\K(\mu_a - \epsilon, \mu^* + \epsilon)\geq\frac{\mathcal{F}\left(\frac{{\Phi_n}}{MK}\right)\K(\nu_a, \nu_{a^*})}{(1 - \delta)\mathcal{F}({\Phi_n})}\right)\nonumber
\end{align}
Now we decide $\epsilon$ to be a positive real number small enough such that there exists a $T_0$ satisfying
\begin{align*}
    \K(\mu_a - \epsilon, \mu^* + \epsilon) < \frac{\mathcal{F}(t \di (MK))\K(\nu_a, \nu_{a^*})}{(1 - \delta)\ln(t)}, \text{\ \ for every $t \geq T_0$}.
\end{align*}
Thus, 
    \begin{align*} 
        \lim_{n\to\infty}\Pr\left(\K(\mu_a - \epsilon, \mu^* + \epsilon) \geq \frac{\mathcal{F}\left({\Phi_n} \di (MK)\right)\K(\nu_a, \nu_{a^*})}{(1 - \delta)\ln(\Phi_t)}\right) \leq \lim_{n\to\infty}\Pr\left({\Phi_n} < T_0\right) = 0.
    \end{align*}
    This concludes the proof.

\section{Proof of \claimref{claimforgreater}}
    It suffices to prove $\sum_{p = 1}^M  N' _{p, t}(a) \leq M\cdot N_t(a)$. In fact, 
    \begin{align*} 
        \sum\nolimits_{p = 1}^M  N' _{p, t}(a) &\leq \sum\nolimits_{p = 1}^M \left(N_{p, t}(a) + (M - 1)\cdot\left(N_{p, t}(a) - N_{\ell(t)}(a)\right)\right)\text{\ \ \ \ \ (by Definition)}\\
                                           &\leq \sum\nolimits_{p = 1}^M \left(N_{\ell(t)}(a) + M\cdot\left(N_{p, t}(a) - N_{\ell(t)}(a)\right)\right)\\
                                           &\leq M\cdot N_{\ell(t)}(a) + M\cdot\sum\nolimits_{p = 1}^M (N_{p, t}(a) - N_{\ell(t)}(a)) \\[0.2em]
                                           &\leq M\cdot N_{\ell(t)}(a) + M\cdot (N_t(a) - N_{\ell(t)}(a))\\[0.7em]
                                           &=M\cdot N_t(a),
    \end{align*}
    which concludes the proof.

\section{Proof of \claimref{rel}}
    If $\alpha(\C) = 0$, then the right hand side becomes $M\cdot N_{p, t}(a)$. By definition,
    \begin{align*}
         N' _{p, t}(a) &\leq N_{p, t}(a) + (M - 1)\cdot \left(N_{p, t}(a) - N_{\ell(t)}(a)\right)\\
                            &\leq N_{p, t}(a) + (M - 1)\cdot N_{p, t}(a)\\
                            & = M\cdot N_{p, t}(a).
    \end{align*}
    If $N_{p, t}(a) = 0$, then $ N' _{p, t}(a) = 0$ and the bound is trivial. Now we assume $\alpha(\C) > 0$ and $N_{p, t}(a) > 0$. Note that
    \begin{align*}
        \frac{ N' _{p, t}(a)}{N_{p, t}(a)} = 1 + (M - 1)\cdot \min\left(1 - \frac{N_{\ell(t)}(a)}{N_{p, t}(a)}, \frac{u(\ell(t))}{N_{p, t}(a)}\right).
    \end{align*}
    Let $f(x) = \min\left(1 - N_{\ell(t)}(a)\di x, u(\ell(t))\di x\right)$. We have
    \begin{align*}
        \frac{ N' _{p, t}(a)}{N_{p, t}(a)} \leq 1 + (M - 1)\cdot\left(\sup_{x\in(0, \infty)}f(x)\right).
    \end{align*}
    Since $1 - N_{\ell(t)}(a)\di x$ is increasing in $(0, \infty)$ and $u(\ell(t))\di x$ is decreasing in $(0, \infty)$, $f(x)$ can be maximized if these two functions take the same value. In fact, when $x = x^* = N_{\ell(t)}(a) + u(\ell(t))$, we have $f(x^*) = 1 - N_{\ell(t)}(a)\di x^* = u(\ell(t))\di x^*$. Thus, 
    \begin{align*}
        \frac{ N' _{p, t}(a)}{N_{p, t}(a)} &\leq 1 + (M - 1)\cdot \frac{u(\ell(t))}{N_{\ell(t)}(a) + u(\ell(t))}\\
                                                &\leq 1 + (M - 1)\cdot \frac{\frac{N_{\ell(t)}(a)\di{\alpha}(\C) - N_{\ell(t)}(a)}{M}}{N_{\ell(t)}(a) + \frac{N_{\ell(t)}(a)\di{\alpha}(\C) - N_{\ell(t)}(a)}{M}}\\
                                                &=\frac{M}{1 + (M - 1)\alpha(\C)},
    \end{align*}
    which completes the proof.

\end{appendices}
\end{document}